\def\phireg{\phi_\textup{reg.}}
\def\Rreg{R_\textup{reg.}}
\def\phicl{\phi_\textup{cl.}}
\def\Rcl{R_\textup{cl.}}
\def\ntr{n_{\textup{tr}}}
\def\nte{n_{\textup{te}}}
\def\Xtr{X_{\textup{tr}}}
\def\Xte{X_{\textup{te}}}
\def\Ztr{Z_{\textup{tr}}}
\def\Zte{Z_{\textup{te}}}
\def\Ytr{Y_{\textup{tr}}}
\def\Yte{Y_{\textup{te}}}
\title{Not too little, not too much: a theoretical analysis of graph (over)smoothing}
\author{%
  Nicolas Keriven \\
  CNRS, GIPSA-lab, Grenoble, France\\
  \texttt{nicolas.keriven@cnrs.fr}
}
\begin{document}
\maketitle

\begin{abstract}

    We analyze graph smoothing with \emph{mean aggregation}, where each node successively receives the average of the features of its neighbors.
    Indeed, it has quickly been observed that Graph Neural Networks (GNNs), which generally follow some variant of Message-Passing (MP) with repeated aggregation, may be subject to the \emph{oversmoothing} phenomenon: by performing too many rounds of MP, the node features tend to converge to a non-informative limit. In the case of mean aggregation, for connected graphs, the node features become constant across the whole graph.
    At the other end of the spectrum, it is intuitively obvious that \emph{some} MP rounds are necessary, but existing analyses do not exhibit both phenomena at once: beneficial ``finite'' smoothing and oversmoothing in the limit. In this paper, we consider simplified linear GNNs, and rigorously analyze two examples for which a finite number of mean aggregation steps provably improves the learning performance, before oversmoothing kicks in. We consider a latent space random graph model, where node features are partial observations of the latent variables and the graph contains pairwise relationships between them. We show that graph smoothing restores some of the lost information, up to a certain point, by two phenomena: graph smoothing shrinks non-principal directions in the data faster than principal ones, which is useful for regression, and shrinks nodes within communities faster than they collapse together, which improves classification.

    
\end{abstract}

\section{Introduction}

In recent years, deep architectures such as Graph Neural Networks (GNNs), along with the availability of large sets of graph data, have significantly broadened the field of machine learning on graphs and structured data, with a myriad of applications ranging from community detection \cite{Chen2019c} to molecule classification \cite{Kearnes2016}, drug discovery \cite{Jiang2021}, quantum chemistry \cite{Gilmer2017}, recommender systems \cite{Wang2019a}, semi-supervised learning, and so on. See \cite{Bronstein2017,Hamilton2020, Bronstein2021,Wu2019a} for reviews. Most GNNs rely on the \textbf{Message-Passing} (MP) framework \cite{Gilmer2017, Kipf2017}, with a plethora of variants. At each layer $k$, for each node $i$, a representation $z_i^{(k)}$ is computed using the representations of the \emph{neighbors} $\Nn_i$ of $i$ in the graph at the previous layer:
\begin{equation}\label{eq:gnn}
    z^{(k)}_i = \text{AGG}\pa{\{z_j^{(k-1)}\}_{j\in\Nn_i}}
\end{equation}
where $\text{AGG}$ is an \textbf{aggregation function} that, crucially, treats $\{z_j^{(k-1)}\}_{j\in\Nn_i}$ as an \emph{unordered set}, to respect the absence of node ordering in the graph. There are many variants of aggregation functions, based on sum, mean, max, min, degree-normalized \cite{Kipf2017}, attention-based \cite{Velickovic2018}, and so on. In this work, we consider one of the most classical, \emph{mean aggregation}:
\begin{equation}\label{eq:mean_agg}
    \mathsmaller{z^{(k)}_i = \tfrac{1}{\sum_j a_{ij}} \sum_j a_{ij}\Psi \pa{ z_j^{(k-1)} }}
\end{equation}
where the $a_{ij}\in\RR_+$ are the entries of the adjacency matrix of the graph: either positive edge weights or $0,1$ for unweighted edges, and $\Psi$ is some function (usually a Multi-Layer Perceptron). In other words, the aggregation process is a weighted average over the neighbors. 
As we will see, it corresponds to a multiplication by (identity minus) the \emph{random walk Laplacian} of the graph.

While MP is a natural and rather general framework, its limitations were quickly observed by researchers and practitioners. Foremost among them is the so-called \emph{oversmoothing} phenomenon \cite{Li2018b}: as the GNN gets deeper and many rounds of MP are performed, the node features $z^{(k)}_i$ tend to become too similar across the graph, especially for small-world graphs with small diameter. Oversmoothing prevents GNN from being too deep unless one is particularly careful. A non-negligible part of the literature is dedicated to fighting oversmoothing with various strategies (see below).

\begin{figure}
    \centering
    \includegraphics[width=.25\textwidth]{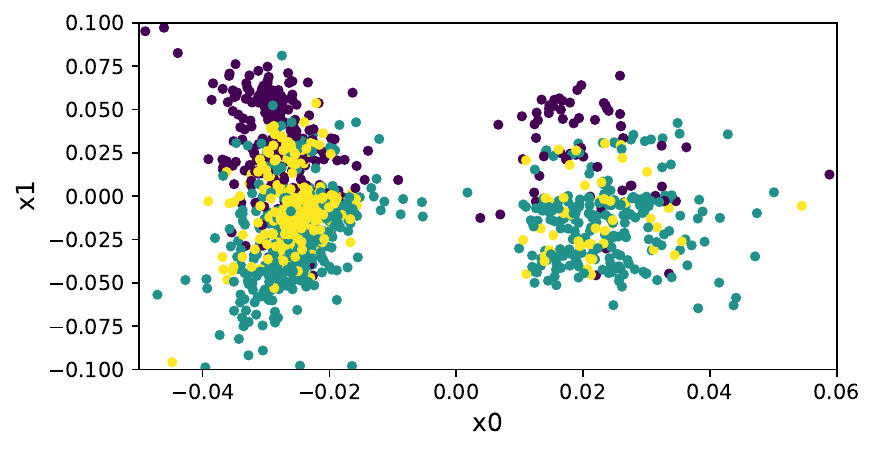}
    \includegraphics[width=.25\textwidth]{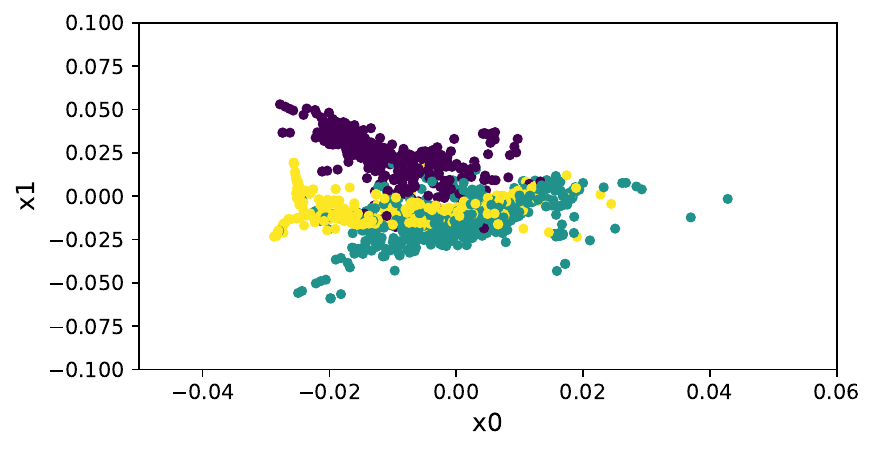}
    \includegraphics[width=.25\textwidth]{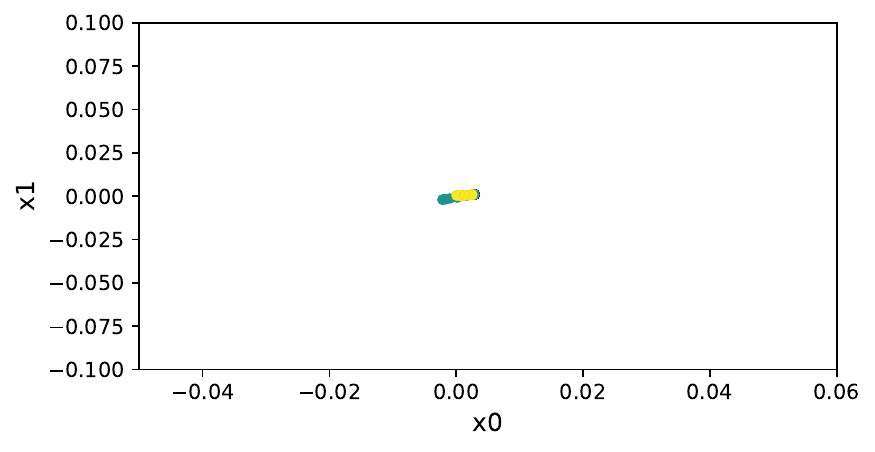}
    \includegraphics[width=.23\textwidth]{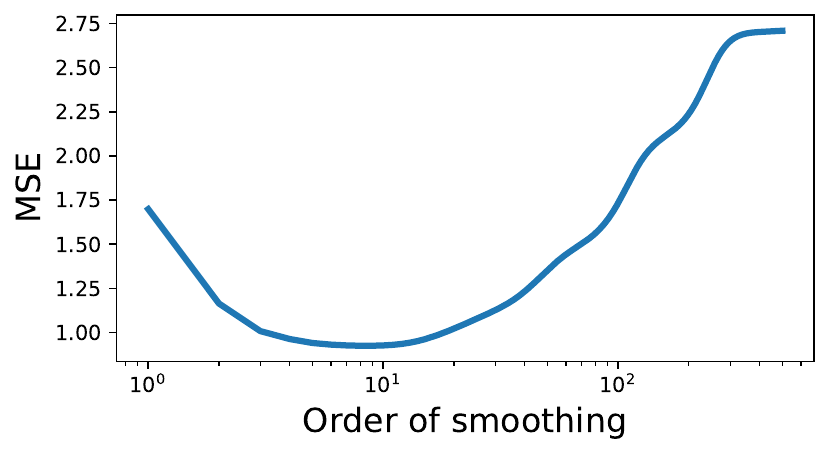} \\
    \includegraphics[width=.25\textwidth]{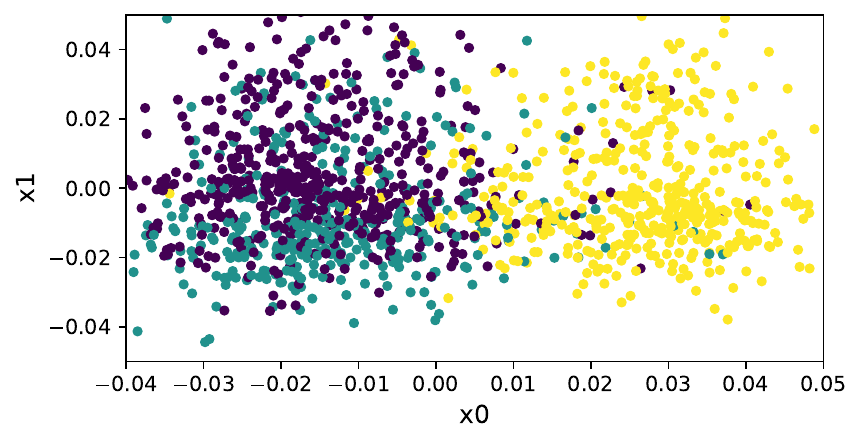}
    \includegraphics[width=.25\textwidth]{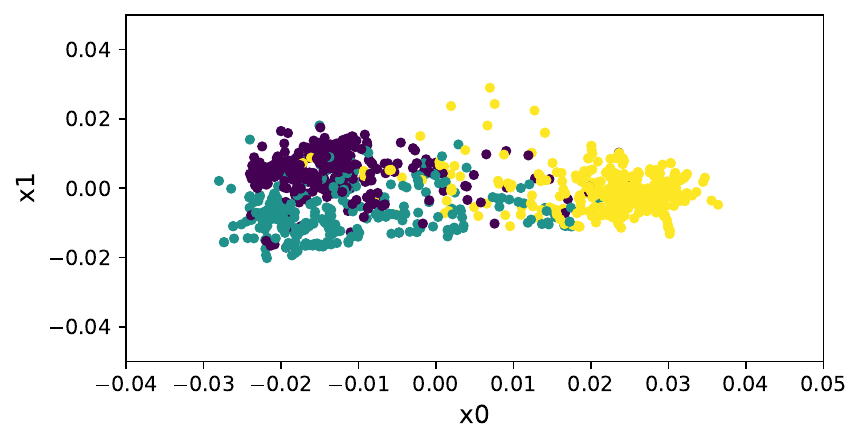}
    \includegraphics[width=.25\textwidth]{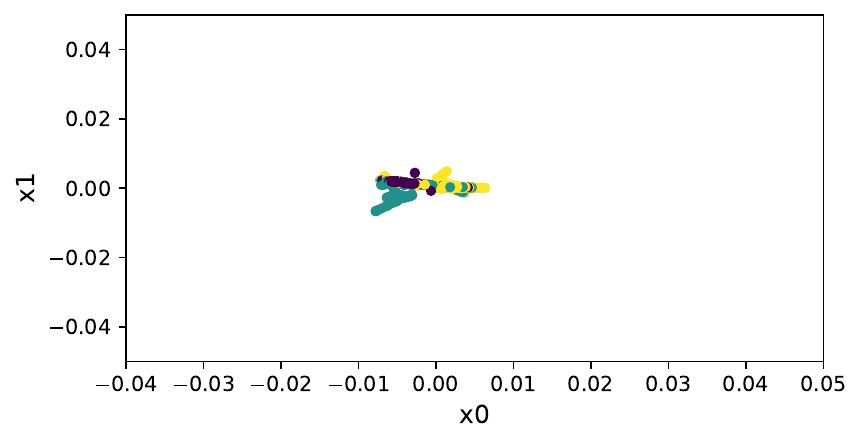}
    \includegraphics[width=.23\textwidth]{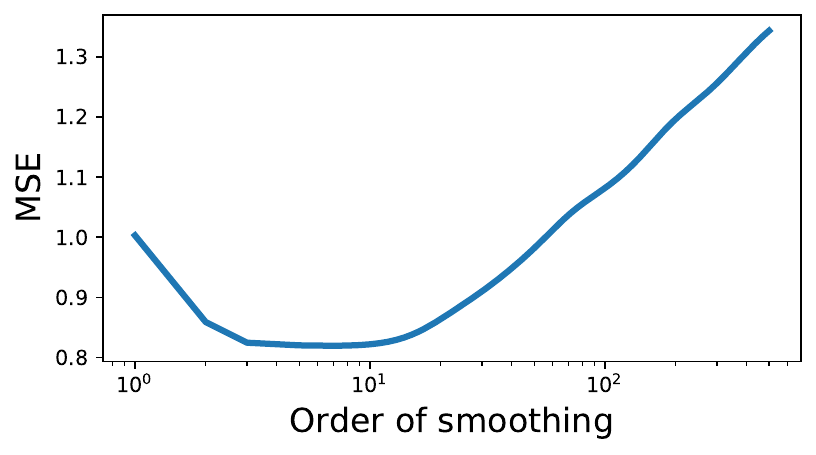}
    \caption[]{Illustration of both beneficial smoothing and oversmoothing on Cora \cite{McCallum2000} (top) and Citeseer \cite{Giles1998} (bottom). \textbf{From left to right:} node features after performing respectively $k=0, 10$, and $500$ steps of mean aggregation, along the first two principal-components (of the original unsmoothed features), for three classes of nodes for better visibility. \textbf{Figure on the right:} Mean Square Error of Linear Ridge Regression (LRR) on the smoothed features with respect to the order of smoothing $k$. We observe that smoothing first gather same-labels nodes and improves learning, before they eventually collapses to a single point (note that here we show LRR for consistency with the analysis presented in this paper, even though these are node classification tasks).}\label{fig:cora}
\end{figure}

On the theoretical side, oversmoothing has mostly been analyzed in the infinite-layer limit $k\to \infty$. In this case, classical spectral analysis of graph operators such as the Laplacian can be leveraged to indeed show that node features will always converge to some limit that carries a limited amount of information \cite{Oono2020}. This is particularly true for mean aggregation \eqref{eq:mean_agg}, with \emph{a constant limit across all nodes} for a connected graph, see Sec.~\ref{sec:oversmoothing}. Unlike some other graph operators such as the symmetric normalized Laplacian, where the limit still carries a small amount of information such as the degrees, with the random walk Laplacian \emph{all information} is lost in the limit (beyond a single constant).

However, there has been little research at the other end of the spectrum, showing that \emph{some smoothing is useful for learning}, despite this fact being intuitively and empirically obvious.
Generally, researchers show the power of GNNs for a \emph{sufficient} (unbounded) number of layers, such as the now-famous ability to distinguish graph isomorphism as well as the Weisfeiler-Lehman test and all its variants \cite{Xu2018,Maron2019b}, the ability to compute some graph functions \cite{Loukas2019b}, and so on. Since these results are valid for an unbounded number of layers, the settings adopted in these works are, by definition, incompatible with non-informative oversmoothing. To our knowledge, there is no work that formally \textbf{models both phenomena at once}: \emph{some} smoothing is provably useful for learning, while \emph{too much} smoothing inevitably leads to oversmoothing.

This work aims to fill this gap. We showcase two representative exemples, of regression and classification, on which \emph{linear} GNNs (aka, here, simply Linear Ridge Regression (LRR) on smoothed features) are subject to this double phenomenon. Note that restricting ourselves to \emph{mean aggregation} makes this claim quite non-trivial: in the absence of any ``informative'' node features, no information can be recovered by mean aggregation alone. For instance, it leaves constant node features unchanged, and the limit $k\to \infty$ is always a constant. So the challenge is the following: node features must carry \emph{some} information, such that a finite number of steps of mean aggregation \emph{provably increases the amount of useful information}, before it loses it in the limit. See Fig.~\ref{fig:cora} for an illustration.

To show this we adopt on a model of latent space random graphs, with node features. The latter contain partial information about the unobserved latent variables on which both the labels and the graph structure depend. On our examples, we prove that with high probability, graph smoothing improves performance before oversmoothing occurs. We identify two key phenomena for this: smoothing shrinks non-principal directions in the data faster than principal ones (Sec.~\ref{sec:reg}), and shrinks communities faster than they collapse together (Sec.~\ref{sec:gmm}). Although our theoretical settings are obviously simplified, we believe it is a step towards a better comprehension of graph aggregation and of the relationship between node features and graph structure, at the heart of many phenomena in graph machine learning. 






\paragraph{Related Work}

Oversmoothing \cite{Li2018b} is a very active area of research in geometric deep learning, and an exhaustive list of works would be out of scope here. The research has been mainly focused on novel architectures to relieve it, such as residual mechanisms \cite{Li2019c, Chen2020g}, randomly dropping connections \cite{Huang2020c}, introducing local jumps \cite{Xu2018b}, clever normalizations \cite{Zhao2019a, Huang2022, Bodnar2022, Rusch2022} or regularizations \cite{Chen2020h}, among others. Some works have acknowledged the important role of the aggregation function, and proposed new exotic diffusion strategies \cite{Bodnar2022} or to optimize it \cite{Lai2020}. On the theoretical side, it has been mainly shown that repeatedly applying graph smoothing operators indeed induces convergence of the node features \cite{Oono2020}. In this work, we analyze a model that present both the benefits of finite smoothing despite oversmoothing in the limit.

Our theoretical framework is based on simplified \emph{linear} GNNs and random graphs that explicitely model the dependence between labels, node features, and graph structure. Despite their simplicity, linear GNNs, sometimes called Simplified Graph Convolutional networks (SGC), have been observed to exhibits relatively good performance \cite{Wu2019, Navarin2020} and are routinely used in theoretical analyses \cite{Zhu2021}. Random graphs have been used extensively to analyze graph machine learning algorithms \cite{VonLuxburg2008, Rosasco2010} and the theoretical properties of GNNs such as stability \cite{Keriven2020a,Ruiz2020}, transferability \cite{Levie2019} or universality \cite{Keriven2021}. Our model crucially includes observed node features, an essential part in analyzing the smoothing process. They have been shown to be correlated to sought-for labels in real graphs \cite{Duong2019}, and that this fact is key in the success of GNNs. Our proof is in fact more akin to analyzing a graph diffusion process \cite{Masuda2017}: given appropriate initial conditions (observed node features), at initial time the diffusion produces a better signal for learning, before it eventually collapses to a single point. To the best of our knowledge, this is the first proof of this kind in a machine learning context.




\paragraph{Outline} We describe our framework in Sec.~\ref{sec:prelim}. In Sec.~\ref{sec:oversmoothing}, we briefly prove the oversmoothing phenomenon when $k \to \infty$, which is just the Markov chains ergodic theorem in our settings. In Sec.~\ref{sec:reg}, we study a regression problem. We derive an expression that predicts with good accuracy the optimal smoothing order $k^\star$ in some cases. In Sec.~\ref{sec:gmm}, we study a classification problem between two Gaussians. Although we formally prove the \emph{existence} of $k^\star >0$, deriving an explicit expression for the risk is still open in this case. Code to reproduce the figures is available at \url{https://github.com/nkeriven/graphsmoothing}.

\section{Preliminaries}\label{sec:prelim}

\paragraph{Notations.} The norm $\norm{\cdot}$ is the Euclidean norm for vectors and spectral norm for (rectangular) matrices. For a psd matrix $\Sigma$, the Mahalanobis norm is $\norm{x}_{\Sigma}^2 \eqdef x^\top \Sigma x$. The determinant of a matrix is $\abs{S}$, and its smallest eigenvalue is $\lambda_{\min}(S)$. The multivariate Gaussian distribution with mean $\mu$ and covariance $\Sigma$ is denoted by $\Nn_{\mu,\Sigma}(x) = \text{det}(2\pi \Sigma)^{-\frac12} e^{-\frac12 \norm{x-\mu}^2_{\Sigma^{-1}}}$. We will use the shortened notations $\Nn_\mu = \Nn_{\mu, \Id}$ and $\Nn = \Nn_0$. Our bounds will involve various multiplicative constants $\text{poly}(\cdot)$ which are polynomials in their input.

\paragraph{SSL.} In this paper, we consider Semi-Supervised Learning (SSL) \cite{Chapelle2010, Kipf2017} on an undirected graph of size $n$. We observe a weighted adjacency matrix $A = [a_{ij}]_{i,j=1}^n \in \RR_+^{n\times n}$ as well as \emph{node features} $z_1, \ldots z_n \in \RR^p$ at each node of the graph. We also observe \emph{some} labels $y_1 ,\ldots, y_{\ntr} \in \RR$ at training time and aim to predict the remaining labels $y_{\ntr+1}, \ldots, y_n$. In a classification framework, $y \in \{-1,1\}$. For simplicity, we assume that $\ntr$ and $\nte = n-\ntr$ are both in $\order{n}$\footnote{while this is an important topic in SSL \cite{Ben-David2008}, here we do not focus on the number of needed labels and perform an asymptotic analysis instead.}. We denote by $Z \in \RR^{n\times p}$ the matrix whose rows contain the node features, $\Ztr, \Zte$ respectively its first $\ntr$ and last $\nte$ rows, and similarly $\Ytr, \Yte$ the vectors containing the observed and non-observed labels.

\paragraph{Graph smoothing with mean aggregation.} Here we consider a simplified situation of \emph{linear} GNN with mean aggregation, that is, equation \eqref{eq:mean_agg} with linear $\Psi$. Since all linear weights collapses into a single matrix, a linear GNN with $k$ layers just corresponds to performing $k$ rounds of mean aggregation on the node features, then learning on the smoothed features. We denote by $d_A = [\sum_i a_{ij}]_j \in \RR_+^n$ the vector containing the degrees of the graph and $D = \diag(d_A)$. Assuming that all degrees are non-zero, performing one round of mean aggregation corresponds to multiplying $Z$ by $L = D^{-1} A$. Note that $\Id -L$ is then the \emph{random walk Laplacian} of the graph. The smoothed node features after $k$ rounds of mean aggregation are:
\[
    Z^{(k)} = L^k Z\, .
\]
Each row, denoted by $z_i^{(k)} \in \RR^p$, contains the smoothed features of an individual node. Similar to the non-smoothed features, its first $\ntr$ and last $\nte$ rows are denoted $\Ztr^{(k)}, \Zte^{(k)}$.

\paragraph{Learning.} In this paper, we consider learning with a Mean Square Error (MSE) loss and Ridge regularization. 
For $\lambda>0$, the regression coefficients vector on the smoothed features is
\begin{equation}\label{eq:beta}
    \mathsmaller{\hat \beta^{(k)} \eqdef \argmin_\beta \frac{1}{2\ntr} \norm{\Ytr - \Ztr^{(k)} \beta}^2 + \lambda \norm{\beta}^2 = \pa{\frac{(\Ztr^{(k)})^\top \Ztr^{(k)}}{\ntr} + \lambda \Id}^{-1} \frac{(\Ztr^{(k)})^\top \Ytr}{\ntr}}
\end{equation}
Then, the test risk is defined as
\begin{equation}\label{eq:risk}
    \mathsmaller{\Rr^{(k)} \eqdef \nte^{-1} \norm{\Yte - \hat \Yte^{(k)}}^2} \quad \text{where } \hat \Yte^{(k)} = \Zte^{(k)} \hat \beta^{(k)}
\end{equation}
It is well known that when $k \to \infty$, the matrix $L^k$ will converge to a matrix with constant rows, and $\Rr^{(\infty)} \eqdef \lim_{k\to \infty} \Rr^{(k)}$ will just be close to the variance of $Y$, see Sec.~\ref{sec:oversmoothing} for a precise statement. Very often, this degrades the results with respect to doing a simple linear regression: $\Rr^{(0)} < \Rr^{(\infty)}$. Our goal is to illustrate some situations where a finite amount of smoothing provably improves the test risk, that is, there is an optimal $k^\star>0$ such that $\Rr^{(k^\star)} < \min(\Rr^{(0)}, \Rr^{(\infty)})$.

\paragraph{Random graph model.} To perform a fine-grained analysis of our problem, we need a statistical model linking the graph, the node features, and the labels. We adopt popular \emph{latent space random graph models} akin to graphons \cite{Lovasz2012}. Although such models are obviously idealized, we believe that they faithfully convey the main insights.
In these models, to each node $i$ is associated an \emph{unobserved latent variable} $x_i \in \RR^d$ with $d\geq p$ (often $d \gg p$), and edge weights are assumed to be equal to $a_{ij} = W(x_i,x_j)$ where $W: \RR^d \times \RR^d \to \RR_+$ is a \emph{connectivity kernel}. Note that edges may also be taken as \emph{random Bernoulli variables}, but we do not consider this here for simplicity. Moreover, we consider that the $(x_i,y_i)$ are drawn $iid$ from some joint distribution, and the node features are a linear projection of the latent variables to a lower dimension: $z_i = M^\top x_i$ for some unknown $M \in \RR^{d \times p}$ that satisfies $M^\top M = \Id_p$. At the end of the day:
\begin{equation}\label{eq:rg}
    \forall i,j, \quad (x_i,y_i) \stackrel{iid}{\sim} P, \quad z_i = M^\top x_i, \quad a_{ij} = W(x_i,x_j)
\end{equation}
For this model, note that
\[
    Z^{(k)} = L^k Z = L^k XM = X^{(k)}M \text{ where } X^{(k)} = L^k X
\]
In other words, the smoothed node features $Z^{(k)}$ also correspond to a linear projection of the (unknown) \emph{smoothed latent variables} $X^{(k)}$. 
%
To summarize, compared to ``classical'' machine learning on the $(x_i,y_i)$, we do \emph{not} observe directly the $x_i$, but only a projection of them $z_i = M^\top x_i$. Although we assume that $M$ is orthogonal, we do \emph{not} assume that it is ``information-preserving'' (e.g. it does not satisfy the Johnson-Lindenstrauss lemma), but rather that information \emph{is} lost between the $x$ and the $z$. However, we also observe the graph $W(x_i,x_j)$. Our goal is illustrate how mean aggregation may restore some of the lost information.

In the rest of the paper, we use the Gaussian kernel with a small additive term $\epsilon>0$:
\begin{equation}\label{eq:kernel}
    W(x,y) = \epsilon  + W_g(x,y) \quad \text{where } W_g(x,y) \eqdef e^{-\frac12 \norm{x-y}^2}
\end{equation}
The coefficient $\epsilon$ is added to lower-bound the degrees of the graph and avoid degenerate situations. While this seems to be needed for our current proof technique, we use $\epsilon=0$ in Fig.~\ref{fig:reg} and \ref{fig:gmm}. The Gaussian kernel is a classical model in theoretical graph machine learning \cite{Tang2013b}.

\section{Oversmoothing}\label{sec:oversmoothing}

In this section, we briefly examine the oversmoothing case, when $k \to \infty$ while all other parameters are fixed. In this case, it is well-known that all node features converge even for general GNNs \cite{Oono2020}. For completeness, we state below this result in our settings. We have the following well-known ergodic theorem for stochastic matrices such as $L$.

\begin{theorem}[Ergodic theorem for stochastic matrices, e.g. {\cite[Thm. 4.2]{Asmussen2003}}.]\label{thm:ergodic}
    Recall that $d_A$ is the vector of degrees, let $\bar d_A = d_A/d_A^\top 1_n$. We have
    \begin{equation}
        L^k \xrightarrow[k \to \infty]{} 1_n \bar d^\top
    \end{equation}
\end{theorem}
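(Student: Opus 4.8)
The plan is to invoke the Perron--Frobenius theorem for primitive nonnegative matrices. First I would check that $L = D^{-1}A$ is row-stochastic: its entries $L_{ij} = a_{ij}/d_i$ are nonnegative and each row sums to $\sum_j a_{ij}/d_i = d_i/d_i = 1$, i.e.\ $L 1_n = 1_n$, so $1_n$ is a right eigenvector with eigenvalue $1$. Moreover, since the kernel $W = \epsilon + W_g$ in \eqref{eq:kernel} is strictly positive, every $a_{ij} > 0$, hence $L$ has strictly positive entries and is in particular primitive (equivalently, the associated Markov chain is irreducible and aperiodic).

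Next I would identify the matching left eigenvector, namely $\bar d_A$ from the statement. Using the symmetry $a_{ij} = a_{ji}$ of the undirected adjacency matrix, $(\bar d_A^\top L)_j = \frac{1}{d_A^\top 1_n}\sum_i d_i \frac{a_{ij}}{d_i} = \frac{1}{d_A^\top 1_n}\sum_i a_{ji} = \frac{d_j}{d_A^\top 1_n} = (\bar d_A)_j$, so $\bar d_A^\top L = \bar d_A^\top$; conceptually, $L$ satisfies detailed balance with respect to $d_A$ ($d_i L_{ij} = a_{ij} = a_{ji} = d_j L_{ji}$), which is why $\bar d_A$ is stationary. Note also $\bar d_A^\top 1_n = 1$ by construction.

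By the Perron--Frobenius theorem for primitive matrices, the eigenvalue $1$ of $L$ is simple and strictly dominates all other eigenvalues in modulus. I would then decompose $\RR^n = \mathrm{span}(1_n) \oplus V$ with $V = \{v : \bar d_A^\top v = 0\}$ (a valid direct sum since $\bar d_A^\top 1_n = 1 \neq 0$); both subspaces are $L$-invariant, $L$ acts as the identity on $\mathrm{span}(1_n)$, and the spectral radius of the restriction $L|_V$ is $<1$, so $(L|_V)^k \to 0$ (even through possible Jordan blocks, since $|\mu|^k k^m \to 0$ whenever $|\mu|<1$). The matrix $P = 1_n \bar d_A^\top$ is idempotent ($P^2 = 1_n(\bar d_A^\top 1_n)\bar d_A^\top = P$), has range $\mathrm{span}(1_n)$ and kernel $V$, hence is exactly the projector onto $\mathrm{span}(1_n)$ along $V$; writing $L^k = L^kP + L^k(\Id - P) = P + (L|_V)^k(\Id-P)$ and letting $k\to\infty$ yields $L^k \to P = 1_n \bar d_A^\top$.

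The only mild subtleties are ensuring aperiodicity --- immediate here from $W>0$, which is precisely why the small $\epsilon$ is added in \eqref{eq:kernel} --- and handling non-diagonalizability of $L$ on $V$, which is harmless because powers of a matrix of spectral radius $<1$ still vanish; otherwise this is the standard finite-state ergodic theorem. As a bonus, the same decomposition gives a geometric rate $\norm{L^k - 1_n\bar d_A^\top} \lesssim \mathrm{poly}(k)\,\rho^k$ with $\rho<1$ the second-largest eigenvalue modulus of $L$, which may be convenient elsewhere.
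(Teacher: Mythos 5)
Your proof is correct. Note that the paper does not actually prove this statement: it is quoted as a known result (the ergodic theorem for finite-state Markov chains, citing Asmussen), so there is no in-paper argument to compare against step by step. What you supply is a complete, self-contained derivation via Perron--Frobenius, and it is the standard one: row-stochasticity gives the right eigenvector $1_n$, symmetry of $A$ (detailed balance $d_i L_{ij} = d_j L_{ji}$) identifies $\bar d_A$ as the stationary left eigenvector, primitivity makes the eigenvalue $1$ simple and dominant, and the splitting $\RR^n = \mathrm{span}(1_n)\oplus\{v:\bar d_A^\top v=0\}$ with the idempotent $P=1_n\bar d_A^\top$ handles possible non-diagonalizability cleanly. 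A useful byproduct of your write-up is that it makes explicit the hypotheses the paper leaves implicit: as stated for an arbitrary weighted graph the convergence can fail (disconnected or bipartite $0/1$ graphs), and it is the strict positivity of the kernel --- already guaranteed by $W_g>0$ alone, with $\epsilon$ serving mainly to lower-bound the degrees rather than to create aperiodicity --- that makes $L$ primitive in this model. The geometric rate you note at the end is a genuine bonus over the bare statement, though the paper does not use it.
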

This easily allows us to prove the next result.
\begin{corollary}\label{cor:oversmoothing}
    We have the following
    \begin{equation}
        \hat \Yte^{(k)} \xrightarrow[k \to \infty]{} \pa{\tfrac{\norm{v}^2}{\lambda + \norm{v}^2} \bar y_\textup{tr}} 1_{\nte}
    \end{equation}
    where $v = Z^\top \bar d$ and $\bar y_\textup{tr} = \ntr^{-1} \sum_{i =1}^{\ntr} y_i$.
\end{corollary}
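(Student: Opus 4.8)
The plan is to pass to the limit directly in the closed forms \eqref{eq:beta} and \eqref{eq:risk}, feeding in Theorem~\ref{thm:ergodic}. First, since $Z^{(k)} = L^k Z$, Theorem~\ref{thm:ergodic} gives $Z^{(k)} \xrightarrow[k\to\infty]{} 1_n \bar d^\top Z = 1_n v^\top$ entrywise, so in particular $\Ztr^{(k)} \to 1_{\ntr} v^\top$ and $\Zte^{(k)} \to 1_{\nte} v^\top$. Taking products and using $1_{\ntr}^\top 1_{\ntr} = \ntr$ and $1_{\ntr}^\top \Ytr = \ntr \bar y_\textup{tr}$, this yields $\tfrac{1}{\ntr}(\Ztr^{(k)})^\top \Ztr^{(k)} \to v v^\top$ and $\tfrac{1}{\ntr}(\Ztr^{(k)})^\top \Ytr \to \bar y_\textup{tr}\, v$.

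Next, the matrices $\tfrac{1}{\ntr}(\Ztr^{(k)})^\top \Ztr^{(k)} + \lambda \Id$ are all positive semidefinite plus $\lambda \Id$, hence have spectrum bounded below by $\lambda>0$ uniformly in $k$; matrix inversion is therefore continuous along this sequence and one may pass to the limit in \eqref{eq:beta} to get $\hat\beta^{(k)} \to (v v^\top + \lambda \Id)^{-1} \bar y_\textup{tr}\, v$. Since $v$ is an eigenvector of $v v^\top$ with eigenvalue $\norm{v}^2$, one checks immediately that $(v v^\top + \lambda \Id)^{-1} v = \tfrac{1}{\lambda + \norm{v}^2}\, v$, so $\hat\beta^{(k)} \to \tfrac{\bar y_\textup{tr}}{\lambda + \norm{v}^2}\, v$. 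Plugging into $\hat\Yte^{(k)} = \Zte^{(k)} \hat\beta^{(k)}$, using $\Zte^{(k)} \to 1_{\nte} v^\top$ and $v^\top v = \norm{v}^2$, gives $\hat\Yte^{(k)} \to \pa{\tfrac{\norm{v}^2}{\lambda + \norm{v}^2}\, \bar y_\textup{tr}} 1_{\nte}$, as claimed.

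There is no real obstacle: the argument is essentially bookkeeping once Theorem~\ref{thm:ergodic} is in hand. The only step deserving a word of justification is the interchange of limit and matrix inverse, which is legitimate precisely because the ridge term $\lambda$ keeps the inverted matrices uniformly bounded away from singularity; the rank-one inversion is then a one-line eigenvector computation.
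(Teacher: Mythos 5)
Your argument is correct and coincides with the paper's (much terser) proof: apply Theorem~\ref{thm:ergodic} to get $Z^{(k)}\to 1_n v^\top$, pass to the limit in the ridge estimator, and use the rank-one identity $(\lambda\Id+vv^\top)^{-1}v = v/(\lambda+\norm{v}^2)$. Your extra remark on why the inversion commutes with the limit (uniform lower bound $\lambda$ on the spectrum) is the only detail the paper leaves implicit, and it is handled correctly.
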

\begin{proof}
    We use Thm.~\ref{thm:ergodic} to get $L^k X M \to 1_n v^\top$, and $(\lambda \Id + v v^\top)^{-1} v = \frac{v}{\lambda + \norm{v}^2}$.
\end{proof}
Hence, in the limit $k \to \infty$, the predicted labels become all equal. When $\lambda \approx 0$, this value is, as expected, the average of the labels in the training set $\bar y_\textup{tr}$. Using simple concentration inequalities, it is generally easy to show that $\Rr^{(\infty)} \approx \text{Var}(y) + \order{1/\sqrt{n}}$. In most cases, this leads to situations where $\Rr^{(0)} < \Rr^{(\infty)}$, that is, it is better to perform regression directly on the node features. In the next sections, we analyze some examples where smoothing provably helps.

\section{Finite smoothing: Linear Regression}\label{sec:reg}

In this section, we consider a problem of linear regression on Gaussian data. We consider $x \sim \Nn_{0, \Sigma}$ for some positive definite covariance matrix $\Sigma$, and $y = x^\top \beta^\star$, without noise for simplicity (noise would just add an additional variance terms to all our bounds).
We will first describe our main result that holds under a certain condition that is not necessarily easy to interpret, then give a sketch of proof in Sec.~\ref{sec:reg_proof}, and an example in dimension $d=2$ where this assumption is satisfied in Sec.~\ref{sec:reg_example}.

For a symmetric positive semi-definite matrix $S \in \RR^{d \times d}$, we define the following function
\begin{equation}\label{eq:reg_risk_function}
    \Rreg(S) \eqdef (\Sigma^\frac12 \beta^\star)^\top \pa{ \Id - S^\frac12 M(\lambda \Id + M^\top S M)^{-1} M^\top S^\frac12  }^2 (\Sigma^\frac12 \beta^\star) \in \RR_+
\end{equation}
where we recall that $M$ is the projection matrix to obtain the node features $z=M^\top x$. Note that it satisfies $0\leq R(S) \leq \norm{\beta^{\star}}^2_{\Sigma}$. Our result will be valid under the following assumption:
\begin{assump}\label{ass:reg}
    We have $\Rreg(\Sigma) > \Rreg((\Id + \Sigma^{-1})^{-2} \Sigma)$.
\end{assump}
Note that $(\Id + \Sigma^{-1})^{-2} \Sigma$ is indeed symmetric since $(\Id + \Sigma^{-1})^{-1}$ and $\Sigma$ permute. 
%
Our main result can be stated informally as follows, it is detailed in the next section along with a sketch of proof. Recall that the kernel is taken as \eqref{eq:kernel}. 
\begin{theorem}[Existence of optimal smoothing for regression.]\label{thm:reg_informal}
    Take any $\rho>0$, and suppose that Assumption \ref{ass:reg} holds. If $\epsilon$ is sufficiently small and $n$ is sufficiently large, then with probability $1-\rho$, there is $k^\star >0$ such that $\Rr^{(k^\star)} < \min(\Rr^{(0)}, \Rr^{(\infty)})$.
\end{theorem}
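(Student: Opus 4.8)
The plan is to pass to the $n\to\infty$, $\epsilon\to 0$ regime, in which the random-walk matrix $L$ becomes an integral operator acting on functions of the latent variable, compute the limiting test risk in closed form, and observe that Assumption~\ref{ass:reg} is exactly the comparison of that limit at $k=0$ and $k=1$; the existence of $k^\star>0$ then follows with $k^\star=1$.

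\textbf{Step 1: the smoothing operator in the limit.} First I would show that the degrees concentrate, $\tfrac1n (d_A)_i \approx \epsilon + g(x_i)$ uniformly in $i$, where $g(x)\eqdef \int W_g(x,y)\,d\Nn_{0,\Sigma}(y)$. A Gaussian completion of squares gives $\tfrac1{g(x)}\int W_g(x,y)\,y\,d\Nn_{0,\Sigma}(y)=Tx$ with $T\eqdef (\Id+\Sigma)^{-1}\Sigma=\Sigma(\Id+\Sigma)^{-1}$; note that $T$, $\Sigma^{\frac12}$ and $(\Id+\Sigma)^{-1}$ all commute and $\norm{T}<1$. Hence one application of $L$ sends the coordinate map $x\mapsto x$ to $x\mapsto \tfrac{g(x)}{\epsilon+g(x)}Tx$, which is exactly $x\mapsto Tx$ when $\epsilon=0$ and equals it up to an $\order{\epsilon}$ term whenever $\norm{x}$ is bounded. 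Iterating, the empirical operator behind $L^k$ maps $x\mapsto x$ to (approximately) $x\mapsto T^k x$; propagating a uniform law-of-large-numbers / kernel-concentration estimate through the successive applications, $X^{(k)}=L^k X$ has rows close to $T^k x_i$, so $Z^{(k)}$ has rows close to $M^\top T^k x_i$, and the same holds for the test rows $\Zte^{(k)}$ since $L$ acts on the whole graph.

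\textbf{Step 2: the limiting risk.} Plugging this into \eqref{eq:beta}, $\tfrac1{\ntr}(\Ztr^{(k)})^\top\Ztr^{(k)}\approx M^\top T^k\Sigma T^k M=M^\top\Sigma T^{2k}M$ and $\tfrac1{\ntr}(\Ztr^{(k)})^\top\Ytr\approx M^\top T^k\Sigma\beta^\star$ (using $y=x^\top\beta^\star$), so $\hat\beta^{(k)}\approx(\lambda\Id+M^\top\Sigma T^{2k}M)^{-1}M^\top T^k\Sigma\beta^\star$ and, the test features behaving identically, $\Rr^{(k)}\approx\norm{\beta^\star-T^k M\hat\beta^{(k)}}_\Sigma^2$. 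Writing $S=\Sigma T^{2k}=(\Sigma^{\frac12}T^k)^2$ (symmetric PSD, by the commutation above), $u=\Sigma^{\frac12}\beta^\star$ and $B=\Sigma^{\frac12}T^k M=S^{\frac12}M$, this limiting value equals $u^\top(\Id-B(\lambda\Id+B^\top B)^{-1}B^\top)^2u=\Rreg(S)$, i.e. $\Rr^{(k)}\to\Rreg(\Sigma T^{2k})=\Rreg(\Sigma^{2k+1}(\Id+\Sigma)^{-2k})$. In particular $k=0$ gives $\Rreg(\Sigma)$ and $k=1$ gives $\Rreg((\Id+\Sigma)^{-2}\Sigma^3)$ — precisely the two quantities in Assumption~\ref{ass:reg} — while $\norm{T}<1$ yields $\Rr^{(k)}\to\Rreg(0)=\norm{\beta^\star}_\Sigma^2$ as $k\to\infty$. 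Independently, Corollary~\ref{cor:oversmoothing} together with $\bar y_\textup{tr}\to\mathbb{E}[y]=0$ (the $x_i$ are centered) shows $\Rr^{(\infty)}$ concentrates around $\mathbb{E}[y^2]=\norm{\beta^\star}_\Sigma^2$ as well.

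\textbf{Step 3: conclusion, and the main obstacle.} Let $\delta\eqdef\Rreg(\Sigma)-\Rreg((\Id+\Sigma)^{-2}\Sigma^3)>0$ by Assumption~\ref{ass:reg}, and recall $\Rreg(\Sigma)\le\norm{\beta^\star}_\Sigma^2$ from the bound stated after \eqref{eq:reg_risk_function}. Note that for the theorem it suffices to treat $k^\star=1$, so only a single application of $L$ is involved. Choosing $\epsilon$ small and $n$ large enough that, with probability $1-\rho$, the errors of Steps~1--2 for $k\in\{0,1\}$ and the error on $\Rr^{(\infty)}$ are all below $\delta/3$, we get $\Rr^{(1)}\le\Rreg((\Id+\Sigma)^{-2}\Sigma^3)+\delta/3=\Rreg(\Sigma)-2\delta/3$, while $\Rr^{(0)}\ge\Rreg(\Sigma)-\delta/3$ and $\Rr^{(\infty)}\ge\norm{\beta^\star}_\Sigma^2-\delta/3\ge\Rreg(\Sigma)-\delta/3$; hence $\Rr^{(1)}<\min(\Rr^{(0)},\Rr^{(\infty)})$, so $k^\star=1$ works (and $\argmin_k\Rreg(\Sigma T^{2k})$, finite since $\Rreg(\Sigma T^{2k})\to\norm{\beta^\star}_\Sigma^2\ge\Rreg(\Sigma)>\Rreg(\Sigma T^2)$, is the $k^\star$ whose value is predicted in Sec.~\ref{sec:reg}). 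I expect the main difficulty to be Step~1: the random-walk normalization $D^{-1}$ is delicate because $g(x)$ decays like a Gaussian, so $1/(\epsilon+g(x))$ blows up for outlying latent points — which is exactly what the regularization $\epsilon>0$ and a high-probability bound on $\max_i\norm{x_i}$ are meant to tame — and the resulting approximation error must be made uniform in $x$ (and, for the general $L^k$ needed to predict the true optimum, propagated through $k$ compositions), i.e. a genuine kernel-matrix concentration argument rather than a one-line bound.
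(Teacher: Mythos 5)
Your proposal is correct and follows essentially the same route as the paper: your map $x\mapsto \tfrac{g(x)}{\epsilon+g(x)}Tx$ with $T=(\Id+\Sigma^{-1})^{-1}$ is exactly the paper's $\phireg$, your two-stage error control (uniform kernel/degree concentration via chaining, then an $\epsilon$-dependent moment deviation on a high-probability ball) matches Lemmas \ref{lem:reg_smoothed_distribution} and \ref{lem:reg_phi_moments}, and the conclusion via $k^\star=1$ and $\Rreg(\Sigma T^{2})=\Rreg((\Id+\Sigma)^{-2}\Sigma^3)$ is the paper's argument. You also correctly flag that the general-$k$ iteration is not needed for the theorem, which is precisely the paper's stance (it only extrapolates $\Rr^{(k)}\approx\Rreg(\Sigma^{(k)})$ heuristically).
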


\subsection{Sketch of proof}\label{sec:reg_proof}

As we will see, it is easy to show that $\Rr^{(0)}< \Rr^{(\infty)}$ with high probability. Our main goal will therefore be to show that $\Rr^{(1)} < \Rr^{(0)}$ with high probability under Assumption \ref{ass:reg}, which is sufficient to show the existence of an optimal $k^\star \geq 1$. Using concentration inequalities, we will prove a rigorous non-asymptotic bound for $\Rr^{(1)}$. In the next section, we also derive an intuitive expression for $\Rr^{(k)}$ (although without rigorous proof), which we observe to match the numerics quite well.

The first step is to derive a closed form expression for $\Rr^{(0)}$, which is fairly easy using standard concentration techniques for subgaussian variables. The next result is proved in App.~\ref{app:reg_risk}.

\begin{theorem}[Regression risk without smoothing.]\label{thm:reg_risk}
    With probability at least $1-\rho$,
    \begin{equation}\label{eq:reg_risk}
        \Rr^{(0)} = \Rreg(\Sigma) + \order{\frac{\norm{\Sigma} \norm{\beta^\star}^2 d\sqrt{\log(1/\rho)}}{(\lambda + \lambda_{\min}) \sqrt{n}}}
    \end{equation}
    where $\lambda_{\min} = \lambda_{\min}(M^\top \Sigma M)$.
\end{theorem}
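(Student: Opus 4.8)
The plan is to reduce the risk to a quadratic form and then replace every empirical quantity by its population version, tracking the fluctuations. Since $y_i = x_i^\top\beta^\star$ and $z_i = M^\top x_i$ on both the train and test sets, the predictions satisfy $\hat\Yte^{(0)} = \Zte\hat\beta^{(0)} = \Xte M\hat\beta^{(0)}$, so $\Yte - \hat\Yte^{(0)} = \Xte(\beta^\star - M\hat\beta^{(0)})$ and
\[
    \Rr^{(0)} = \pa{\beta^\star - M\hat\beta^{(0)}}^\top \widehat\Sigma_\textup{te}\pa{\beta^\star - M\hat\beta^{(0)}}, \qquad \widehat\Sigma_\textup{te}\eqdef\nte^{-1}\Xte^\top\Xte .
\]
Likewise $\widehat\Sigma_z\eqdef\ntr^{-1}\Ztr^\top\Ztr = M^\top\widehat\Sigma_\textup{tr}M$ with $\widehat\Sigma_\textup{tr}\eqdef\ntr^{-1}\Xtr^\top\Xtr$, and $\ntr^{-1}\Ztr^\top\Ytr = M^\top\widehat\Sigma_\textup{tr}\beta^\star$, so $\hat\beta^{(0)} = (\widehat\Sigma_z + \lambda\Id)^{-1}M^\top\widehat\Sigma_\textup{tr}\beta^\star$. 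Replacing $\widehat\Sigma_\textup{te},\widehat\Sigma_\textup{tr}$ by $\Sigma$ turns $\hat\beta^{(0)}$ into $\beta_\infty\eqdef(M^\top\Sigma M + \lambda\Id)^{-1}M^\top\Sigma\beta^\star$, and a short computation using $M^\top M=\Id$ and the substitution $N=\Sigma^\frac12 M$ (so $M^\top\Sigma M = N^\top N$ and $\Sigma^\frac12(\beta^\star - M\beta_\infty) = (\Id - N(\lambda\Id + N^\top N)^{-1}N^\top)\Sigma^\frac12\beta^\star$) shows the resulting ``population risk'' is exactly $\norm{\beta^\star - M\beta_\infty}_\Sigma^2 = \Rreg(\Sigma)$; note the same computation shows $\Id - N(\lambda\Id+N^\top N)^{-1}N^\top$ is a contraction, which is where $\Rreg(S)\le\norm{\beta^\star}^2_\Sigma$ comes from.

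Next I would bound the fluctuations. Since $x\sim\Nn_{0,\Sigma}$ is subgaussian, standard sample-covariance concentration gives, on an event of probability $1-\rho$ (a union bound over a constant number of events), that $\norm{\widehat\Sigma_\textup{te} - \Sigma}$, $\norm{\widehat\Sigma_\textup{tr} - \Sigma}$, and hence $\norm{\ntr^{-1}\Ztr^\top\Ytr - M^\top\Sigma\beta^\star}\le\norm{\widehat\Sigma_\textup{tr} - \Sigma}\norm{\beta^\star}$, are all $\order{\norm{\Sigma}\norm{\beta^\star}\,d\sqrt{\log(1/\rho)/n}}$ (using a crude dimension-dependent bound, since optimizing the $d$-dependence is not the point). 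In particular $\widehat\Sigma_z = M^\top\widehat\Sigma_\textup{tr}M$ is within that distance of $M^\top\Sigma M$, whose smallest eigenvalue is $\lambda_{\min}=\lambda_{\min}(M^\top\Sigma M)>0$ (positive since $\Sigma\succ0$ and $M$ has full column rank). Provided $n$ is large enough that this error is below $\lambda_{\min}/2$, we get $\norm{(\widehat\Sigma_z + \lambda\Id)^{-1}}\le 2/(\lambda+\lambda_{\min})$; combining this with $\norm{(M^\top\Sigma M + \lambda\Id)^{-1}}\le(\lambda+\lambda_{\min})^{-1}$, $\norm{\beta_\infty}\le\norm{\Sigma}\norm{\beta^\star}/(\lambda+\lambda_{\min})$, $\norm{M}=1$ and the resolvent identity $A^{-1}-B^{-1}=A^{-1}(B-A)B^{-1}$ yields
\[
    \norm{\hat\beta^{(0)} - \beta_\infty} = \order{\frac{\text{poly}(\norm{\Sigma})\,\norm{\beta^\star}\,d\sqrt{\log(1/\rho)/n}}{\lambda+\lambda_{\min}}} .
\]

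Finally I would assemble the pieces: decompose $\beta^\star - M\hat\beta^{(0)} = (\beta^\star - M\beta_\infty) + M(\beta_\infty - \hat\beta^{(0)})$, expand the quadratic form $\Rr^{(0)}=(\cdot)^\top\widehat\Sigma_\textup{te}(\cdot)$, and use (i) the exact identity $(\beta^\star - M\beta_\infty)^\top\Sigma(\beta^\star - M\beta_\infty) = \Rreg(\Sigma)$; (ii) $\norm{\beta^\star - M\beta_\infty}_\Sigma\le\norm{\Sigma}^\frac12\norm{\beta^\star}$ together with $\norm{\widehat\Sigma_\textup{te} - \Sigma}$ to control the ``first'' term; (iii) Cauchy--Schwarz, $\norm{\widehat\Sigma_\textup{te}}\le\norm{\Sigma}+\order{\cdots}$, and the displayed bound on $\norm{\hat\beta^{(0)} - \beta_\infty}$ to control the cross term and the second-order term. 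Collecting the contributions and absorbing the pieces that decay faster than $1/\sqrt n$ into the leading one (legitimate once $n$ is large enough) gives \eqref{eq:reg_risk}. The main obstacle I anticipate is propagating the correct spectral floor: a naive perturbation bound for $(\widehat\Sigma_z+\lambda\Id)^{-1}$ produces only a $1/\lambda^2$ prefactor, whereas the target has a single $1/(\lambda+\lambda_{\min})$. Recovering this requires exploiting that the \emph{population} covariance $M^\top\Sigma M$ has minimum eigenvalue $\lambda_{\min}$ and that this lower bound survives concentration, which is precisely what forces the ``$n$ sufficiently large'' hypothesis; everything else is careful but routine bookkeeping of subgaussian constants and dimension factors.
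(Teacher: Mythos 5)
Your proposal is correct and follows essentially the same route as the paper: concentrate the empirical second moments $\ntr^{-1}\Ztr^\top\Ztr$, $\ntr^{-1}\Ztr^\top\Ytr$, $\nte^{-1}\Zte^\top\Zte$ around their population versions via subgaussian covariance concentration, deduce that $\hat\beta^{(0)}$ concentrates around $(\lambda\Id+M^\top\Sigma M)^{-1}M^\top\Sigma\beta^\star$ with the spectral floor $\lambda+\lambda_{\min}$, and identify the resulting population risk with $\Rreg(\Sigma)$. The only cosmetic difference is that you keep the risk as a single quadratic form $(\beta^\star - M\hat\beta^{(0)})^\top\widehat\Sigma_\textup{te}(\beta^\star - M\hat\beta^{(0)})$ while the paper expands it into three terms before concentrating each.
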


As expected, when $p=d$, $M=\Id$ and $\lambda \to 0$, we have $\Rreg(\Sigma) \to 0$ and the risk is exactly $0$ in the infinite sample limit (recall that we have assumed zero noise on the labels). When $p<d$ however, the limit risk is generally non-zero. The worst case is obtained when $\Sigma \beta^\star$ is orthogonal to $M^\top$, where the risk reaches its maximum at $\norm{\beta^\star}_{\Sigma}^2 = \EE \abs{y}^2$. Since this is the variance of $y$, this is also $\lim_{n\to \infty} \Rr^{(\infty)}$, hence we always have $\Rr^{(0)}\leq \Rr^{(\infty)}$ with high probability for $n$ large enough.

Let us now turn to computing the risk after one step of smoothing $k=1$. We define $\Sigma^{(k)} = (\Id + \Sigma^{-1})^{-2k} \Sigma$. The main result of this section is the following.
\begin{theorem}[Regression risk with one step of smoothing.]\label{thm:reg_risk_one}
    With probability at least $1-\rho$,
    \begin{align}\label{eq:reg_risk_one}
        \Rr^{(1)} &= \Rreg(\Sigma^{(1)}) + \order{C \epsilon^{1/5}} + \order{\frac{C'\log n \sqrt{d+\log(1/\rho)}}{(\lambda + \lambda_{\min}) \sqrt{n}}}
    \end{align}
    where $C = \textup{poly}(\norm{\Sigma},e^{d}, \abs{\Id + \Sigma})$, $C' = \textup{poly}(\epsilon^{-1}, \norm{\Sigma}, \norm{\beta^\star})$ and $\lambda_{\min} = \lambda_{\min}(M^\top \Sigma^{(1)} M)$.
\end{theorem}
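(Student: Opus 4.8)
The plan is to route everything through the \emph{population} version of one smoothing step. Write $T \eqdef (\Id+\Sigma^{-1})^{-1} = \Sigma(\Id+\Sigma)^{-1}$, which commutes with $\Sigma$ and satisfies $T\Sigma T = (\Id+\Sigma)^{-2}\Sigma^3 = \Sigma^{(1)}$ and $(\Sigma^{(1)})^{1/2} = T\Sigma^{1/2}$. First I would compute, by a straightforward Gaussian integral with the kernel \eqref{eq:kernel} and $x' \sim \Nn_{0,\Sigma}$,
\[
    \EE_{x'} W(x,x') = \epsilon + g(x), \qquad \EE_{x'} W(x,x')\,x' = g(x)\,Tx, \qquad g(x) \eqdef \abs{\Id+\Sigma}^{-1/2} e^{-\frac12\norm{x}^2_{(\Id+\Sigma)^{-1}}},
\]
so that the ``population smoothing map'' is $\psi(x) \eqdef \tfrac{g(x)}{\epsilon+g(x)} Tx$, with $\psi(x) - Tx = -\tfrac{\epsilon}{\epsilon+g(x)}Tx$. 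The three terms in \eqref{eq:reg_risk_one} will then correspond respectively to: the population risk of a clean ridge regression whose latent variables are the linear images $Tx_i$ (which I claim equals $\Rreg(\Sigma^{(1)})$); the error from replacing $\psi$ by its $\epsilon\to0$ limit $x\mapsto Tx$; and the finite-sample fluctuation of the graph operator plus of the regression itself.

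Step A, concentration of $LX$. For each fixed $i$, $\tfrac1n\sum_j W(x_i,x_j)$ and $\tfrac1n\sum_j W(x_i,x_j)x_j$ are averages of i.i.d.\ terms that are bounded (by $\epsilon+1$) times sub-gaussian, so vector Bernstein together with a union bound over the $n$ nodes controls $\max_i \abs{d_i - (\epsilon+g(x_i))}$ and $\max_i \norm{\tfrac1n\sum_j W(x_i,x_j)x_j - g(x_i)Tx_i}$ at rate $\order{\log n\sqrt{d+\log(1/\rho)}/\sqrt n}$; since all degrees are $\geq\epsilon$, dividing numerator by denominator propagates this to $\max_i \norm{x^{(1)}_i - \psi(x_i)} = \order{\textup{poly}(\epsilon^{-1},\norm{\Sigma})\log n\sqrt{d+\log(1/\rho)}/\sqrt n}$, which is the last term of \eqref{eq:reg_risk_one}. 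Step B, the $\epsilon$-error (the crux). The pointwise bound $\norm{\psi(x)-Tx} = \norm{Tx}\,\tfrac{\epsilon}{\epsilon+g(x)}$ is \emph{not} uniformly small on the high-probability norm ball, so I would control it only inside the averages that enter the risk. Splitting at $\{\norm{x}^2_{(\Id+\Sigma)^{-1}}\le r^2\}$, where $g(x) \geq \abs{\Id+\Sigma}^{-1/2} e^{-r^2/2}$, and bounding the complementary event by sub-exponential concentration of the quadratic form $\norm{x}^2_{(\Id+\Sigma)^{-1}}$ (whose coefficients, the eigenvalues of $\Sigma(\Id+\Sigma)^{-1}$, are all $<1$), one gets $\EE\!\left[(1+\norm{x})\norm{\psi(x)-Tx}\right] \lesssim \textup{poly}(e^d,\abs{\Id+\Sigma},\norm{\Sigma})\,\pa{\epsilon\, e^{r^2/2} + e^{-cr^2}}$; optimizing $r$ against $\epsilon$ yields the $\order{C\epsilon^{1/5}}$ term with $C = \textup{poly}(\norm{\Sigma},e^d,\abs{\Id+\Sigma})$, the $e^d$ and $\abs{\Id+\Sigma}$ coming from the normalization in $g$ and from the MGF of the quadratic form.

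Step C, assembling the risk. By Steps A–B I may replace $X^{(1)}$ by $XT$ in all empirical quantities up to the above errors. Matrix/vector Bernstein then gives $\tfrac1{\ntr}(\Ztr^{(1)})^\top\Ztr^{(1)} \approx M^\top\Sigma^{(1)}M$ and, using $\Ytr = \Xtr\beta^\star$, $\tfrac1{\ntr}(\Ztr^{(1)})^\top\Ytr \approx M^\top T\Sigma\beta^\star$; a resolvent perturbation, with $\norm{(\lambda\Id + M^\top\Sigma^{(1)}M)^{-1}} \leq (\lambda+\lambda_{\min})^{-1}$ and $\lambda_{\min} = \lambda_{\min}(M^\top\Sigma^{(1)}M)$ — this is where the $(\lambda+\lambda_{\min})^{-1}$ factor enters — shows $\hat\beta^{(1)}$ is $\order{1/\sqrt n}$-close to $\bar\beta^{(1)} \eqdef (\lambda\Id + M^\top\Sigma^{(1)}M)^{-1}M^\top T\Sigma\beta^\star$. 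Plugging into \eqref{eq:risk}, replacing $\Zte^{(1)}$ by $\Xte TM$ and using $\tfrac1{\nte}\Xte^\top\Xte \approx \Sigma$ with $\Yte = \Xte\beta^\star$, the leading term is $\norm{\Sigma^{1/2}(\beta^\star - TM\bar\beta^{(1)})}^2$; the identities $(\Sigma^{(1)})^{1/2} = T\Sigma^{1/2}$ and $M^\top T\Sigma\beta^\star = M^\top(\Sigma^{(1)})^{1/2}\Sigma^{1/2}\beta^\star$ rewrite this as exactly $\Rreg(\Sigma^{(1)})$ in the form \eqref{eq:reg_risk_function}. Collecting the three error sources gives \eqref{eq:reg_risk_one}.

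The hard part will be Step B: the naive estimate of $\norm{\psi(x)-Tx}$ diverges (roughly like $\sqrt n$) on the norm ball of radius $\sqrt{\log n}$, so the truncation must be carried out inside the relevant averages rather than uniformly, and the radius tuned against $\epsilon$; extracting the exponent $1/5$ and the correct dimension dependence ($e^d$, $\abs{\Id+\Sigma}$) from this trade-off — together with the uniform control in Step A of the ratio $N_i/d_i$ near the degree floor $\epsilon$ — is the technical heart of the argument, whereas Steps A (aside from that ratio) and C are routine sub-gaussian / resolvent concentration.
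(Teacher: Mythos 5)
Your proposal follows essentially the same route as the paper: your population smoothing map $\psi$ is exactly the paper's $\phireg$ (derived from the same Gaussian integral), your Step A is the paper's Lemma~\ref{lem:reg_smoothed_distribution} (the paper uses Dudley/chaining over a ball of radius $\sqrt{\log n}$ where you condition and union-bound over nodes, but both yield the same rate), your Step B is Lemma~\ref{lem:reg_phi_moments} with the same truncation-and-optimize-$r$ trade-off producing the $\epsilon^{1/5}$ exponent, and your Step C assembles the risk via the same resolvent perturbation and the identity $(\Sigma^{(1)})^{1/2}=T\Sigma^{1/2}$. The argument is correct and correctly identifies the truncation step as the technical crux.
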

This theorem gives a limiting expression or $\Rr^{(1)}$ with two additional error terms. The first goes to $0$ with $\epsilon$ and is due to the deviation from the kernel \eqref{eq:kernel} to the exact Gaussian kernel $W_g$. The second term goes to $0$ when $n \to \infty$ and is controlled via concentration inequalities.
The limit risk when $\epsilon \to 0$, $n \to \infty$ is $\Rr^{(1)}\approx \Rreg(\Sigma^{(1)})$, which is strictly lower than $\Rreg(\Sigma)$ by Assumption \ref{ass:reg} and proves Theorem \ref{thm:reg_risk}. Note that, to get $\Rr^{(1)}< \Rr^{(0)}$, we generally need $\epsilon \lesssim e^{-d}$ and therefore $n \gtrsim e^d$, which seems to be an unavoidable artifact in our current proof technique.

Let us try to better understand Assumption~\ref{ass:reg} by sketching the proof of Thm.~\ref{thm:reg_risk_one}.
The proof relies on an approximate description of the distribution of the smoothed node features $z^{(1)}_i = M^\top x^{(1)}_i$ where we recall that the $x^{(1)}_i$ are the rows of $X^{(k)} = L^k X$. We define $d(x) = \abs{\Id + \Sigma}^{-\frac12} e^{-\frac12 \norm{x}^2_{(\Id + \Sigma)^{-1}}}$ and
\begin{equation}\label{eq:reg_phi}
    \phireg(x) = \frac{d(x)}{d(x)+\epsilon} (\Sigma^{-1} + \Id)^{-1} x\, .
\end{equation}
Then, using some chaining concentration inequalities for subgaussian variables (Lemma \ref{lem:gaussian_chaining} in the appendix) and properties of Gaussian distributions (Lemma \ref{lem:gaussian_integral}), we can prove the following.
\begin{lemma}\label{lem:reg_smoothed_distribution}
    With probability at least $1-\rho$, for all $i=1,\ldots,n$:
    \begin{equation}
        \left. \begin{aligned}
            \norm{x^{(1)}_i - \phireg(x_i)}_{\Sigma^{-1}}& \\
            \norm{\Sigma^{-\frac12} \pa{x_i^{(1)} (x_i^{(1)})^\top - \phireg(x_i)\phireg(x_i)^\top}\Sigma^{-\frac12}}&
        \end{aligned} \right\rbrace
        \lesssim \frac{C\log n (\sqrt{d + \log(1/\rho)})}{\sqrt{n}}
    \end{equation}
    where $C=\textup{poly}(\epsilon^{-1}, \norm{\Sigma}, \abs{\Id + \Sigma})$.
\end{lemma}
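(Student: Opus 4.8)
The plan begins by recognizing $\phireg(x_i)$ as the conditional expectation of the smoothed latent variable given $x_i$. Writing out one aggregation step,
\[
x_i^{(1)} = \pa{\textstyle\sum_j W(x_i,x_j)}^{-1}\textstyle\sum_j W(x_i,x_j)\, x_j = \frac{S_1(x_i)}{S_0(x_i)},
\]
a ratio of two empirical averages $S_0(x_i) = \tfrac{1}{n}\sum_j W(x_i,x_j)$ and $S_1(x_i) = \tfrac{1}{n}\sum_j W(x_i,x_j) x_j$ over the $n$ samples. Conditionally on $x_i$ these are (up to the $j=i$ term) averages of i.i.d. quantities, so they should concentrate around $\bar S_0(x_i) = \EE_{x \sim \Nn_{0,\Sigma}} W(x_i,x)$ and $\bar S_1(x_i) = \EE_{x \sim \Nn_{0,\Sigma}}[W(x_i,x) x]$. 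Completing the square with $A = \Id + \Sigma^{-1}$ and invoking the Gaussian integral identities of Lemma~\ref{lem:gaussian_integral} gives $\bar S_0(x_i) = \epsilon + d(x_i)$ and $\bar S_1(x_i) = d(x_i)(\Sigma^{-1}+\Id)^{-1} x_i$, so that $\bar S_1(x_i)/\bar S_0(x_i) = \phireg(x_i)$ exactly. Crucially, because $W \geq \epsilon$ pointwise, $S_0(x_i)$ and $\bar S_0(x_i)$ are $\geq \epsilon$ deterministically, which is what keeps the ratio well-conditioned.

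The argument then proceeds in two steps. First, on an event of probability $\geq 1-\rho/2$, all the $x_i$ lie in the Mahalanobis ball $\mathcal{B} = \{x : \norm{x}_{\Sigma^{-1}} \leq R\}$ with $R \asymp \sqrt{d + \log(n/\rho)}$, by the $\chi^2$ tail bound applied to $\norm{\Sigma^{-1/2}x_i}^2$ and a union bound over nodes. Second, on a further event of probability $\geq 1-\rho/2$, I control $S_0$ and $S_1$ uniformly over the \emph{index} point $x$: I claim $\sup_{x\in\mathcal{B}}\abs{S_0(x) - \bar S_0(x)}$ and $\sup_{x\in\mathcal{B}}\norm{\Sigma^{-1/2}(S_1(x) - \bar S_1(x))}$ are both $\lesssim \textup{poly}(\norm{\Sigma},\abs{\Id+\Sigma})\,\tfrac{\log n\sqrt{d+\log(1/\rho)}}{\sqrt n}$. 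For $S_0$, the summands lie in $[\epsilon,\epsilon+1]$ and $x \mapsto W(x,x_j)$ is Lipschitz, so Hoeffding at a fixed $x$ plus a $1/n$-net of $\mathcal{B}$ (of cardinality $(Rn)^{\order{d}}$) suffices. For $S_1$, after whitening ($\tilde x_j = \Sigma^{-1/2}x_j \sim \Nn_{0,\Id}$) the summands become a bounded kernel times a standard subgaussian vector, and the chaining bound of Lemma~\ref{lem:gaussian_chaining} over $\mathcal{B}$, combined with a truncation of $\norm{\tilde x_j}$ at level $\sqrt{\log n}$, yields the rate — the $\log n$ factor coming from the radius of $\mathcal{B}$ and the truncation level, the $\sqrt d$ from covering $\mathcal{B} \subset \RR^d$, and $\sqrt{\log(1/\rho)}$ from the failure probability. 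Evaluating these suprema at $x = x_i$ (legitimate since $x_i \in \mathcal{B}$) and absorbing the $\order{1/n}$ self-term $j=i$ produces per-node control of numerator and denominator.

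It remains to pass from numerator and denominator to the ratio. Writing $\bar S_0 = \bar S_0(x_i)$, $\bar S_1 = \bar S_1(x_i)$, $S_0 = S_0(x_i)$, $S_1 = S_1(x_i)$,
\[
\Sigma^{-1/2}\pa{x_i^{(1)} - \phireg(x_i)} = \frac{\Sigma^{-1/2}(S_1 - \bar S_1)}{S_0} + \frac{\Sigma^{-1/2}\bar S_1}{\bar S_0}\cdot\frac{\bar S_0 - S_0}{S_0},
\]
one bounds the first term by $\epsilon^{-1}\norm{\Sigma^{-1/2}(S_1 - \bar S_1)}$. For the second term, the key observation is
\[
\frac{\norm{\Sigma^{-1/2}\bar S_1}}{\bar S_0} = \frac{d(x_i)\,\norm{(\Sigma^{-1}+\Id)^{-1}x_i}_{\Sigma^{-1}}}{\epsilon + d(x_i)} \leq \norm{x_i}_{\Sigma^{-1}} \leq R,
\]
since the factor $d(x_i)$ in the numerator cancels against the $d(x_i)$ that lower-bounds $\bar S_0$, and $(\Sigma^{-1}+\Id)^{-1}$ is a contraction in the $\Sigma^{-1}$-norm; this is exactly what keeps the $\epsilon$-dependence at $\epsilon^{-1}$ rather than $\epsilon^{-2}$. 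Combining the two terms gives the first claimed bound with $C = \textup{poly}(\epsilon^{-1},\norm{\Sigma},\abs{\Id+\Sigma})$, and the outer-product bound follows from $\norm{aa^\top - bb^\top} \leq (\norm{a} + \norm{b})\norm{a - b}$ applied to $a = \Sigma^{-1/2}x_i^{(1)}$ and $b = \Sigma^{-1/2}\phireg(x_i)$, both of which have norm $\lesssim R$ on the good event.

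I expect the main obstacle to be the second step, and within it the uniform control of the vector-valued empirical process $x \mapsto \tfrac{1}{n}\sum_j W(x,x_j)\Sigma^{-1/2} x_j$: the summands are heavy-tailed enough (a bounded kernel times a subgaussian vector) and their ``direction'' depends on the index point, so a plain vector Bernstein inequality is not uniform in $x$, and one genuinely needs the chaining of Lemma~\ref{lem:gaussian_chaining} together with Gaussian truncation, while tracking constants carefully enough that the final dependence on $\epsilon^{-1}$, $\norm{\Sigma}$ and $\abs{\Id+\Sigma}$ comes out as stated. The remaining ingredients — the Gaussian integrals, the deterministic $\epsilon$-floor on the denominator, and the ratio algebra — are routine.
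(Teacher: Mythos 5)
Your proposal follows essentially the same route as the paper's proof: identify $\phireg(x_i)$ as the ratio of the population quantities $\EE\, W(x_i,X)X$ and $\EE\, W(x_i,X)$ via the Gaussian integral lemma, control the empirical numerator and denominator uniformly over a high-probability ball of index points via the chaining lemma (Lemma~\ref{lem:gaussian_chaining}), and use the deterministic floor $S_0 \geq \epsilon$ to pass to the ratio; your two-term decomposition of $S_1/S_0 - \bar S_1/\bar S_0$ is just a reorganization of the paper's three-term one, and the outer-product bound is handled identically. One small quantitative point: bounding $\norm{\Sigma^{-1/2}\bar S_1(x_i)}/\bar S_0(x_i)$ by $\norm{x_i}_{\Sigma^{-1}} \leq R \asymp \sqrt{d+\log(n/\rho)}$ introduces an extra factor of $R$ beyond the stated rate; the paper instead exploits the Gaussian decay of $d(x)$, namely $d(x)\,\norm{(\Sigma^{-1}+\Id)^{-1}x}_{\Sigma^{-1}} \lesssim 1$ uniformly in $x$, so that this prefactor is an absolute constant (at the price of an $\epsilon^{-2}$, which is harmless inside $\textup{poly}(\epsilon^{-1})$).
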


Hence the smoothed latent variables behaves almost like $(\Id + \Sigma^{-1})^{-1} x$, up to a deviation $\epsilon$ that is handled in Lemma \ref{lem:reg_phi_moments} in the appendix. The covariance of these data is $\Sigma^{(1)}= (\Id + \Sigma^{-1})^{-2}\Sigma$, hence we can adapt the proof of Thm.~\ref{thm:reg_risk} to obtain Thm.~\ref{thm:reg_risk_one}. All details are given in App.~\ref{app:reg_risk_one}.

\subsection{Intuition and exact computation in dimension $d=2$}\label{sec:reg_example}

We proved above that $x^{(1)}$ behaves almost like $(\Id + \Sigma^{-1})^{-1} x$, whose covariance is $\Sigma^{(1)}$. Similarly, by applying repeated smoothing we can extrapolate that $x^{(k)}$ behaves like $(\Id + \Sigma^{-1})^{-k} x$, such that 
$
    \Rr^{(k)} \approx \Rreg(\Sigma^{(k)})
$. The rigorous proof of this fact becomes increasingly complicated and is skipped here.
The matrix $\Sigma^{(k)}$ has the same eigendecomposition as $\Sigma$, but where every eigenvalue $\lambda_i$ is replaced by $\lambda_i^{(k)}=(1+1/\lambda_i)^{-2k}\lambda_i$. This can be interpreted as follows: when $\lambda_i\gg 1$ is large, $\lambda_i^{(1)} \sim \lambda_i$, while if $\lambda_i\ll 1$ is small, $\lambda_i^{(1)} \sim \lambda_i^{2k+1}$ (note that the constant ``1'' here is due to our kernel \eqref{eq:kernel}, it is not inherently significant). Hence smoothing \textbf{shrinks the directions of the small eigenvalues faster than that of the large ones}. Thus, if $\beta^\star$ is mostly aligned with the eigenvectors of large eigenvalues, shrinking the small eigenvalues may \emph{reduce unwanted noise} that emerges when projecting the node features $z = M^\top x$. On the other hand, if all eigenvalues of $\Sigma$ are equal, then $\Sigma^{(k)} \propto \Sigma$, and smoothing \emph{does not help}, since in the limit $\lambda=0$, the risk is invariant to scaling $\Rreg(a S) = \Rreg(S)$. Worse, we will see on an example below that smoothing can actually degrade the performance when $\beta^\star$ is unpropery aligned. 

\begin{figure}
    \centering
    \begin{subfigure}[b]{0.69\textwidth}
        \includegraphics[width=.32\textwidth]{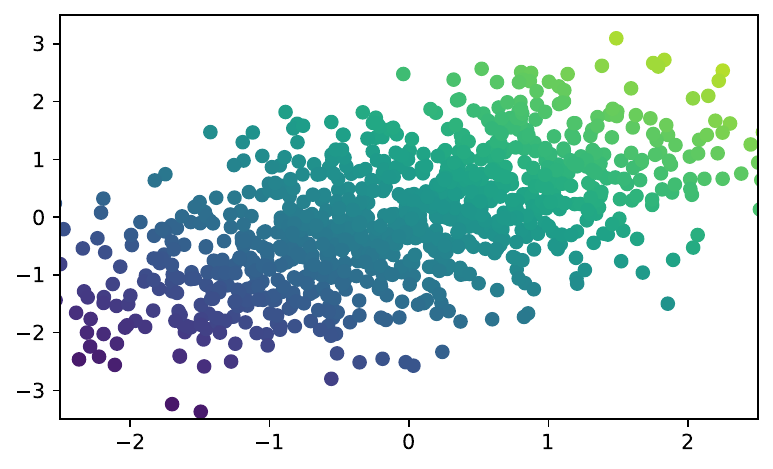}
        \includegraphics[width=.32\textwidth]{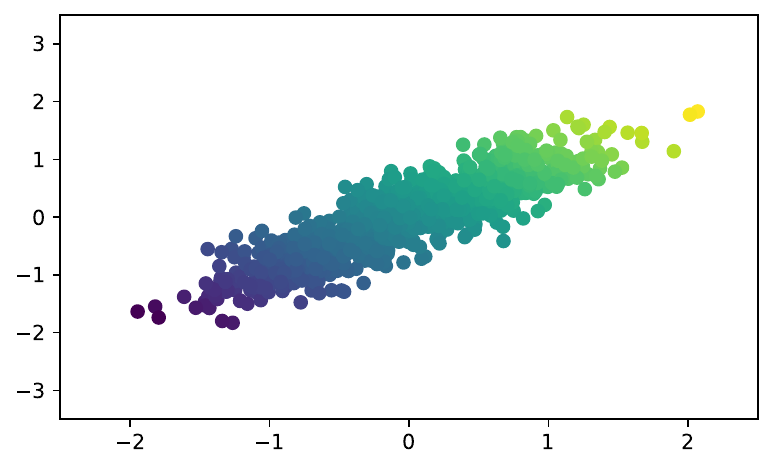}
        \includegraphics[width=.32\textwidth]{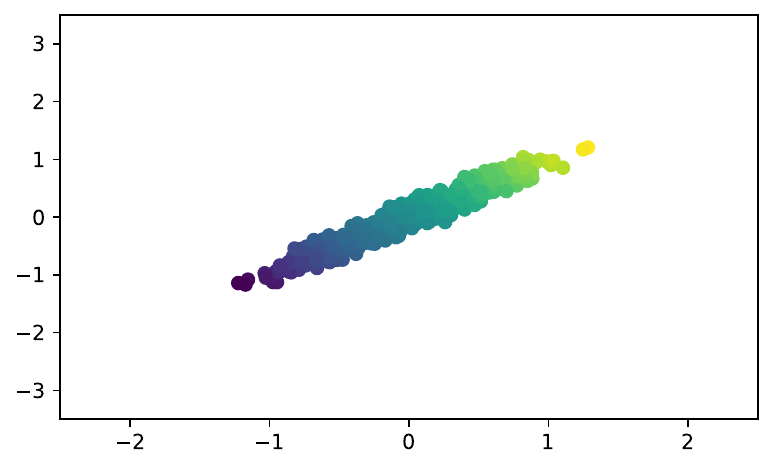} \\
        \includegraphics[width=.32\textwidth]{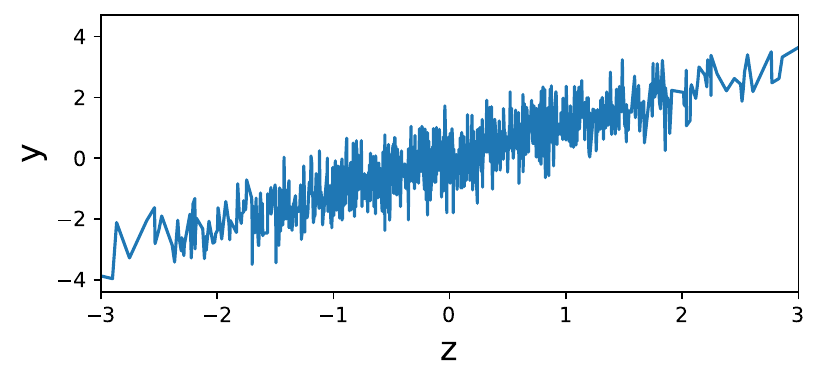}
        \includegraphics[width=.32\textwidth]{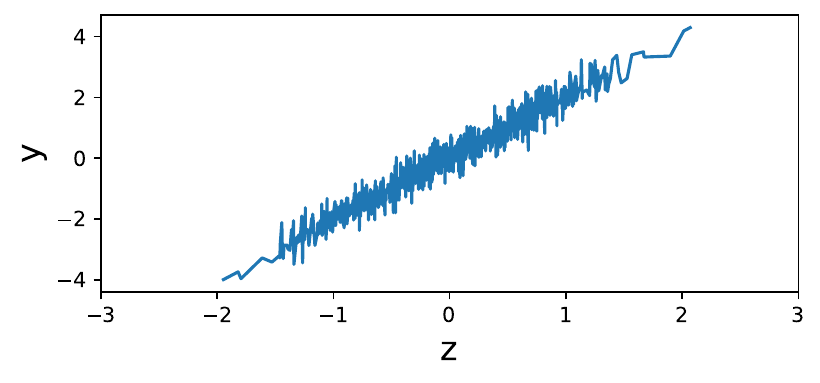}
        \includegraphics[width=.32\textwidth]{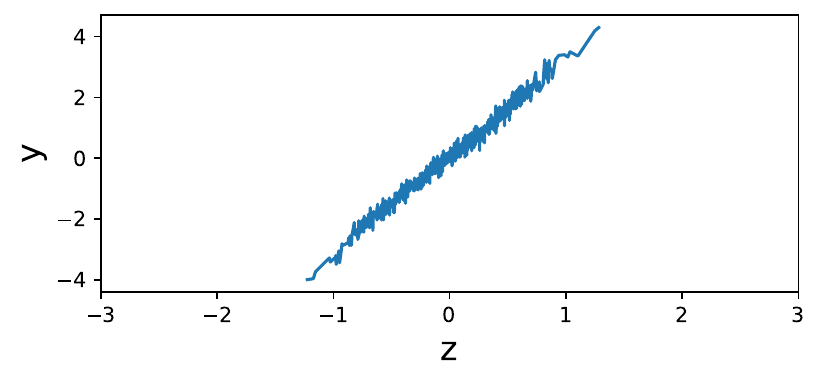}
        \caption{}
        \label{subfig:homo}
    \end{subfigure}
    \begin{subfigure}[b]{0.28\textwidth}
        \includegraphics[width=\textwidth]{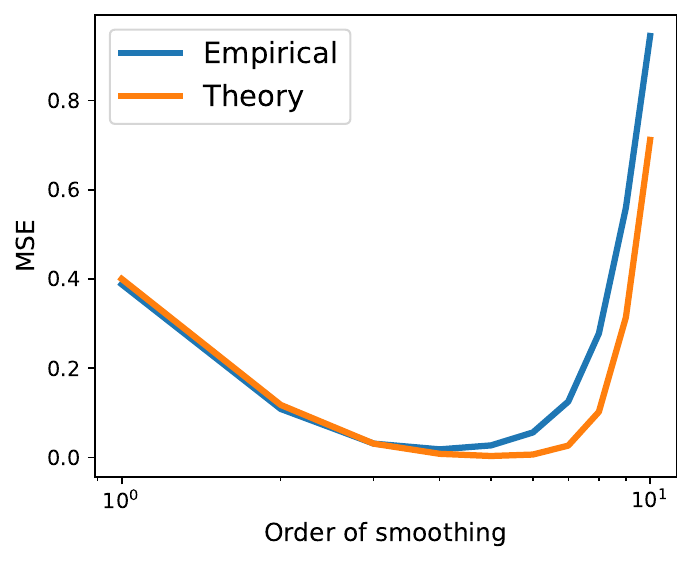}
        \vspace{8pt}
    \end{subfigure} \\
    \begin{subfigure}[b]{0.69\textwidth}
        \includegraphics[width=.32\textwidth]{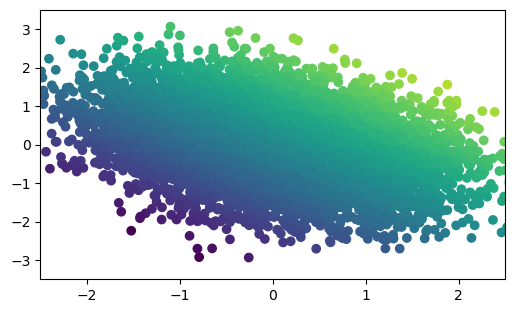}
        \includegraphics[width=.32\textwidth]{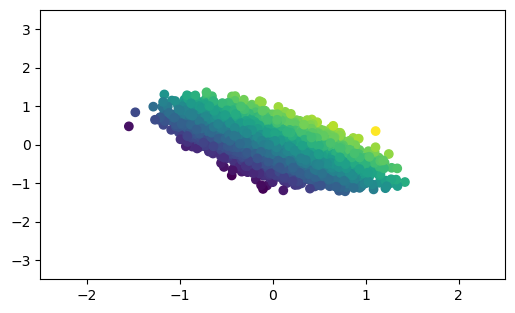}
        \includegraphics[width=.32\textwidth]{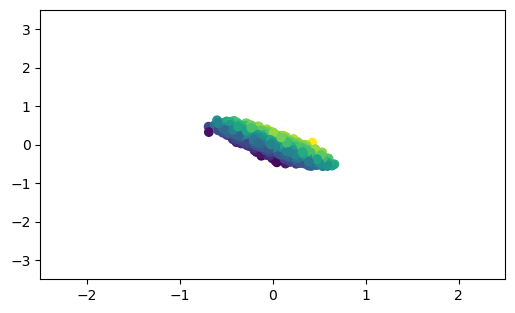} \\
        \includegraphics[width=.32\textwidth]{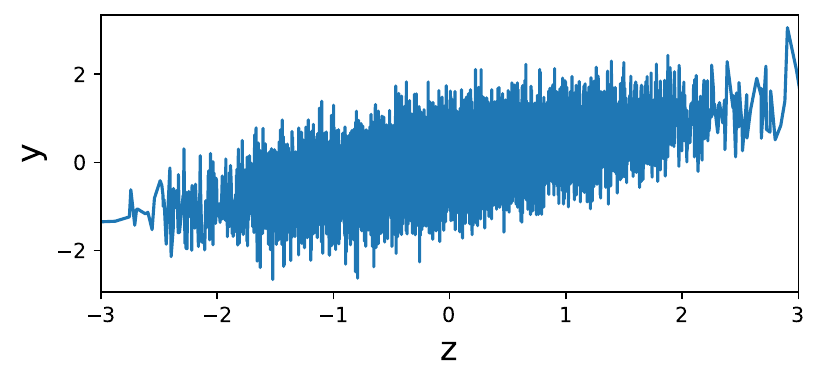}
        \includegraphics[width=.32\textwidth]{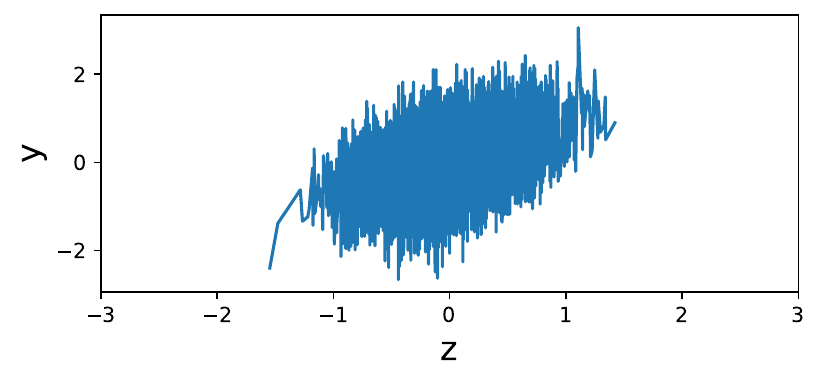}
        \includegraphics[width=.32\textwidth]{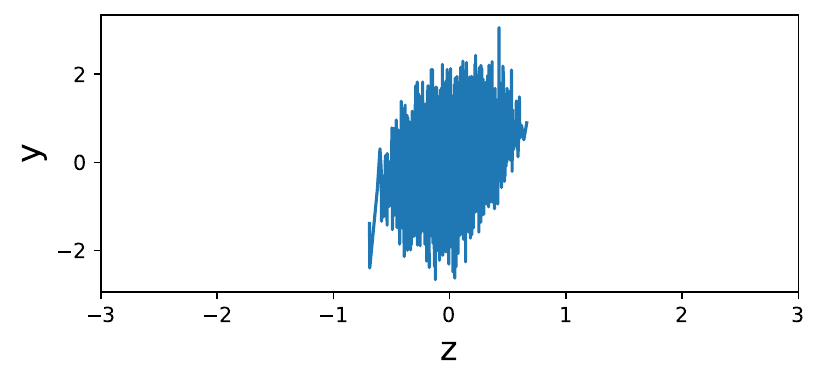}
        \caption{}
        \label{subfig:hetero}
    \end{subfigure}
    \begin{subfigure}[b]{0.28\textwidth}
        \includegraphics[width=\textwidth]{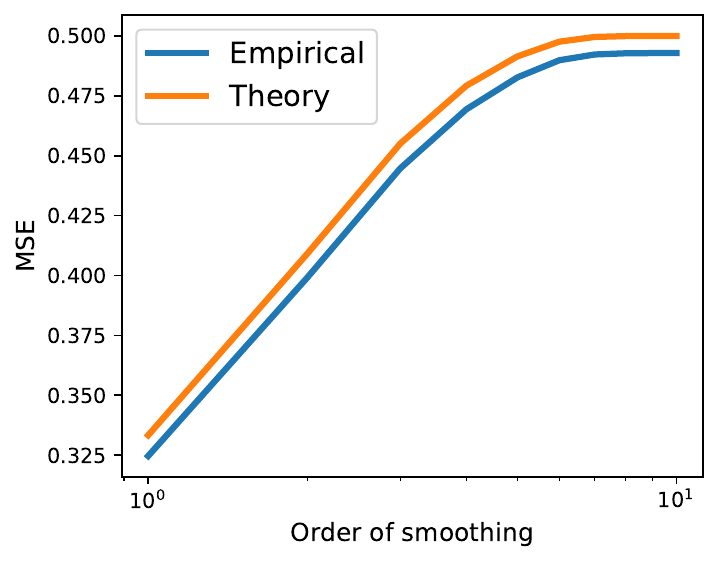}
        \vspace{8pt}
    \end{subfigure}

    \caption{Illustration of mean aggregation smoothing on the regression example described in Sec.~\ref{sec:reg_example}. \textbf{For both subfigures: First three figures on the left, top:} \emph{unobserved} latent variables $X^{(k)}$ in dimension $d=2$ where the colors are the $Y$; \textbf{bottom:} observed node features $Z^{(k)} = X^{(k)}M$ in dimension $p=1$ on the x-axis, labels $Y$ on the y-axis. \textbf{From left to right}, three order of smoothing $k=0, 1$ and $2$ are represented. \textbf{Figure on the right:} comparison of empirical and theoretical MSE given by \eqref{eq:reg_risk_k} with respect to order of smoothing $k$. \textbf{Subfig. \ref{sub@subfig:homo}:} $\lambda_1 = 2, \lambda_2=1/2$ (smoothing does help), \textbf{Subfig. \ref{sub@subfig:hetero}:} $\lambda_1=1/2, \lambda_2=1$ (smoothing does not help). }\label{fig:reg}
\end{figure}


We illustrate this in dimension $d=2$. Consider the following settings: $d=2, p=1$, $\Sigma$ has two eigenvalues $\lambda_1 \gg 1$ and $\lambda_2 \ll 1$, with respective eigenvectors $u_1=[1,1]/\sqrt{2}$ and $u_2=[-1, 1]/\sqrt{2}$, and $\beta^\star$ is fully correlated with the first eigenvector: $\beta^\star = b u_1$. Finally, $M^\top = [1,0]$ is the projection on the first coordinate. This situation is represented in Fig.~\ref{fig:reg}.
In this case, we can compute explicitely:
\begin{equation}\label{eq:reg_risk_k}
\Rr^{(k)} \approx \Rreg(\Sigma^{(k)}) = \lambda_1 b^2 \frac{(2\lambda + \lambda^{(k)}_2)^2 + \lambda^{(k)}_2 \lambda^{(k)}_1}{(2\lambda + \lambda^{(k)}_1 + \lambda^{(k)}_2)^2}
\end{equation}
So, if $\lambda_2^{(k)}$ decreases faster than $\lambda_1^{(k)}$, this function will first decrease to a minimum of approximately $\lambda_1 b^2 \pa{\frac{2\lambda}{2\lambda + \lambda_1^{(k^\star)}}}^2$ (when $\lambda_2^{(k)}\approx 0$), before increasing again to $\lambda_1 b^2 = \norm{\beta^\star}_{\Sigma}^2 = \lim_{n \to \infty} \Rr^{(\infty)}$. This is illustrated in Fig.~\ref{fig:reg}, for $\lambda_1=2$ and $\lambda_2=1/2$, where we empirically observes a minimum $k^\star$ that matches rather well the one predicted by \eqref{eq:reg_risk_k}. 

\paragraph{Homophily vs. Heterophily and a failure case}

In graph theory, \emph{homophily} refers to the concept that linked nodes tend to display similar properties: for instance, friends on social networks have similar preferences, and so on. In graph machine learning, it generally means that linked nodes tend to have similar node features and labels. This concept is at the core of many graph signal processing and graph machine learning methods: for instance, spectral clustering is akin to a low-pass filter on the graph structure. However, it has been observed that real graphs may sometimes exhibit a low level of homophily \cite{Zhu2021, Bodnar2022}. They are rather said to be \emph{heterophilic}, a somewhat less ``well-defined'' concept: in heterophilic graphs, linked nodes can be similar or dissimilar, some attributes can be homophilic and others heterophilic, and so on.

In our settings, at first glance it seems that our very regular random graph model always results in homophilic graphs, as the Gaussian kernel decreases with the distance between latent variables, and the latter are strongly linked with the node features. This is partly true, however is it also possible that nodes linked by a ``strong'' edge (with a high weight) have very different labels, which can be said to be a (toy) example of heterophily. For instance, consider the 2D linear regression example above given by \eqref{eq:reg_risk_k}. We have seen that when the regression vector is in the direction of the eigenvector corresponding to a high eigenvalue, then beneficial smoothing appears, as it reduces the noise in the observed node features (Fig.~\ref{subfig:homo}). However, when the regression vector is instead in the low-eigenvalue direction, then close-by latent variables have very different labels, and the graph is more heterophilic. In this case, beneficial smoothing does \emph{not} appear, and any smoothing strictly degrades the MSE! (Fig.~\ref{subfig:hetero}) This is due to the fact that in this case the ``information'' in node features vanishes faster than the noise.
Of course, this is an exceedingly simple model of heterophily, and a better understanding and modelization of this phenomenon remains an outstanding open question.

\paragraph{Discussion} Recent literature on GNNs have adressed both oversmoothing and heterophily by clever normalization techniques \cite{Zhao2019a, Huang2022, Bodnar2022,Rusch2022}, combined with quantitative metrics of these phenomena \cite{Yan2021, Zhu2021}. However, these tend to indiscriminately combat oversmoothing, without taking into account potential beneficial smoothing. In future work, our analysis could help designing more detailed normalization methods, e.g. after some estimation step that would identify which directions in the data are squeezed by smoothing, and which of them are relevant or not for learning.


\section{Finite smoothing: classification}\label{sec:gmm}

In this section, we examine a simple classification problem for two balanced classes with Gaussian distribution with identity covariance.
The distribution of the labels and latent variables is:
\begin{equation}
    (x,y) \sim (1/2) (\Nn_\mu \otimes \{1\} + \Nn_{-\mu} \otimes \{-1\})
\end{equation}
That is, with equal probability $x$ is drawn from $\Nn_\mu$ and $y=1$, or $x\sim \Nn_{-\mu}$ and $y=-1$. As $\norm{\mu}$ increases, the problem become simpler, there is an extensive literature on this problem \cite{Dasgupta1999, Vempala2004, Belkin2010a}. Note that in this case $z_i$ are also Gaussian, with mean $\nu \eqdef M^\top \mu$ or $-\nu$ and identity covariance.

%
We note that this is not a \emph{difficult} problem \emph{per se}, and that \emph{linear regression with the MSE} is certainly not the method of choice to solve it: there are plethora of losses better adapted to binary classification such as the binary cross-entropy (left for future investigations), or even other dedicated methods: a Spectral Clustering algorithm on the graph alone would be able to perform the classification task under some mild hypotheses \cite{Vempala2004, Achlioptas2005} (without using the node features!). Nevertheless, let us recall that our main goal is to illustrate the smoothing phenomenon, and as we will see, the interpretation here will be quite different from the previous section. Our main result is the following.

\begin{theorem}[Existence of optimal smoothing for classification.]\label{thm:gmm_informal}
    Take any $\rho>0$. If $\epsilon$ is sufficiently small, and $\norm{\mu}, n$ are sufficiently large, and $\norm{M^\top\mu} >0$, then with probability $1-\rho$, there is $k^\star >0$ such that $\Rr^{(k^\star)} < \min(\Rr^{(0)}, \Rr^{(\infty)})$.
\end{theorem}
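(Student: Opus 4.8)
The plan is to mirror the structure of the regression proof from Section~\ref{sec:reg}: reduce the statement to showing that a \emph{single} step of smoothing strictly decreases the risk, $\Rr^{(1)} < \Rr^{(0)}$, since combined with $\Rr^{(0)} \le \Rr^{(\infty)}$ (which follows here because $\Rr^{(\infty)}$ is essentially $\textup{Var}(y)=1$ by Corollary~\ref{cor:oversmoothing}, while $\Rr^{(0)}$ is the risk of LRR on a genuinely informative Gaussian mixture and is bounded away from $1$ once $\norm{M^\top\mu}>0$ and $\norm\mu$ is large) this immediately yields an optimal $k^\star \ge 1$. So the two things to establish are: (i) a closed-form limiting expression for $\Rr^{(0)}$ in the Gaussian-mixture model, and (ii) a closed-form limiting expression for $\Rr^{(1)}$, followed by a comparison showing the latter is strictly smaller under the hypotheses $\norm{M^\top\mu}>0$ and $\norm\mu$ large.

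First I would compute $\Rr^{(0)}$. Here $z_i \sim \tfrac12(\Nn_\nu + \Nn_{-\nu})$ with $\nu = M^\top\mu$, so $\EE[z z^\top] = \Id_p + \nu\nu^\top$ and $\EE[zy] = \nu$; standard subgaussian concentration (as in Theorem~\ref{thm:reg_risk}) gives $\hat\beta^{(0)} \to (\lambda\Id + \Id + \nu\nu^\top)^{-1}\nu = \tfrac{\nu}{1+\lambda+\norm\nu^2}$ and hence $\Rr^{(0)} \to 1 - \tfrac{\norm\nu^2}{1+\lambda+\norm\nu^2} + o(1)$ with high probability (using independence of train/test and concentration of $\nte^{-1}\norm{\Yte - \Zte^{(0)}\hat\beta^{(0)}}^2$). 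Second, and this is the crux, I need the distribution of the smoothed features $x_i^{(1)} = (L X)_i$. As in Lemma~\ref{lem:reg_smoothed_distribution}, the mean-aggregation operator acts like convolution against the Gaussian kernel normalized by the local density; with a two-component mixture density $p(x) = \tfrac12\Nn_\mu(x) + \tfrac12\Nn_{-\mu}(x)$, the relevant limiting map is $x \mapsto \phi_{\textup{cl.}}(x)$ where the numerator is $\int W(x,x') x'\, p(x')dx'$ and the denominator $\int W(x,x')p(x')dx'$ — both computable in closed form as mixtures of Gaussians via Lemma~\ref{lem:gaussian_integral}. The key qualitative fact I want to extract is the one advertised in the abstract: within each Gaussian component the features contract toward that component's (smoothed) mean \emph{faster} than the two smoothed means approach each other, so after one step the two conditional distributions of $z^{(1)}\mid y$ are better separated \emph{relative to their within-class spread} — i.e.\ the effective signal-to-noise ratio $\norm{\nu^{(1)}}^2/\sigma_{(1)}^2$ strictly exceeds $\norm\nu^2$. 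Plugging the resulting mean $\nu^{(1)} = M^\top\mu^{(1)}$ and within-class covariance $\sigma_{(1)}^2\Id$ into the same LRR risk formula gives $\Rr^{(1)} \to 1 - \tfrac{\norm{\nu^{(1)}}^2}{\sigma_{(1)}^2 + \lambda' + \norm{\nu^{(1)}}^2}$, and the separation-improvement inequality forces this below $\Rr^{(0)}$.

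The main obstacle is controlling the Gaussian-mixture convolution integrals and showing the separation strictly improves. Unlike the single-Gaussian case in Section~\ref{sec:reg}, the density $d(x)$ is now a two-mode mixture, so $\phi_{\textup{cl.}}$ is not linear; I would handle this by restricting attention to the regime where $\norm\mu$ is large, so that near the mode at $+\mu$ the density is dominated by $\Nn_\mu$ (the cross term $\Nn_{-\mu}$ is exponentially small, of order $e^{-\Theta(\norm\mu^2)}$), reducing the analysis locally to the linear computation already done, plus an exponentially small correction. This is presumably why the theorem needs "$\norm\mu$ sufficiently large" — it linearizes the per-component behavior and makes the ``communities shrink faster than they collapse'' statement quantitative: the within-component contraction factor is a fixed constant $(1+1/1)^{-1} = 1/2$-type quantity from the identity covariance, whereas the inter-mean distance $2\norm{\mu^{(1)}}$ shrinks only by a factor that tends to $1$ as $\norm\mu\to\infty$. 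The remaining pieces — concentration of $X^{(1)}$ and its second moment around $\phi_{\textup{cl.}}(x_i)$ and $\phi_{\textup{cl.}}(x_i)\phi_{\textup{cl.}}(x_i)^\top$ uniformly over $i$ (via the chaining Lemma~\ref{lem:gaussian_chaining}), the $\order{\epsilon^{1/5}}$-type error from the additive $\epsilon$ in the kernel \eqref{eq:kernel}, and propagating these through the explicit LRR risk — are routine adaptations of App.~\ref{app:reg_risk_one}, and since we only claim \emph{existence} of $k^\star$ (not an explicit risk formula), we need only the strict inequality at $k=1$, not sharp constants.
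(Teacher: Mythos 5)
Your proposal follows essentially the same route as the paper: reduce to showing $\Rr^{(1)}<\Rr^{(0)}$, compute the limiting risks via the limiting map $\phicl$ (Gaussian-kernel convolution against the mixture density, linearized per component using that the cross-term is $O(e^{-\Theta(\norm{\mu}^2)})$ for large $\norm{\mu}$), and conclude from the variance contraction $\Id\to\Id/4$ at essentially unchanged means that $\Rcl(1/4)<\Rcl(1)$ whenever $\norm{\nu}>0$. The only quibble is your closed form for $\Rr^{(0)}$, which drops a $\lambda\norm{\nu}^2$ cross-term relative to $\Rcl(1)=\frac{(1+\lambda)^2+\norm{\nu}^2}{(1+\lambda+\norm{\nu}^2)^2}$ (they agree only at $\lambda=0$), but this does not affect the comparison or the argument's structure.
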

Note that we have assumed $\norm{\mu}$ to be sufficiently large here. However, we do \emph{not} assume that $\norm{M^\top \mu}$ is large (just non-zero), and the classification problem on the $z_i$ alone may be very difficult.
The rest of this section presents a sketch of proof and intuitions behind this theorem.

\subsection{Sketch of proof and intuition}\label{sec:gmm_proof}

As in the previous section, it will be easy to show that $\Rr^{(0)}< \Rr^{(\infty)}$ with high probability, and we will prove that $\Rr^{(1)} < \Rr^{(0)}$ with high probability. 
Again, we start by providing an expression for $\Rr^{(0)}$. For $s\in \RR_+$, we define the following function
\begin{equation}
    \label{eq:gmm_R}
    \Rcl(s) = \frac{(s+\lambda)^2 + s\norm{\nu}^2}{(s+\lambda + \norm{\nu}^2)^2}
\end{equation}

The next result is proved in App.~\ref{app:gmm_risk}. Recall that $\nu = M^\top \mu$.
\begin{theorem}[Classification risk without smoothing.]\label{thm:gmm_risk}
    With probability at least $1-\rho$,
    \begin{equation}\label{eq:risk_gmm_nonsmooth}
        \Rr^{(0)} = \Rcl(1) + \order{\frac{\norm{\nu}^4 p\sqrt{\log(1/\rho)}}{\sqrt{n}}}
    \end{equation}
\end{theorem}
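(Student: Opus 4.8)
The plan is to pin down the population risk exactly — it will turn out to be $\Rcl(1)$ — and then control the two \emph{independent} sources of randomness, the learned vector $\hat\beta^{(0)}$ and the empirical average defining $\Rr^{(0)}$, by elementary sub-exponential concentration; since $k=0$ there is no nonlinear smoothing operator to deal with and we work directly with $Z$, whose rows are Gaussian $\Nn_{\pm\nu,\Id}$. \textbf{Step 1 (population).} Conditionally on $y$ one has $z=y\nu+g$ with $g\sim\Nn$, hence $\EE[zz^\top]=\Id+\nu\nu^\top$ and $\EE[yz]=\EE[y^2]\nu=\nu$. By Sherman--Morrison the population ridge minimizer is
\[
    \beta_\star \eqdef \pa{(1+\lambda)\Id + \nu\nu^\top}^{-1}\nu = \frac{\nu}{1+\lambda+\norm{\nu}^2}\, ,
\]
and substituting into $\EE[(y-z^\top\beta)^2]=1-2\nu^\top\beta+\beta^\top(\Id+\nu\nu^\top)\beta$ the quartic-in-$\norm{\nu}$ terms cancel, leaving exactly $\tfrac{(1+\lambda)^2+\norm{\nu}^2}{(1+\lambda+\norm{\nu}^2)^2}=\Rcl(1)$. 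This identity is precisely what the function in \eqref{eq:gmm_R} encodes: the argument $s=1$ is the scale of the identity covariance in the directions orthogonal to $\nu$, a scale that smoothing will later replace by a smaller value.

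\textbf{Step 2 (estimation of $\hat\beta^{(0)}$).} Write $\hat\Sigma=\ntr^{-1}\Ztr^\top\Ztr$ and $\hat b=\ntr^{-1}\Ztr^\top\Ytr$. Then $\hat b-\nu=\ntr^{-1}\sum_i y_i g_i$ and $\hat\Sigma-\Id-\nu\nu^\top=\ntr^{-1}\sum_i y_i(\nu g_i^\top+g_i\nu^\top)+(\ntr^{-1}\sum_i g_ig_i^\top-\Id)$ are centred averages of sub-exponential random objects, so a coordinate-wise union bound gives, with probability $1-\rho/2$, $\norm{\hat b-\nu},\ \norm{\hat\Sigma-\Id-\nu\nu^\top}\lesssim \norm{\nu}\,p\sqrt{\log(1/\rho)}/\sqrt n$. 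On that event $\hat\Sigma\succeq\tfrac12\Id$ for $n$ large, so $(\hat\Sigma+\lambda\Id)^{-1}$ and $((1+\lambda)\Id+\nu\nu^\top)^{-1}$ both have operator norm $\order 1$ — unlike the regression case (Thm.~\ref{thm:reg_risk}) no $\lambda_{\min}$ factor appears, since here the identity covariance already lower-bounds the Gram matrix. The resolvent identity $\hat\beta^{(0)}-\beta_\star=(\hat\Sigma+\lambda)^{-1}(\hat b-\nu)-(\hat\Sigma+\lambda)^{-1}(\hat\Sigma-\Id-\nu\nu^\top)\beta_\star$ together with $\norm{\beta_\star}\lesssim 1$ then yields $\norm{\hat\beta^{(0)}-\beta_\star}\lesssim\norm{\nu}\,p\sqrt{\log(1/\rho)}/\sqrt n$, and also the crude bound $\norm{\hat\beta^{(0)}}\lesssim\norm{\hat b}\lesssim\norm{\nu}$.

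\textbf{Step 3 (test risk and conclusion).} Conditionally on the training data, $(y_i-z_i^\top\hat\beta^{(0)})^2$ for $i>\ntr$ are i.i.d.\ squares of Gaussians with bounded mean and variance, hence sub-exponential, so $\Rr^{(0)}=\EE[(y-z^\top\hat\beta^{(0)})^2\mid\Xtr]+\order{\sqrt{\log(1/\rho)}/\sqrt n}$ with probability $1-\rho/2$. Writing $f(\beta)=\EE[(y-z^\top\beta)^2]=1-2\nu^\top\beta+\beta^\top(\Id+\nu\nu^\top)\beta$, the difference of quadratics gives $\abs{f(\hat\beta^{(0)})-f(\beta_\star)}\le 2\norm{\nu}\,\norm{\hat\beta^{(0)}-\beta_\star}+\norm{\Id+\nu\nu^\top}\,\norm{\hat\beta^{(0)}-\beta_\star}\,(\norm{\hat\beta^{(0)}}+\norm{\beta_\star})\lesssim\norm{\nu}^4\,p\sqrt{\log(1/\rho)}/\sqrt n$ using Step 2 (the $\norm{\nu}^4$ being the product of $\norm{\Id+\nu\nu^\top}\sim\norm{\nu}^2$, the $\norm{\nu}$ in $\norm{\hat\beta^{(0)}-\beta_\star}$, and the $\norm{\nu}$ in $\norm{\hat\beta^{(0)}}$). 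Since $f(\beta_\star)=\Rcl(1)$ from Step 1 and the additive $\order{\sqrt{\log(1/\rho)}/\sqrt n}$ fluctuation is dominated, a union bound over the two events of probability $1-\rho/2$ gives the stated bound.

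\textbf{Main obstacle.} No single inequality is difficult; the real work is the bookkeeping — tracking the joint dependence on $\norm{\nu}$, $p$ and $\rho$ consistently through the resolvent perturbation and the quadratic risk, verifying that $\hat\Sigma$ stays uniformly positive definite on the good event so that all resolvents remain $\order 1$, and checking that the test-average fluctuation and any higher-order Taylor remainders are absorbed into the claimed $\order{\norm{\nu}^4\,p\sqrt{\log(1/\rho)}/\sqrt n}$ rate in the asymptotic regime where $n$ dominates the other parameters.
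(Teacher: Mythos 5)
Your proposal is correct and follows essentially the same route as the paper's proof: concentrate the empirical Gram matrix around $\Id+\nu\nu^\top$ and the cross-moment around $\nu$, use the rank-one (Sherman--Morrison) structure to identify $\beta_\star=\nu/(1+\lambda+\norm{\nu}^2)$, and propagate the perturbation through the quadratic test risk to land on $\Rcl(1)$ with the $\norm{\nu}^4 p\sqrt{\log(1/\rho)}/\sqrt{n}$ error. The only cosmetic differences are that you write $z=y\nu+g$ instead of splitting the sample into the two communities and applying Hoeffding to $n_1/\ntr$, and you concentrate the test risk by conditioning on the training set rather than concentrating the test-set second moments; both choices are equivalent in substance.
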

When $\norm{\nu} \to \infty$, the risk goes to $0$, as expected, since the Gaussians get further and further away. However, when $\norm{\nu}\to 0$, which can happen \emph{either} when $\norm{\mu}$ is small or when $M$ becomes orthogonal to $\mu$, the risk goes to $1$, its worst value, for random guesses. Since it is also the variance of $y$, we have indeed $\Rr^{(0)} \leq 1 \approx \Rr^{(\infty)}$ with high probability for $n$ large enough.

Let us now turn to computing the risk after one step of smoothing $k=1$. The main result of this section is the following.
\begin{theorem}[Classification risk with one step of smoothing.]\label{thm:gmm_risk_one}
    With probability at least $1-\rho$,
    \begin{equation}\label{eq:gmm_risk_one}
        \Rr^{(1)} = \Rcl(1/4) + \order{C\pa{ \epsilon^{\frac{1}{4}} + \frac{1}{\epsilon^{3}} e^{-\frac{\norm{\mu}^2}{4}}}} + \order{\frac{C' (\log n) (\sqrt{d + \log(1/\rho)})}{\sqrt{n}}}
    \end{equation}
    where $C = \textup{poly}(\norm{\mu},e^{d})$ and $C' = \textup{poly}(\epsilon^{-1}, \norm{\mu})$.
\end{theorem}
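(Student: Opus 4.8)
The plan is to reproduce the argument of Theorem~\ref{thm:reg_risk_one}, replacing the regression smoothing map $\phireg$ by its Gaussian‑mixture analogue $\phicl$. As in the regression case, $\Rr^{(0)} \le \Rr^{(\infty)}$ holds with high probability (the same concentration argument that followed Theorem~\ref{thm:gmm_risk}, using $\Rcl(1) \le 1 \approx \Rr^{(\infty)}$), so the whole content of the statement is the expression for $\Rr^{(1)}$; combined with $\Rr^{(0)} = \Rcl(1)$ and the elementary fact that, for $\lambda$ small, $\Rcl(s) = s/(s+\norm{\nu}^2) + O(\lambda)$ is increasing in $s$ — hence $\Rcl(1/4) < \Rcl(1)$ when $\norm{\nu} > 0$ — it yields Theorem~\ref{thm:gmm_informal}. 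Everything therefore reduces to a controlled description of the smoothed features $z_i^{(1)} = M^\top x_i^{(1)}$, the $x_i^{(1)}$ being the rows of $X^{(1)} = LX$.

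First I would identify the population smoothing map. Writing $P_X = \tfrac12\Nn_\mu + \tfrac12\Nn_{-\mu}$ and applying the Gaussian product identities of Lemma~\ref{lem:gaussian_integral}, one gets $\int W_g(x,y)\,\Nn_{\pm\mu}(y)\,dy = q_\pm(x)$ and $\int W_g(x,y)\,y\,\Nn_{\pm\mu}(y)\,dy = q_\pm(x)\tfrac{x\pm\mu}{2}$ with $q_\pm(x) \eqdef 2^{-d/2}e^{-\frac14\norm{x\mp\mu}^2}$; since the mixture is centered, the additive $\epsilon$ does not contribute to the numerator, and
\begin{equation}
    \phicl(x) \eqdef \frac{\int W(x,y)\,y\,dP_X(y)}{\int W(x,y)\,dP_X(y)} = \frac{q_+(x)(x+\mu) + q_-(x)(x-\mu)}{4\epsilon + 2q_+(x) + 2q_-(x)}\, .
\end{equation}
Two facts are used repeatedly: $\phicl(x)$ is a genuine convex combination of $0$, $\tfrac{x+\mu}{2}$ and $\tfrac{x-\mu}{2}$, so $\norm{\phicl(x)} \le \tfrac12(\norm{x}+\norm{\mu})$ (all Gaussian moments finite); and for $x\sim\Nn_\mu$ (the case $\Nn_{-\mu}$ being symmetric) $\phicl(x) - \tfrac{x+\mu}{2} = -\bigl(\epsilon(x+\mu)+q_-(x)\mu\bigr)/\bigl(2\epsilon + q_+(x) + q_-(x)\bigr)$ with $q_-(x)/q_+(x) = e^{-x^\top\mu}$, so that the ideal approximation $x_i^{(1)} \approx \tfrac12(x_i + y_i\mu)$ — which turns $z_i^{(1)} = M^\top x_i^{(1)}$ into an approximate sample from $\Nn_{\pm\nu,\,\tfrac14\Id_p}$ — fails only on the rare events where $\epsilon$ is not negligible against the self‑affinity $q_{y_i}(x_i) \sim 2^{-d/2}e^{-d/4}$ or where the cross term $e^{-y_i x_i^\top\mu}$ is not exponentially small in $\norm{\mu}^2$.

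The proof then has three steps. \textbf{(a)} A concentration lemma in the style of Lemma~\ref{lem:reg_smoothed_distribution}: using $\sum_j W(x_i,x_j) \ge n\epsilon$ to lower‑bound degrees and the chaining bound of Lemma~\ref{lem:gaussian_chaining} applied to the subgaussian summands $W(x_i,x_j)x_j$ and $W(x_i,x_j)x_jx_j^\top$, with probability $1-\rho$ one has, uniformly in $i$, that $\norm{x_i^{(1)} - \phicl(x_i)}$ and the corresponding rank‑one deviation are $\lesssim C'(\log n)\sqrt{d+\log(1/\rho)}/\sqrt n$ with $C' = \textup{poly}(\epsilon^{-1},\norm{\mu})$ — the last error term. \textbf{(b)} A deterministic step bounding $\EE_{x\sim\Nn_{\pm\mu}}\norm{\phicl(x) - \tfrac12(x\pm\mu)}^2$ and the analogous rank‑one quantity: split the expectation over $\{\norm{x\mp\mu}^2 \le T\}$ and its complement, use the sharp bias formula on the first region (where $\epsilon/q_\pm(x) \le \epsilon 2^{d/2}e^{T/4}$ and $e^{-x^\top\mu} \le e^{-\norm{\mu}^2 + \sqrt T\norm{\mu}}$) and the crude linear bound together with a $\chi^2_d$ tail on the second, and optimize $T$. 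This gives $\tfrac1{\ntr}\sum_i M^\top\phicl(x_i)\phicl(x_i)^\top M = \tfrac14\Id_p + \nu\nu^\top + (\text{err})$ and $\tfrac1{\ntr}\sum_i y_i M^\top\phicl(x_i) = \nu + (\text{err})$ (plus their test‑side analogues), where $(\text{err})$ is a sum of an empirical‑average fluctuation absorbed into the $C'/\sqrt n$ term and a bias of order $C(\epsilon^{1/4} + \epsilon^{-3}e^{-\norm{\mu}^2/4})$ with $C = \textup{poly}(\norm{\mu}, e^d)$. \textbf{(c)} Feeding these approximate second moments into the computation underlying Theorem~\ref{thm:gmm_risk} — which only uses $\EE[zz^\top]$, $\EE[yz]$ and their test counterparts — now with within‑class variance $\tfrac14$ instead of $1$, yields $\hat\beta^{(1)} \approx \nu/(\tfrac14 + \lambda + \norm{\nu}^2)$ and $\Rr^{(1)} = \Rcl(1/4) + (\text{err})$, which is the claim; the bookkeeping mirrors App.~\ref{app:gmm_risk} and App.~\ref{app:reg_risk_one}.

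The main obstacle is step \textbf{(b)}: controlling the bias $\phicl(x) - \tfrac12(x\pm\mu)$ in a way that is strong enough yet uniform over the $n$ samples and free of $n$. The ratios $\epsilon/q_\pm(x) = \epsilon 2^{d/2}e^{\frac14\norm{x\mp\mu}^2}$ and $e^{-x^\top\mu}$ have, respectively, only barely‑integrable and non‑negligible‑tail behaviour under $x\sim\Nn_{\pm\mu}$ (for instance $\EE[e^{\frac{q}{4}\chi^2_d}] = \infty$ once $q\ge 2$), while the denominator $2\epsilon + q_+ + q_-$ — and its empirical counterpart — must be kept bounded below throughout; choosing $T$ to balance $\epsilon^2 4^d e^{T/2}$ against the $\chi^2_d$ tail $e^{-\Theta(T)}$ is what produces the unusual exponent $\epsilon^{1/4}$, the $\textup{poly}(e^d)$ blow‑up (a typical point of $\Nn_\mu$ has self‑affinity only $\sim e^{-\Theta(d)}$, so the additive $\epsilon$ is harmless only when $\epsilon \ll e^{-\Theta(d)}$), and — through the $e^{-\norm{\mu}^2 + \sqrt T\norm{\mu}}$ contribution of the cross term — the $e^{-\norm{\mu}^2/4}$ factor. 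Once this estimate is in hand, the remaining steps are concentration bookkeeping strictly parallel to the regression case.
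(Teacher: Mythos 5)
Your proposal follows essentially the same route as the paper: a chaining-based concentration lemma replacing $x_i^{(1)}$ by $\phicl(x_i)$ (Lemma~\ref{lem:gmm_smoothed_distribution}), a bias bound on $\EE\norm{\phicl(x)-\tfrac12(x\pm\mu)}^2$ obtained by truncating to balls around $\pm\mu$ and optimizing the radius against the $\chi^2_d$ tail (Lemma~\ref{lem:gmm_phi_moments}, which produces exactly the $\textup{poly}(e^d)\epsilon^{1/4}$ and $\epsilon^{-3}e^{-\norm{\mu}^2/4}$ terms), and then re-running the $k=0$ risk computation with within-class covariance $\Id/4$ to get $\Rcl(1/4)$. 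The decomposition, the key lemmas, and the source of each error term all match the paper's proof.
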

This theorem show that $\Rr^{(1)} \approx \Rcl(1/4)$ with two additional error terms. First of all, a quick function study shows that $\Rcl(1/4) < \Rcl(1)$ when $\norm{\nu} >0$, which shows Thm.~\ref{thm:gmm_informal} when the errors are small enough.
The last error term goes to $0$ when $n \to \infty$ and is controlled via concentration inequalities. The first one is small when $\epsilon$ is small and $\norm{\mu}$ is large enough. We remark that, unlike the previous section where the error terms vanished in the limit $\epsilon \to 0, n\to \infty$, here there is a non-zero error term due to $\norm{\mu}$ whose explicit expression is still open. Hence, for instance, the discrepancy between the empirical observations and the theory in Fig.~\ref{fig:gmm} compared to Fig.~\ref{fig:reg}. 
Note that, as in the previous section, we need at least $\epsilon \lesssim e^{-d}$ and $n \gtrsim e^d$. However, here we also need $\norm{\mu} \gtrsim \sqrt{d}$. This rate is similar to early analyses of Gaussian Mixture learning \cite{Dasgupta1999}, although they have been greatly improved since \cite{Belkin2010a}.

As previously, we define here $d_\mu(x) \eqdef 2^{-d/2}e^{-\frac{\norm{x-\mu}^2}{4}}$, and
\begin{equation}\label{eq:gmm_phi}
    \phicl(x) = \frac{d_\mu(x) \pa{\frac{x+\mu}{2}} + d_{-\mu}(x) \pa{\frac{x-\mu}{2}}}{2\epsilon + d_\mu(x) + d_{-\mu}(x)}
\end{equation}
The following result is similar to Lemma \ref{lem:reg_smoothed_distribution} and is shown in App.~\ref{app:gmm_risk_one}.
\begin{lemma}\label{lem:gmm_smoothed_distribution}
    With probability at least $1-\rho$,
    \begin{equation}
        \left. \begin{aligned}
            \sup_{i=1,\ldots,n} \norm{x^{(1)}_i - \phicl(x_i)} & \\
            \sup_{i=1,\ldots,n} \norm{x_i^{(1)} (x_i^{(1)})^\top - \phicl(x_i)\phicl(x_i)^\top} &\end{aligned} \right\rbrace
            \lesssim \frac{\textup{poly}(\epsilon^{-1})\log n (\sqrt{d} + \sqrt{\log(1/\rho)})}{\sqrt{n}}
    \end{equation}
\end{lemma}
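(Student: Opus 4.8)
The plan is to mirror the proof of Lemma~\ref{lem:reg_smoothed_distribution}: view each smoothed latent variable as an empirical average and compare it to its population counterpart. One step of mean aggregation reads $x_i^{(1)} = (LX)_i = \mathrm{num}_i/\mathrm{den}_i$, where $\mathrm{num}_i = \tfrac1n\sum_j (\epsilon + W_g(x_i,x_j))\,x_j$ and $\mathrm{den}_i = \tfrac1n\sum_j (\epsilon + W_g(x_i,x_j)) \ge \epsilon$; the deterministic lower bound $\mathrm{den}_i \ge \epsilon$ is exactly why the kernel \eqref{eq:kernel} carries a $+\epsilon$, and it is the source of the $\mathrm{poly}(\epsilon^{-1})$ factors. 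Conditioning on $x_i = x$, the other $x_j$ remain iid from $\tfrac12(\Nn_\mu + \Nn_{-\mu})$, and Lemma~\ref{lem:gaussian_integral} supplies the closed forms $\EE_{x'\sim\Nn_{\nu,\Id}} e^{-\norm{x-x'}^2/2} = d_\nu(x)$ and $\EE_{x'\sim\Nn_{\nu,\Id}}\pa{e^{-\norm{x-x'}^2/2}\, x'} = d_\nu(x)\tfrac{x+\nu}{2}$. Since $\EE[x_j] = 0$ the $\epsilon$-term in $\mathrm{num}_i$ drops, and one gets (up to an $O(1/n)$ correction from the $j=i$ term) $\EE[\mathrm{den}_i\mid x_i] \approx \epsilon + \tfrac12(d_\mu(x_i) + d_{-\mu}(x_i))$ and $\EE[\mathrm{num}_i\mid x_i] \approx \tfrac12 d_\mu(x_i)\tfrac{x_i+\mu}{2} + \tfrac12 d_{-\mu}(x_i)\tfrac{x_i-\mu}{2}$, whose ratio is precisely $\phicl(x_i)$ from \eqref{eq:gmm_phi}. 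So the lemma reduces to concentrating $\mathrm{num}_i$ and $\mathrm{den}_i$ around these conditional means, uniformly over $i = 1,\dots,n$.

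For that, first restrict to the event $\mathcal G = \{\,\forall j,\ \norm{x_j} \le R\,\}$ with $R \lesssim \norm{\mu} + \sqrt d + \sqrt{\log(n/\rho)}$, which holds with probability $1 - \rho/2$ by Gaussian norm concentration and a union bound; to preserve independence, run the argument with the truncated variables $x_j\,\mathbf{1}\{\norm{x_j}\le R\}$, which coincide with $x_j$ on $\mathcal G$ and whose means differ from the true ones only by a (negligible) Gaussian tail mass. On $\mathcal G$ the summands $(\epsilon+W_g(x_i,x_j))x_j$ are bounded by $(\epsilon+1)R$, so Hoeffding handles the scalar sum $\mathrm{den}_i$ and a vector Bernstein / sub-Gaussian chaining estimate (Lemma~\ref{lem:gaussian_chaining}) handles the vector sum $\mathrm{num}_i$; taking each at confidence $1-\rho/(2n)$ and union-bounding over $i$ yields deviations of order $R(\sqrt d + \sqrt{\log(n/\rho)})/\sqrt n$ simultaneously over all nodes (the bare $\log n$ in the statement being a loose catch-all for the union-bound and truncation logarithms). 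Finally, using $\mathrm{den}_i, \EE[\mathrm{den}_i\mid x_i] \ge \epsilon$, the bound $\norm{\EE[\mathrm{num}_i\mid x_i]} \lesssim 1 + \norm{\mu}$, and the elementary inequality $\norm{\tfrac{a}{b} - \tfrac{a'}{b'}} \le \tfrac1b\norm{a-a'} + \tfrac{\norm{a'}}{bb'}\,\abs{b-b'}$, one converts the numerator/denominator deviations into the stated bound on $\norm{x_i^{(1)} - \phicl(x_i)}$. The outer-product bound then follows immediately: $\norm{x_i^{(1)}} = \norm{\mathrm{num}_i}/\mathrm{den}_i \le R$ (each $\norm{x_j}\le R$) and likewise $\norm{\phicl(x_i)} \le R$, so $\norm{x_i^{(1)}(x_i^{(1)})^\top - \phicl(x_i)\phicl(x_i)^\top} \le \norm{x_i^{(1)} - \phicl(x_i)}\,(\norm{x_i^{(1)}} + \norm{\phicl(x_i)}) \le 2R\,\norm{x_i^{(1)} - \phicl(x_i)}$, the extra $R$ being absorbed into the polynomial/logarithmic factors.

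The main obstacle is the vector concentration of $\mathrm{num}_i$ with the right dimension and tail dependence while staying uniform over all $n$ nodes: the $x_j$ are unbounded Gaussians, so no bounded-difference inequality applies directly, and a crude coordinatewise union bound would cost spurious powers of $d$; truncating at level $R$ and then chaining over the resulting sub-Gaussian sum is what gives the clean $\sqrt d$-type rate. Everything else — the Gaussian-integral identification of the limit, Hoeffding for the denominator, and the ratio and outer-product manipulations — is routine. A genuine but purely quantitative nuisance is that $\phicl$ is normalized only by a quantity of size $\epsilon$ rather than a constant, so every comparison constant inevitably picks up powers of $\epsilon^{-1}$; this is exactly why the downstream Theorem~\ref{thm:gmm_risk_one} ends up requiring $\epsilon$ small (and hence $n$ large) exponentially in $d$.
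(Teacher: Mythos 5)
Your proposal is correct and follows essentially the same route as the paper: identify $\phicl(x_i)$ as the ratio of the population Gaussian integrals from Lemma~\ref{lem:gaussian_integral}, split the deviation into numerator and denominator terms using the deterministic lower bound $\epsilon$ on the degree (whence the $\textup{poly}(\epsilon^{-1})$ factors), concentrate each uniformly over $i$, and reduce the outer-product bound to the vector bound. The only cosmetic differences are that the paper achieves uniformity over $i$ via a Dudley-chaining bound over a ball of radius $\sqrt{\log n}$ (Lemma~\ref{lem:gaussian_chaining}) rather than your conditioning-plus-union-bound variant, and it explicitly splits the sample by community and recenters each half by $\pm\mu$ before applying the chaining lemma (which is stated for centered Gaussians); both of these are minor and your version also invokes the same lemma.
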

Let us now examine $\phicl(x)$ closer. In the limit $\epsilon \to 0$, $\phicl(x)$ is a convex combination of $(x+\mu)/2$ and $(x-\mu)/2$. Hence, when $x \sim \Nn_\mu$, with high probability $x$ is close to $\mu$ and $d_\mu(x) \gg d_{-\mu}(x)$, and in this case, $\phicl(x) \approx \frac{x+\mu}{2}$, whose distribution is $\Nn_{\mu, \Id/4}$. The same reasoning applies to the other community. Hence, up to some error $\order{e^{-\norm{\mu}^2/4}}$ due to the communities getting closer to each other, \textbf{the smoothed features in each community have the same mean but a reduced variance $\Id/4$}, thus the limit risk $\Rcl(1/4)$ in our limit expression for $\Rr^{(1)}$. In other words, \textbf{the communities shrink faster than they collapses together}, and this reflects on the projected node features. An illustration of this phenomenon is given in Fig.~\ref{fig:gmm}. All proof details are in App.~\ref{app:gmm_risk_one}.

\begin{figure}
    \centering
    \begin{subfigure}[b]{0.69\textwidth}
        \includegraphics[width=.32\textwidth]{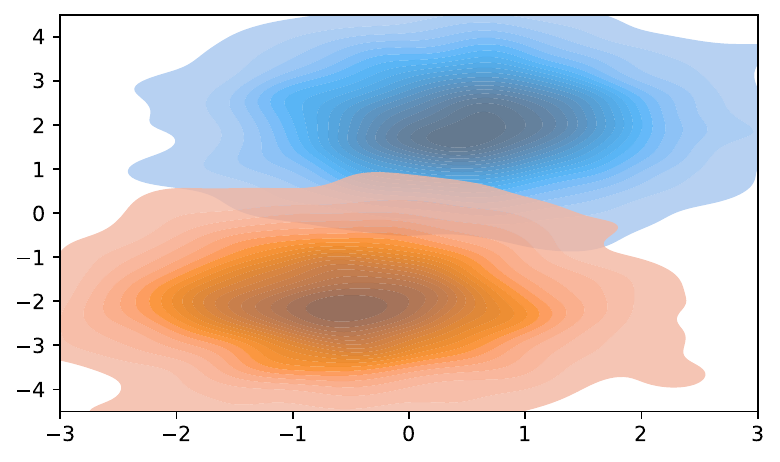}
        \includegraphics[width=.32\textwidth]{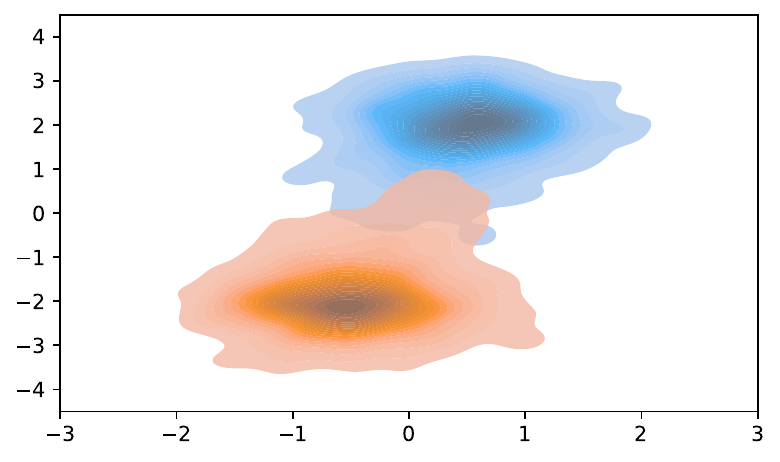}
        \includegraphics[width=.32\textwidth]{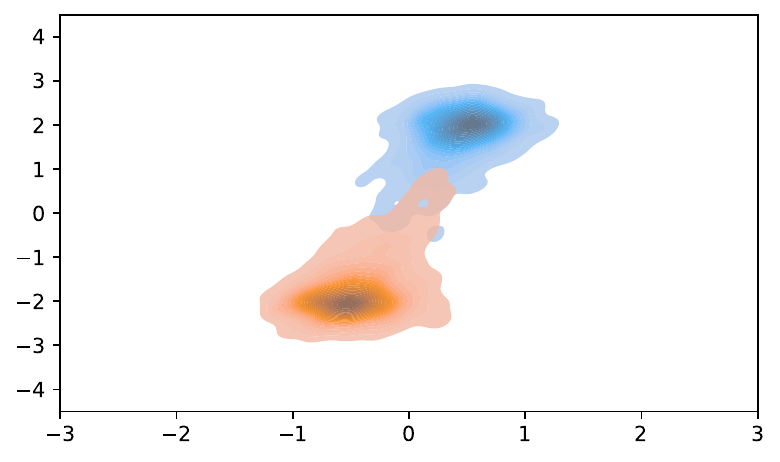} \\
        \includegraphics[width=.32\textwidth]{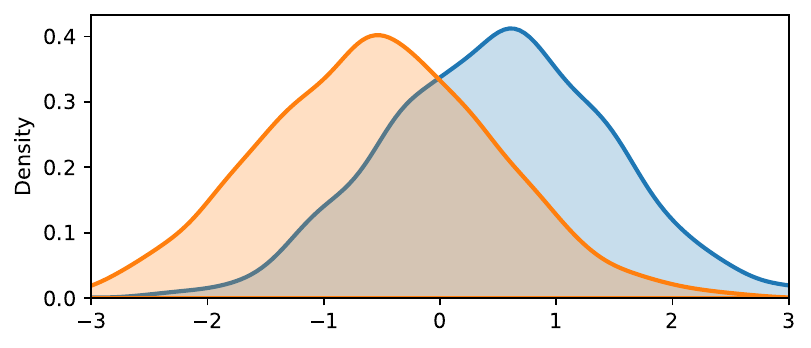}
        \includegraphics[width=.32\textwidth]{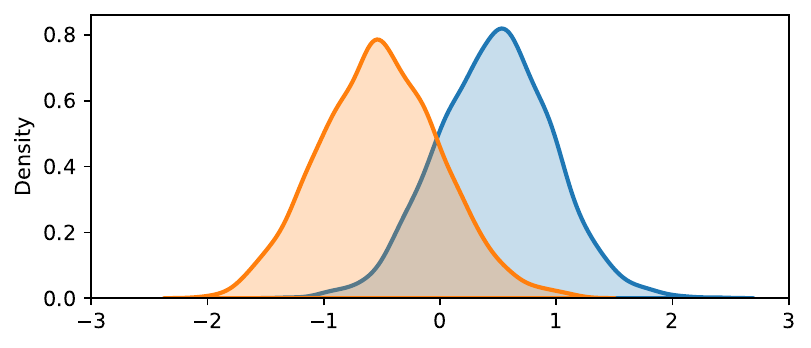}
        \includegraphics[width=.32\textwidth]{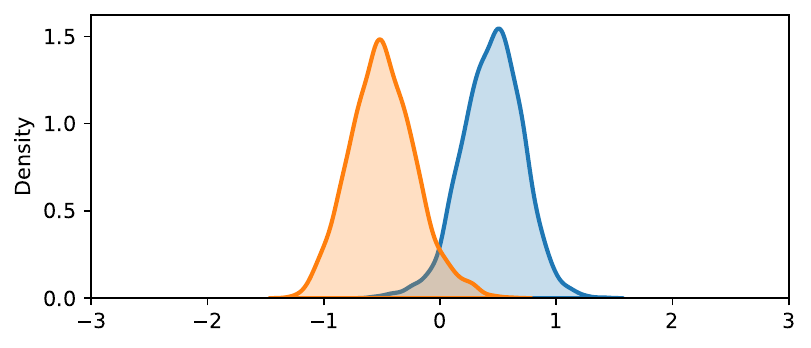}
    \end{subfigure}
    \begin{subfigure}[b]{0.28\textwidth}
        \includegraphics[width=\textwidth]{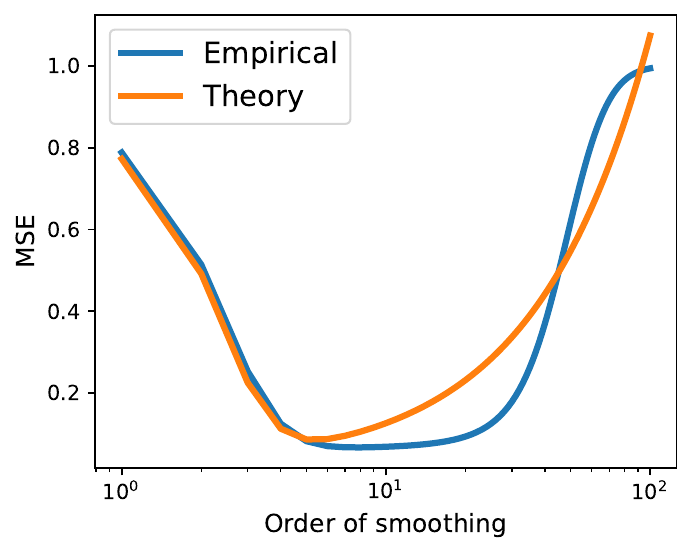}
    \end{subfigure}

    \caption{\small Illustration of mean aggregation smoothing on a classification task with two Gaussians with dimensions $d=2$, $p=1$, where $M$ projects on the first coordinate. \textbf{First three figures on the left, top:} density of \emph{unobserved} latent variables $X^{(k)}$ in dimension $d=2$; \textbf{bottom:} density of observed node features $Z^{(k)} = X^{(k)}M$ in dimension $p=1$. \textbf{From left to right}, three order of smoothing $k=0, 1$ and $2$ are represented (recall that the smoothing is agnostic to the labels, we cannot perform in-community smoothing). \textbf{Figure on the right:} comparison of empirical and theoretical MSE given by \eqref{eq:gmm_risk_k} with respect to order of smoothing $k$. For low $k$, the node features communities are indeed more and more separated, and learning improves.}\label{fig:gmm}
\end{figure}

\subsection{Numerical illustration}\label{sec:gmm_example}


In light of the proof of the theorem above, when $x_i$ belongs to the first community and $x^{(1)}_i \approx \phicl(x_i) \approx \frac{x_i + \mu}{2}$, applying a second smoothing would transform it to $\frac{\phicl(x_i)+ \mu}{2} \approx \frac{x_i+3\mu}{4}$, that is, it keeps the same mean but now has variance $\Id/16$. If we look at the proof, the error term $\order{e^{-\norm{\mu}^2/4}}$ would become in this case $e^{-\frac{\norm{\mu}^2}{2(1+1/4)}}$. While this expression is far from being exact and we do not a rigorous proof here (which seems far more complex than the case $k=1$), we can infer some approximate expression:
\begin{equation}\label{eq:gmm_risk_k}
    \Rr^{(k)} \approx \Rcl(4^{-k}) + \order{\sum_{\ell=0}^{k-1}e^{-\frac{\norm{\mu}^2}{2(1+4^{-\ell})}}}
\end{equation}
Unlike the expression \eqref{eq:reg_risk_k}, the term $\Rcl(4^{-k})$ is strictly decreasing when $k$ increases. Oversmoothing is modelled by the error term, for which we do not have an exact expression, and for which we suspect that the quality of approximation degrades as $k$ increases. Nevertheless, we evaluate this expression on an example in Fig.~\ref{fig:gmm} (with an adjusted multiplicative constant for the error term in \eqref{eq:gmm_risk_k}) and find that it is a reasonably good approximation, at least for small $k$.

\section{Conclusion and outlooks}\label{sec:conc}

While the oversmoothing phenomenon $k\to \infty$ has been well characterized, until now there has been no theoretical studies that rigorously modelled both the benefits of finite smoothing before oversmoothing kicks in. In this paper, we adopted a simplified context of linear GNNs with mean aggregation and random graphs with partially observed latent variables, and proved on two representative examples the co-existence of both phenomena. We identified two mechanisms for the benefits of mean aggregation: it tends to shrink noisy principal components faster than meaningful ones, and it tends to gather nodes of the same community faster than they collapses together. We obtained theoretical expressions up to some error terms that matched the numerics quite well on simple synthetic data.

There are many outlooks to this work. First and foremost, deriving inspiration from our theoretical observations to design better methods of setting the order of smoothing in practical application is a major challenge. As seen in Fig.~\ref{fig:cora} in the introduction, both mechanisms that we identified seem to come into play on real data. However, many quantities appearing in the risks \eqref{eq:reg_risk_k} and \eqref{eq:gmm_risk_k} need to be estimated. Second, extending our theory to more complex loss functions (especially for classification) and non-linear GNNs is crucial. Finally, our work is a step towards a better understanding of \emph{the relationship between node features and graph structure}, which is at the heart of (over)smoothing, heterophily, and all graph machine learning methods. A more general theory, and more realistic models of random graphs to analyze it, is still an open question.

\subsection*{Acknowledgment}

This work was supported by ANR JCJC GRandMa (ANR-21-CE23-0006). NK thanks S. Vaiter for inspiring discussions.

\clearpage

\bibliographystyle{plain}
\bibliography{../../library}

\section*{Checklist}


\begin{enumerate}

\item For all authors...
\begin{enumerate}
  \item Do the main claims made in the abstract and introduction accurately reflect the paper's contributions and scope?
    \answerYes{}
  \item Did you describe the limitations of your work?
    \answerYes{}
  \item Did you discuss any potential negative societal impacts of your work?
    \answerNA{}
  \item Have you read the ethics review guidelines and ensured that your paper conforms to them?
    \answerYes{}
\end{enumerate}

\item If you are including theoretical results...
\begin{enumerate}
  \item Did you state the full set of assumptions of all theoretical results?
    \answerYes{}
        \item Did you include complete proofs of all theoretical results?
    \answerYes{In the Appendix.}
\end{enumerate}

\item If you ran experiments...
\begin{enumerate}
  \item Did you include the code, data, and instructions needed to reproduce the main experimental results (either in the supplemental material or as a URL)?
    \answerYes{}
  \item Did you specify all the training details (e.g., data splits, hyperparameters, how they were chosen)?
    \answerNA{}
        \item Did you report error bars (e.g., with respect to the random seed after running experiments multiple times)?
    \answerNo{We consider toy experiments, on which variance is insignificantly low.}
        \item Did you include the total amount of compute and the type of resources used (e.g., type of GPUs, internal cluster, or cloud provider)?
    \answerNo{}
\end{enumerate}

\item If you are using existing assets (e.g., code, data, models) or curating/releasing new assets...
\begin{enumerate}
  \item If your work uses existing assets, did you cite the creators?
    \answerNA{}
  \item Did you mention the license of the assets?
    \answerNA{}
  \item Did you include any new assets either in the supplemental material or as a URL?
    \answerNA{}
  \item Did you discuss whether and how consent was obtained from people whose data you're using/curating?
    \answerNA{}
  \item Did you discuss whether the data you are using/curating contains personally identifiable information or offensive content?
    \answerNA{}
\end{enumerate}

\item If you used crowdsourcing or conducted research with human subjects...
\begin{enumerate}
  \item Did you include the full text of instructions given to participants and screenshots, if applicable?
    \answerNA{}
  \item Did you describe any potential participant risks, with links to Institutional Review Board (IRB) approvals, if applicable?
    \answerNA{}
  \item Did you include the estimated hourly wage paid to participants and the total amount spent on participant compensation?
    \answerNA{}
\end{enumerate}

\end{enumerate}

\clearpage
\appendix

\section*{Appendix}

This appendix contains all proof of our results. We start by Linear Regression in App.~\ref{app:reg}, then classification in App.~\ref{app:gmm}. App.~\ref{app:technical} contains technical Lemmas. At their core, the proofs are a combination of chaining concentration inequalities for subgaussian variables and derivations on Gaussian distributions.

\section{Linear Regression}\label{app:reg}

\subsection{Proof of Theorem \ref{thm:reg_risk}}\label{app:reg_risk}

\begin{proof}
    Let us begin by the concentration of the optimal $\hat \beta =(\lambda \Id + \Ztr^\top \Ztr/n)^{-1} \Ztr^\top \Ytr/n$.
    \begin{align*}
        \frac{1}{\ntr}\Ztr^\top \Ztr &= \frac{1}{\ntr} \sum_{i=1}^{\ntr} z_i z_i^\top  
    \end{align*}
    By an application of \cite[Corollary 5.50]{Vershynin2009a}, which is a concentration inequality for covariance estimates of subgaussian vectors, we get that with probability at least $1-\rho$,
    \[
        \norm{\frac{1}{\ntr} \sum_{i} z_i z_i^\top - M^\top \Sigma M} \lesssim \frac{p\sqrt{\log(1/\rho)}}{\sqrt{n}}
    \]
    since $\ntr = \order{n}$. In particular, for $n$ large enough, we get $\lambda_{\min}(\frac{1}{\ntr} \sum_{i} z_i z_i^\top) \geq \lambda_{\min}(M^\top \Sigma M)/2$.

    Similarly,
    \begin{align*}
        \frac{1}{\ntr} \Ztr^\top \Ytr &= \frac{M^\top}{\ntr} \Xtr^\top \Xtr \beta^\star
    \end{align*}
    and using the same concentration on the covariance of $X$ we obtain 
    \[
        \norm{\frac{1}{\ntr} \Ztr^\top \Ytr - M^\top \Sigma \beta^\star} \lesssim \frac{\norm{\beta^\star}p\sqrt{\log(1/\rho)}}{\sqrt{n}}
    \]
    At the end of the day
    \begin{align*}
        \norm{\hat \beta - (\lambda\Id + M^\top \Sigma M)^{-1} M^\top \Sigma \beta^\star} & \lesssim \frac{\norm{\beta^\star} p\sqrt{\log(1/\rho)}}{(\lambda+\lambda_{\min}(M^\top \Sigma M)) \sqrt{n}}
    \end{align*}


    Let us now compute the limit of the test risk:
    \begin{align*}
        \frac{1}{\nte} \norm{\Yte - \Zte^\top \hat \beta}^2
        &= (\beta^\star)^\top \frac{1}{\nte} \Xte^\top \Xte \beta^\star - 2 \hat \beta^\top \pa{\frac{1}{\nte} \Zte^\top \Yte} + \hat \beta^\top \pa{\frac{1}{\nte} \Zte^\top \Zte} \hat \beta^\top
    \end{align*}
    By a reasoning identical to the one above on $\frac{1}{\nte} \Xte^\top \Xte \approx \Sigma$, $\frac{1}{\nte} \Zte^\top \Yte \approx M^\top \Sigma \beta^\star$ and $\frac{1}{\nte} \Zte^\top \Zte \approx M^\top \Sigma M$, and using $\norm{(\lambda\Id + M^\top \Sigma M)^{-1} M^\top \Sigma \beta^\star} \leq \norm{\beta^\star}$ (which can be seen using an SVD decomposition of $M\Sigma^\frac12$) and $\norm{M} \leq \norm{1}$, after some computation we obtain 
    \begin{align*}
        \Rr^{(0)}
        &= 
        (\Sigma^\frac12 \beta^\star)^\top \pa{ \Id - \Sigma^\frac12 M^\top(\lambda \Id + M^\top \Sigma M)^{-1} M \Sigma^\frac12  }^2 \Sigma^\frac12 \beta^\star
        \\
        &\qquad + \order{\frac{\norm{\Sigma} \norm{\beta^\star}^2 d\sqrt{\log(1/\rho)}}{(\lambda+\lambda_{\min}(M^\top \Sigma M)) \sqrt{n}}}
    \end{align*}
    The first term is $\Rreg(\Sigma)$, we use a union bound over all these inequalities to conclude the proof.
\end{proof}

\subsection{Proof of Theorem \ref{thm:reg_risk_one}}\label{app:reg_risk_one}

We start with the proof of Lemma \ref{lem:reg_smoothed_distribution}. A small reminder on subgaussian variables is given in App.~\ref{app:technical}.


\begin{proof}[Proof of Lemma \ref{lem:reg_smoothed_distribution}]

    The proof relies on chaining concentration inequalities for subgaussian variables and properties of Gaussian distributions (Lemma \ref{lem:gaussian_integral} and \ref{lem:gaussian_chaining} in App.~\ref{app:technical}).

    Note that $\norm{\phireg(x)}_{\Sigma^{-1}} \leq \frac{\norm{\Sigma^\frac12 (\Id+\Sigma)^{-\frac12}}}{\epsilon} d(x) \norm{x}_{(\Id + \Sigma)^{-1}} \lesssim \frac{1}{\epsilon}$. Moreover, by Lemma \ref{lem:gaussian_integral}, $\EE W_g(x,X) = d(x)$ and $\EE W_g(x,X)X = d(x)(\Sigma^{-1} + \Id)^{-1} x$.

    We decompose
    \begin{align*}
        \norm{x_i^{(1)} - \phireg(x_i)}_{\Sigma^{-1}} &= \norm{\frac{\epsilon \sum_j x_j+\sum_j W_g(x_i, x_j)x_j}{n\epsilon + \sum_j W_g(x_i, x_j)} - \phireg(x_i)}_{\Sigma^{-1}} \\
        &= \norm{\frac{\epsilon \frac{1}{n}\sum_j x_j + \frac{1}{n}\sum_j W_g(x_i, x_j)x_j}{\epsilon + \frac{1}{n}\sum_j W_g(x_i, x_j)}
        -
        \frac{d(x)(\Sigma^{-1} + \Id)^{-1} x}{d(x)+\epsilon} 
        }_{\Sigma^{-1}} \\
        &\leq \frac{1}{n} \norm{\sum_j x_j}_{\Sigma^{-1}}
        +
        \norm{\frac{\frac{1}{n}\sum_j W_g(x_i, x_j)x_j - d(x)(\Sigma^{-1} + \Id)^{-1} x}{\epsilon + \frac{1}{n}\sum_j W_g(x_i, x_j)}}_{\Sigma^{-1}} \\
        &\qquad + \norm{d(x)(\Sigma^{-1} + \Id)^{-1} x}_{\Sigma^{-1}}\abs{\frac{1}{\epsilon + \frac{1}{n}\sum_j W_g(x_i, x_j)}
        -
        \frac{1}{\epsilon + d(x)}}
        \\
        &\leq \frac{1}{n} \norm{\sum_j x_j}_{\Sigma^{-1}}
        +
        \frac{1}{\epsilon}\norm{\frac{1}{n}\sum_j W_g(x_i, x_j)x_j - d(x)(\Sigma^{-1} + \Id)^{-1} x}_{\Sigma^{-1}} \\
        &\qquad + \frac{1}{\epsilon^2}\abs{\frac{1}{n}\sum_j W_g(x_i, x_j)
        -
        d(x)}
    \end{align*}

    For the first term, applying Lemma \ref{lem:gaussian_chaining} with $W=1$, with probability $1-\rho$ we have
    \begin{align*}
        \frac{1}{n} \norm{\sum_j x_j} &\lesssim \frac{\log n \pa{\sqrt{d} + \sqrt{\log(1/\rho)}}}{\sqrt{n}}
    \end{align*}

    Similarly, since $W_g$ is $C_L$ Lipschitz in the first variable with respect to $\norm{\cdot}_{\Sigma^{-1}}$ with $C_L \lesssim \norm{\Sigma^\frac12}$, applying Lemma \ref{lem:gaussian_chaining} we get
    \begin{align*}
        \norm{\frac{1}{n}\sum_j W_g(x_i, x_j)x_j - d(x)(\Sigma^{-1} + \Id)^{-1} x}_{\Sigma^{-1}} \lesssim \frac{\log n \norm{\Sigma^\frac12} \pa{\sqrt{d} + \sqrt{\log(1/\rho)}}}{\sqrt{n}}
    \end{align*}
    and
    \[
        \abs{\frac{1}{n}\sum_j W_g(x_i, x_j)-d(x)} \lesssim\frac{\sqrt{\log n} \norm{\Sigma^\frac12} \pa{\sqrt{d} + \sqrt{\log(1/\rho)}}}{\sqrt{n}}
    \]
    which concludes the proof of the first inequality. The second is obtained by decomposing and using $\norm{\phireg(x)}_{\Sigma^{-1}} \lesssim 1/\epsilon$.
\end{proof}

We will also need the following Lemma, to bound the deviation brought by $\epsilon$ in the expression for $\phireg$. 
\begin{lemma}\label{lem:reg_phi_moments}
    We have
    \begin{align*}
        \left. \begin{aligned}
        \norm{\EE\phireg(x)x^\top - (\Sigma^{(1)})^\frac12 \Sigma^\frac12} & \\
        \norm{\EE \phi(x) \phireg(x)^\top - \Sigma^{(1)}} &
        \end{aligned} \right \rbrace
        \lesssim \textup{poly}(e^d, \abs{\Id + \Sigma}) \epsilon^{1/5}
    \end{align*}
\end{lemma}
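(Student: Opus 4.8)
The plan is to reduce both quantities to a single scalar expectation and then control that by a truncation argument tuned to produce the $\epsilon^{1/5}$ rate. Throughout, write $A \eqdef (\Sigma^{-1}+\Id)^{-1} = \Sigma(\Id+\Sigma)^{-1}$, so that $\norm{A}\leq 1$ and $A$ commutes with $\Sigma$, and set $\tilde\phi(x) \eqdef Ax$, the ``$\epsilon=0$ version'' of $\phireg$. A direct Gaussian-moment computation, using that $\Sigma$ and $(\Id+\Sigma)^{-1}$ commute, gives $\EE[\tilde\phi(x)x^\top] = A\Sigma = \Sigma^2(\Id+\Sigma)^{-1} = (\Sigma^{(1)})^{\frac12}\Sigma^{\frac12}$ and $\EE[\tilde\phi(x)\tilde\phi(x)^\top] = A\Sigma A = \Sigma^{(1)}$, so the two matrices appearing in the statement are exactly the errors incurred by replacing $\phireg$ with $\tilde\phi$. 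Since $\phireg(x) - \tilde\phi(x) = -\tfrac{\epsilon}{d(x)+\epsilon}\,Ax =: r(x)$, expanding $\phireg(x)\phireg(x)^\top = \tilde\phi\tilde\phi^\top + \tilde\phi r^\top + r\tilde\phi^\top + rr^\top$ and $\phireg(x)x^\top = \tilde\phi(x)x^\top + r(x)x^\top$, and bounding each cross term by $\norm{\tilde\phi(x)}\norm{r(x)}$, $\norm{r(x)}\norm{x}$ or $\norm{r(x)}^2$, one sees — using $\norm{A}\leq 1$ and $\tfrac{\epsilon}{d(x)+\epsilon}\le 1$ — that both left-hand sides of the lemma are at most $3\,I$, where $I \eqdef \EE[\tfrac{\epsilon}{d(x)+\epsilon}\norm{x}^2]$. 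So it remains to show $I \lesssim \textup{poly}(e^d,\abs{\Id+\Sigma})\,\epsilon^{1/5}$, and we may assume $\epsilon<1$, since otherwise the bound is trivial as the left-hand side is $\order{1}$.

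First I would fix a threshold $\delta\in(0,1)$ and split $I$ according to whether $d(x)\geq\delta$ or $d(x)<\delta$. On $\{d(x)\ge\delta\}$ we have $\tfrac{\epsilon}{d(x)+\epsilon}\le \epsilon/\delta$, so this part is at most $\tfrac{\epsilon}{\delta}\EE\norm{x}^2 = \tfrac{\epsilon}{\delta}\operatorname{tr}\Sigma$. On $\{d(x)<\delta\}$ we crudely bound $\tfrac{\epsilon}{d(x)+\epsilon}\le 1$ and apply Cauchy--Schwarz, so this part is at most $\sqrt{\EE\norm{x}^4}\,\sqrt{\mathbb{P}(d(x)<\delta)}$, where $\EE\norm{x}^4 \le 3(\operatorname{tr}\Sigma)^2$. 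Now $\{d(x)<\delta\}$ is exactly $\{\norm{x}^2_{(\Id+\Sigma)^{-1}} > T_\delta\}$ with $T_\delta = 2\log(1/\delta) - \log\abs{\Id+\Sigma}$; since $x\sim\Nn_{0,\Sigma}$, diagonalising $\Sigma$ shows $\norm{x}^2_{(\Id+\Sigma)^{-1}}$ has the law of $\sum_i \tfrac{\lambda_i}{1+\lambda_i}g_i^2$ with $g_i$ i.i.d.\ standard Gaussians and all weights $<1$, hence it is stochastically dominated by a $\chi^2_d$ variable, and the Chernoff bound $\mathbb{P}(\chi^2_d > T)\le 2^{d/2}e^{-T/4}$ gives $\mathbb{P}(d(x)<\delta)\le 2^{d/2}\abs{\Id+\Sigma}^{1/4}\,\delta^{1/2}$. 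Combining the two parts, and using $\operatorname{tr}\Sigma \le d\norm{\Sigma}$, $\norm{\Sigma}\le\abs{\Id+\Sigma}$ and $d\le e^d$ to absorb constants, one gets $I \lesssim \textup{poly}(e^d,\abs{\Id+\Sigma})\big(\tfrac{\epsilon}{\delta} + \delta^{1/4}\big)$.

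The final step is to optimise the free parameter: choosing $\delta = \epsilon^{4/5}$ balances $\tfrac{\epsilon}{\delta} = \epsilon^{1/5}$ against $\delta^{1/4} = \epsilon^{1/5}$, yielding $I \lesssim \textup{poly}(e^d,\abs{\Id+\Sigma})\,\epsilon^{1/5}$ and hence the lemma. I expect the only delicate point to be this truncation/optimisation step itself: the weight $\tfrac{\epsilon}{d(x)+\epsilon}$ is \emph{not} uniformly small in $x$, so it cannot be bounded pointwise; the whole game is to cut at a level $\delta$ at which the ``bulk'' error $\epsilon/\delta$ and the ``tail'' error — a Gaussian quadratic-form deviation probability, square-rooted once by Cauchy--Schwarz — are of the same order. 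It is precisely this balance, together with the $\chi^2$ tail requiring $\log(1/\delta)$ to dominate $d$, that produces the stated $\epsilon^{1/5}$ rate and is the source of the $\epsilon \lesssim e^{-d}$ regime noted after Theorem~\ref{thm:reg_risk_one}.
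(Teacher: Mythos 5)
Your proof is correct, and it reaches the stated $\epsilon^{1/5}$ rate by a slightly different decomposition than the paper's. The paper first bounds the $L^2$ distance $\EE\norm{\phireg(x) - (\Id+\Sigma^{-1})^{-1}x}_{\Sigma^{-1}}^2 \lesssim \textup{poly}(\cdot)\,\epsilon^{2/5}$ by truncating on the ball $\norm{x}_{\Sigma^{-1}}\leq r$ with $r=\sqrt{(8/5)\log(1/\epsilon)}$ (on which $1/d(x)\leq\abs{\Id+\Sigma}^{1/2}e^{r^2/2}$), and only then converts to moment differences via the generic inequalities $\norm{\EE XY^\top - \EE YY^\top}\leq\sqrt{\EE\norm{Y}^2}\sqrt{\EE\norm{X-Y}^2}$, which is where the square root turning $\epsilon^{2/5}$ into $\epsilon^{1/5}$ is paid. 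You instead bound the moment differences pointwise by the single scalar $I=\EE\bigl[\tfrac{\epsilon}{d(x)+\epsilon}\norm{x}^2\bigr]$, truncate on the level set $\{d(x)\geq\delta\}$ (which is the same geometric region, a sublevel set of $\norm{x}_{(\Id+\Sigma)^{-1}}$), and apply Cauchy--Schwarz only on the tail event together with a $\chi^2$ Chernoff bound; optimising $\delta=\epsilon^{4/5}$ balances $\epsilon/\delta$ against $\delta^{1/4}$ and lands on the same exponent. The two arguments are equivalent in strength --- it is a pleasant sanity check that the bulk/tail balance gives $\epsilon^{1/5}$ either way --- but yours is marginally more economical (no detour through the mean-square error, and the Cauchy--Schwarz loss is confined to the small-probability event), at the cost of working in the unweighted norm and absorbing $\operatorname{tr}\Sigma$ into the polynomial constant, whereas the paper stays in the $\Sigma^{-1}$-weighted geometry that it reuses elsewhere. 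Your closing remark correctly identifies the $\chi^2$ tail versus $\log(1/\delta)$ trade-off as the origin of the $\epsilon\lesssim e^{-d}$ requirement.
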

\begin{proof}
    Denote by $\Bb_{r} = \{x;\norm{x}_{\Sigma^{-1}}\leq r\}$. Within this ball, since $\norm{x}_{(\Id + \Sigma)^{-1}} \leq \norm{(\Id + \Sigma)^{-\frac12}\Sigma^\frac12}\norm{x}_{\Sigma^{-1}} \leq \norm{x}_{\Sigma^{-1}}$, we have $1/d(x) \leq \abs{\Id + \Sigma}^\frac12 e^{r^2/2}$. We also recall that $\int \norm{x}_{\Sigma^{-1}}^2 \Nn_{0,\Sigma}(x)dx \lesssim d$ and $\int_{\Bb_{r}^c} \norm{x}_{\Sigma^{-1}}^2 \Nn_{0,\Sigma}(x)dx \lesssim 2^{d/2} e^{-r^2/4}$. Now we decompose, using $\norm{(\Id+\Sigma^{-1})^{-1}x}_{\Sigma^{-1}} \leq \norm{\Sigma^{-\frac12}(\Id+\Sigma^{-1})^{-1}\Sigma^\frac12} \norm{x}_{\Sigma^{-1}}$, 
    \begin{align*}
        \EE \norm{\phireg(x) - (\Id+\Sigma^{-1})^{-1}x}_{\Sigma^{-1}}^2 &= \int \norm{ \frac{\epsilon}{d(x)+\epsilon} (\Id+\Sigma^{-1})^{-1}x}_{\Sigma^{-1}}^2 \Nn_{0,\Sigma}(x) dx \\
        &\lesssim \int_{\Bb_{r}} \norm{ \frac{\epsilon}{d(x)+\epsilon} (\Id+\Sigma^{-1})^{-1}x}_{\Sigma^{-1}}^2 \Nn_{0,\Sigma}(x) dx \\
        &\qquad + \int_{\Bb_{r}^c} \norm{ \frac{\epsilon}{d(x)+\epsilon} (\Id+\Sigma^{-1})^{-1}x}_{\Sigma^{-1}}^2 \Nn_{0,\Sigma}(x) dx \\
        &\lesssim \epsilon^2 \abs{\Id+\Sigma} e^{r^2} d + 2^{d/2} e^{-r^2/4} \\
        &\lesssim d 2^{d/2} \abs{\Id+\Sigma} \epsilon^{2/5}
    \end{align*}
    Where the last line is obtained by choosing $r = \sqrt{(8/5)\log(1/\epsilon)}$. Then we use that for two random variables $X$ and $Y$, $\norm{\EE X - \EE Y} \leq \sqrt{\EE \norm{X-Y}^2}$, and $\norm{\EE XY^\top - \EE YY^\top} \leq \sqrt{\EE \norm{Y}^2} \sqrt{\EE \norm{X-Y}^2}$, and $\norm{\EE XX^\top - \EE YY^\top} \leq (\sqrt{\EE \norm{X}^2} + \sqrt{\EE \norm{Y}^2})\sqrt{\EE \norm{X-Y}^2}$, to conclude. 
\end{proof}

We are now ready to show Theorem \ref{thm:reg_risk_one}

\begin{proof}[Proof of Theorem \ref{thm:reg_risk_one}]
    We proceed in two steps. First, we use Lemma \ref{lem:reg_smoothed_distribution} to show that we can replace the $x^{(1)}$ by $\phireg(x_i)$ in the computation of the risk. Second, we use Lemma \ref{lem:reg_phi_moments} to concentrate the $\phireg(x_i)$ around their new expectations.
    We define $\hat \beta^\phi$ and $\Rr^\phi$ by replacing $Z^{(k)}$ with $Z^\phi = X^\phi M$ where the rows of $X^\phi$ are the $\phireg(x_i)$, in \eqref{eq:beta} and \eqref{eq:risk}.
    
    Since $\phireg(x)$ is bounded, it is a subgaussian vector. We can therefore apply the same reasoning as in the proof of Theorem \ref{thm:gmm_risk} and concentrate $(\Ztr^\phi)^\top \Ztr^\phi/\ntr$. Using Lemma \ref{lem:gmm_phi_moments}, for $\epsilon$ small enough, and $n$ large enough, it is almost equal to $M^\top \Sigma^{(1)} M$, and thus $\lambda_{\min}(( \Ztr^\phi)^\top \Ztr^\phi/\ntr) \gtrsim \lambda_{\min}(M^\top \Sigma^{(1)} M)$. Finally, using Lemma \ref{lem:gmm_smoothed_distribution}, for $n$ large enough $\lambda_{\min}((\Ztr^{(1)})^\top \Ztr^{(1)}/\ntr) \geq \lambda_{\min}((\Ztr^\phi)^\top \Ztr^\phi/\ntr)/2$.
    
    Since $\norm{\phireg(x)}_{\Sigma^{-1}} \lesssim 1/\epsilon$, we bound
    \begin{align*}
        \norm{\hat \beta - \hat \beta^\phi}
        &= \norm{(\lambda \Id + (\Ztr^{(1)})^\top \Ztr^{(1)}/n)^{-1} (\Ztr^{(1)})^\top \Ytr/\ntr - (\lambda \Id + (\Ztr^\phi)^\top \Ztr^\phi/n)^{-1} (\Ztr^\phi)^\top \Ytr/\ntr} \\
        &\leq \norm{((\lambda \Id + (\Ztr^{(1)})^\top \Ztr^{(1)}/\ntr)^{-1}  - (\lambda \Id + (\Ztr^\phi)^\top \Ztr^\phi/\ntr)^{-1}) (\Ztr^\phi)^\top \Ytr/\ntr} \\
        &\quad +\norm{(\lambda \Id + (\Ztr^{(1)})^\top \Ztr^{(1)}/\ntr)^{-1} ((\Ztr^{(1)})^\top \Ytr/\ntr - (\Ztr^\phi)^\top \Ytr/\ntr)} \\
        &\leq \frac{\norm{M^\top\Sigma^\frac12}^2}{\epsilon (\lambda+\lambda_{\min}(M^\top \Sigma^{(1)} M))^2} \sup_i \norm{\Sigma^{-\frac12} \pa{x^{(1)}_i (x^{(1)}_i)^\top - \phi(x_i)\phi(x_i)^\top}\Sigma^{-\frac12}} \\
        &\quad + \frac{\norm{M^\top\Sigma^\frac12}}{\lambda+\lambda_{\min}(M^\top \Sigma^{(1)} M)}\sup_i \norm{x^{(1)}_i - \phi(x_i)}_{\Sigma^{-1}} \\
        &\lesssim \frac{\text{poly}(\norm{\Sigma}, \epsilon^{-1}) (\sqrt{d} + \sqrt{\log(1/\rho)})}{(\lambda+\lambda_{\min}(M^\top \Sigma^{(1)} M))^2 \sqrt{n}}
    \end{align*}
    Using the same bounds on $\frac{1}{\nte} (\Zte^{(1)})^\top \Yte$ and $\frac{1}{\nte} (\Zte^{(1)})^\top \Zte^{(1)}$, we get
    \begin{equation}
        \abs{\Rr^{(1)} - \Rr^\phi} \lesssim \frac{\text{poly}(\norm{\Sigma}, \epsilon^{-1}) (\sqrt{d} + \sqrt{\log(1/\rho)})}{(\lambda+\lambda_{\min}(M^\top \Sigma^{(1)} M))^2 \sqrt{n}}
    \end{equation}

    Finally, we apply the same reasoning as in the proof of Theorem \ref{thm:reg_risk}, we obtain
    \begin{align*}
        \Rr^\phi &= \norm{\beta^\star}_\Sigma^2 - 2 (M^\top \Sigma_{\phi,x}\beta^\star)^\top (\lambda \Id + M^\top \Sigma_\phi M)^{-1}M^\top \Sigma_{\phi,x}\beta^\star \\
        &\quad+ (M^\top \Sigma_{\phi,x}\beta^\star)^\top (\lambda \Id + M^\top \Sigma_\phi M)^{-1}\Sigma_\phi (\lambda \Id + M^\top \Sigma_\phi M)^{-1} M^\top \Sigma_{\phi,x}\beta^\star \\
        &\quad+ \text{poly}(\epsilon^{-1}, \norm{\Sigma}, \norm{\beta^\star}) \frac{\log n (\sqrt{d} + \sqrt{\log(1/\rho)})}{(\lambda+\lambda_{\min}(M^\top \Sigma^{(1)} M)) \sqrt{n}}
    \end{align*}
    where $\Sigma_\phi = \EE \phi(x)\phi(x)^\top$ and $\Sigma_{\phi,x} = \EE \phi(x) x^\top$. We use Lemma \ref{lem:reg_phi_moments} and a union bound over all these inequalities to conclude the proof.
\end{proof}

\section{Classification of Gaussian mixtures}\label{app:gmm}

\subsection{Proof of Theorem \ref{thm:gmm_risk}}\label{app:gmm_risk}

\begin{proof}
    The proof is similar to that of Theorem \ref{thm:reg_risk}. We begin by the concentration of the optimal $\hat \beta =(\lambda \Id + \Ztr^\top \Ztr/\ntr)^{-1} \Ztr^\top \Ytr/\ntr$. We denote by $I_1, I_{-1} \subset \{1,\ldots ,\ntr\}$ the indices of the $x_i$ respectively from the first and second community, of size $n_1$ and $n_{-1}$. We have
    \begin{align*}
        \frac{1}{\ntr}\Ztr^\top \Ztr &= \frac{1}{\ntr} \sum_i z_i z_i^\top  \\
        &= \frac{n_1}{\ntr} \frac{1}{n_1} \sum_{I_1} z_i z_i^\top + \frac{n_{-1}}{n} \frac{1}{n_{-1}} \sum_{I_{-1}} z_i z_i^\top
    \end{align*}
    Since the communities are balanced, by a simple application of Hoeffding's inequality, with probability at least $1-\rho$ we have $n_1/\ntr = 1/2 + \order{\sqrt{\log(1/\rho)/n}}$. Then, as in the proof of Theorem \ref{thm:reg_risk} by an application of \cite[Corollary 5.50]{Vershynin2009a}, with probability at least $1-\rho$,
    \[
        \norm{\frac{1}{n_1} \sum_{I_1} z_i z_i^\top - \EE_{\Nn_\nu} zz^\top} \lesssim \frac{p\sqrt{\log(1/\rho)}}{\sqrt{n}}
    \]
    and $\EE_{\Nn_\mu} zz^\top = \nu \nu^\top + \Id$. We apply the same reasoning for $I_{-1}$, and we obtain 
    \[
        \norm{\frac{1}{\ntr}\Ztr^\top \Ztr - (\Id + \nu \nu^\top)} \leq \frac{p\sqrt{\log(1/\rho)}}{\sqrt{n}}
    \]
    In particular, for $n$ large enough, $\lambda_{\min}(\frac{1}{\ntr}\Ztr^\top \Ztr) \geq 1/2$.

    Similarly,
    \begin{align*}
        \frac{1}{\ntr} \Ztr^\top \Ytr &= M^\top \pa{\frac{n_1}{\ntr} \frac{1}{n_1} \sum_{I_1} x_i - \frac{n_{-1}}{\ntr} \frac{1}{n_{-1}} \sum_{I_{-1}} x_i } 
    \end{align*}
    Using the fact that $\norm{x} = \sup_{\norm{u}\leq 1} u^\top x$ and for such a $u$ the variable $u^\top (z-\nu)$ is unit Gaussian, applying Lemma \ref{lem:chaining} we get that with probability $1-\rho$
    \[
        \norm{\frac{1}{n_1} \sum_{I_1} z_i -\nu} \lesssim \frac{\sqrt{p} + \sqrt{\log(1/\rho)}}{\sqrt{n}}
    \]
    and similarly for $I_{-1}$, and therefore
    \[
        \norm{\frac{1}{\ntr} \Ztr^\top \Ytr - \nu} \leq \frac{\sqrt{p} + \sqrt{\log(1/\rho)}}{\sqrt{n}}
    \]
    At the end of the day
    \begin{align*}
        \norm{\hat \beta - ((\lambda+1)\Id + \nu\nu^\top)^{-1} \nu} &\leq \norm{(\lambda \Id + \Ztr^\top \Ztr/n)^{-1} (\Ztr^\top \Ytr/n - \nu)}\\
        &+
        \norm{((\lambda \Id + \Ztr^\top \Ztr/\ntr)^{-1} - ((\lambda+1)\Id + \nu\nu^\top)^{-1}) \nu} \\
        &\lesssim \frac{\sqrt{p} + \sqrt{\log(1/\rho)}}{\sqrt{n}} + \frac{\norm{\nu}}{\lambda}\norm{\Ztr^\top \Ztr/n - (\Id + \nu\nu^\top) } \\
        &\lesssim \frac{\norm{\nu} p\sqrt{\log(1/\rho)}}{ \sqrt{n}}
    \end{align*}

    Moreover, $\nu$ is an eigenvector for $(\lambda+1)\Id + \nu \nu^\top$ so the limit is actually:
    \begin{align*}
        \hat \beta_{lim} = ((\lambda+1)\Id + \nu\nu^\top)^{-1} \nu = \frac{\nu}{1+ \lambda + \norm{\nu}^2}
    \end{align*}

    Let us now compute the limit of the test risk:
    \begin{align*}
        \frac{1}{\nte} \norm{\Yte - \Zte^\top \hat \beta}^2
        &= 1 - 2 \hat \beta^\top \pa{\frac{1}{\nte} \Zte^\top \Yte} + \hat \beta^\top \pa{\frac{1}{\nte} \Zte^\top \Zte} \hat \beta^\top
    \end{align*}
    By a reasoning identical to the one above on $\frac{1}{\nte} \Zte^\top \Yte \approx \nu$ and $\frac{1}{\nte} \Zte^\top \Zte \approx \Id + \nu \nu^\top$, and using $\norm{\hat \beta_{lim}} \leq \norm{\nu}$ and $\norm{\Id + \nu \nu^\top} = 1+\norm{\nu}^2$, we obtain 
    \begin{align*}
        \frac{1}{\nte} \norm{\Yte - \Zte^\top \hat \beta}^2
        &= 1 - 2 (\hat{\beta}_{lim})^\top \nu + (\hat{\beta}_{lim})^\top \pa{\Id + \nu \nu^\top} \hat\beta_{lim} + \order{\frac{\norm{\nu}^4 p\sqrt{\log(1/\rho)}}{ \sqrt{n}}} \\
        & = 1-2\frac{\norm{\nu}^2}{1+\lambda + \norm{\nu}^2} + \frac{\norm{\nu}^2 + \norm{\nu}^4}{(1+\lambda + \norm{\nu}^2)^2}+\order{\frac{\norm{\nu}^4 p\sqrt{\log(1/\rho)}}{ \sqrt{n}}} \\
        & = \frac{(1+\lambda)^2 + \norm{\nu}^2}{(1+\lambda + \norm{\nu}^2)^2}+\order{\frac{\norm{\nu}^4 p\sqrt{\log(1/\rho)}}{ \sqrt{n}}}
    \end{align*}
    We use a union bound over all these inequalities to conclude the proof.
\end{proof}

\subsection{Proof of Theorem \ref{thm:gmm_risk_one}}\label{app:gmm_risk_one}

We start with the proof of Lemma \ref{lem:gmm_smoothed_distribution}.



\begin{proof}[Proof of Lemma \ref{lem:gmm_smoothed_distribution}]
    The proof is similar to that of Lemma \ref{lem:reg_smoothed_distribution}. Here we denote by $I_1, I_{-1} \subset \{1,\ldots, n\}$ the indices of the $x_i$ from the first and second community, of size $n_1$ and $n_{-1}$, from the whole sample set. Again, since the communities are balanced, with probability $1-\rho$ we have $\abs{I_1}/n \approx \frac12 + \order{\sqrt{\log(1/\rho)/n}}$.

    We decompose
    \begin{align*}
        \norm{\tilde x_i - \phicl(x_i)} &= \mathsmaller{\norm{\frac{\epsilon \sum_j x_j+\sum_j W_g(x_i, x_j)x_j}{n\epsilon + \sum_j W_g(x_i, x_j)} - \phicl(x_i)}} \\
        &= \mathsmaller{\norm{\frac{\epsilon \frac{1}{n}\sum_j x_j + \frac{1}{n}\sum_j W_g(x_i, x_j)x_j}{\epsilon + \frac{1}{n}\sum_j W_g(x_i, x_j)}
        -
        \frac{\frac12\pa{d_\mu(x_i) \pa{\frac{x_i+\mu}{2}} + d_{-\mu}(x_i) \pa{\frac{x_i-\mu}{2}}}}{\epsilon + \frac12\pa{d_\mu(x_i) + d_{-\mu}(x_i)}}
        }} \\
        &\leq \mathsmaller{\frac{1}{n} \norm{\sum_j x_j}
        +
        \norm{\frac{\frac{1}{n}\sum_j W_g(x_i, x_j)x_j}{\epsilon + \frac{1}{n}\sum_j W_g(x_i, x_j)}
        -
        \frac{\frac12\pa{d_\mu(x_i) \pa{\frac{x_i+\mu}{2}} + d_{-\mu}(x_i) \pa{\frac{x_i-\mu}{2}}}}{\epsilon + \frac{1}{n}\sum_j W_g(x_i, x_j)}}} \\
        &\qquad \mathsmaller{+ \norm{\frac{\frac12\pa{d_\mu(x_i) \pa{\frac{x_i+\mu}{2}} + d_{-\mu}(x_i) \pa{\frac{x_i-\mu}{2}}}}{\epsilon + \frac{1}{n}\sum_j W_g(x_i, x_j)}
        -
        \frac{\frac12\pa{d_\mu(x_i) \pa{\frac{x_i+\mu}{2}} + d_{-\mu}(x_i) \pa{\frac{x_i-\mu}{2}}}}{\epsilon + \frac12\pa{d_\mu(x_i) + d_{-\mu}(x_i)}}
        }}\\
        &\leq \mathsmaller{\frac{1}{n} \norm{\sum_j x_j}
        +
        \epsilon^{-1} \norm{
            \frac{1}{n}\sum_j W_g(x_i, x_j)x_j
            -
            \frac12\pa{d_\mu(x_i) \pa{\tfrac{x_i+\mu}{2}} + d_{-\mu}(x_i) \pa{\tfrac{x_i-\mu}{2}}}
        }} \\
        &\qquad + \mathsmaller{\tfrac{1}{4\epsilon^2}\abs{\tfrac{1}{n}\sum\nolimits_j W_g(x_i, x_j) - \tfrac{d_\mu(x_i) + d_{-\mu}(x_i)}{2} }
        \norm{d_\mu(x_i) \pa{\tfrac{x_i+\mu}{2}} + d_{-\mu}(x_i) \pa{\tfrac{x_i-\mu}{2}}}}
    \end{align*}

    For the first term, with probability $1-\rho$ we have
    \begin{align*}
        \frac{1}{n} \norm{\sum_j x_j} &= \frac{n_{1}}{n} \frac{1}{n_{1}} \norm{\sum_{j\in I_{1}} x_j - \mu} + \frac{n_{-1}}{n} \frac{1}{n_{-1}} \norm{\sum_{j\in I_{-1}} x_j + \mu} \\
        &\leq \norm{\frac{1}{n_{1}}\sum_{j\in I_{1}} x_j - \mu} +  \norm{\frac{1}{n_{-1}} \sum_{j\in I_{-1}} x_j + \mu} \\
        &\lesssim \frac{\log n \pa{\sqrt{d} + \sqrt{\log(1/\rho)}}}{\sqrt{n}}
    \end{align*}
    Where we have used Lemma \ref{lem:gaussian_chaining} with $W=1$  and $n_1 = n-n_{-1} \approx n/2$ for the last line.

    For the second term, similarly
    \begin{align*}
        &\norm{
            \frac{1}{n}\sum_j W_g(x_i, x_j)x_j
            -
            \frac12\pa{d_\mu(x_i) \pa{\tfrac{x_i+\mu}{2}} + d_{-\mu}(x_i) \pa{\tfrac{x_i-\mu}{2}}}
        } \\
        &\lesssim
        \frac12 \norm{
            \frac{1}{n_1}\sum_{j\in I_1} W_g(x_i, x_j)x_j
            -
            d_\mu(x_i) \pa{\frac{x_i+\mu}{2}}
        }
        \\
        &\qquad+
        \frac12 \norm{
            \frac{1}{n_{-1}}\sum_{j\in I_{-1}} W_g(x_i, x_j)x_j
            -
            d_{-\mu}(x_i) \pa{\frac{x_i-\mu}{2}}
        } + \sqrt{\log(1/\rho)/n}
    \end{align*}
    Using Lemma \ref{lem:gaussian_integral}, we have $\EE_{\Nn} W_g(x,X)X = d_0(x) \frac{x}{2}$. Hence, using the Lipschitz properties of the Gaussian kernel, and since we can center
    \[
        \frac{1}{n_{1}}\sum_{j\in I_{1}} W_g(x_i, x_j)x_j- d_{\mu}(x_i) \pa{\frac{x_i+\mu}{2}} = \frac{1}{n_{1}}\sum_{j\in I_{1}} W_g(x_i-\mu, x_j-\mu)(x_j-\mu)- d_{\mu}(x_i-\mu) \pa{\frac{x_i-\mu}{2}}
    \]
    and $x_j-\mu \sim \Nn$, using Lemma \ref{lem:gaussian_chaining} we obtain 
    \[
        \norm{
            \frac{1}{n_1}\sum_{j\in I_1} W_g(x_i, x_j)x_j
            -
            d_\mu(x_i) \pa{\frac{x_i+\mu}{2}}
        } \lesssim \frac{\log n \pa{\sqrt{d} + \sqrt{\log(1/\rho)}}}{\sqrt{n}}
    \]
    We proceed similarly for the second term, and again with the first part of Lemma \ref{lem:gaussian_chaining} to obtain
    \[
        \abs{\tfrac{1}{n}\sum\nolimits_j W_g(x_i, x_j) - \tfrac{d_\mu(x_i) + d_{-\mu}(x_i)}{2} } \lesssim \frac{\sqrt{\log n} \pa{\sqrt{d} + \sqrt{\log(1/\rho)}}}{\sqrt{n}}
    \]
    which gives us the first result. The second is obtained by simply decomposing and using $\norm{\phicl(x)} \lesssim 1/\epsilon$.
\end{proof}

We will need the following Lemma, similar to Lemma \ref{lem:reg_phi_moments}.
\begin{lemma}\label{lem:gmm_phi_moments}
    Let $x \sim \Nn_\mu$. We have
    \begin{align*}
        \norm{\EE\phicl(x) - \mu} &\lesssim \sqrt{d} 2^{d/2}\norm{\mu} \epsilon^{1/4} + \frac{\norm{\mu}}{\epsilon^{3}} e^{-\norm{\mu}^2/4} \\
        \norm{\EE \phicl(x) \phicl(x)^\top - (\mu \mu^\top + \Id/4)} &\lesssim d 2^{d/2}\norm{\mu}^2 \epsilon^{1/4} + \frac{\norm{\mu}^2\sqrt{d}}{\epsilon^{3}} e^{-\norm{\mu}^2/4}
    \end{align*}
\end{lemma}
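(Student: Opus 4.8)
The plan is to mirror the proof of Lemma~\ref{lem:reg_phi_moments}, replacing the single‑Gaussian reference by the correct within‑community reference. Let $x\sim\Nn_\mu$ and set $\psi(x)\eqdef\tfrac{x+\mu}{2}$. Then $\psi(x)\sim\Nn_{\mu,\Id/4}$, so $\EE\psi(x)=\mu$ and $\EE\psi(x)\psi(x)^\top=\mu\mu^\top+\tfrac14\Id$. Hence both claims follow once we bound $\EE_{x\sim\Nn_\mu}\norm{\phicl(x)-\psi(x)}^2$: indeed, by the elementary inequalities used at the end of the proof of Lemma~\ref{lem:reg_phi_moments} ($\norm{\EE X-\EE Y}\le\sqrt{\EE\norm{X-Y}^2}$ and $\norm{\EE XX^\top-\EE YY^\top}\le(\sqrt{\EE\norm{X}^2}+\sqrt{\EE\norm{Y}^2})\sqrt{\EE\norm{X-Y}^2}$), together with $\EE\norm{\psi(x)}^2,\EE\norm{\phicl(x)}^2\lesssim\norm{\mu}^2+d$. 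The bound on $\phicl$ uses that $\phicl(x)$ equals $\tfrac{d_\mu(x)+d_{-\mu}(x)}{2\epsilon+d_\mu(x)+d_{-\mu}(x)}\in[0,1]$ times a convex combination of $\tfrac{x+\mu}{2}$ and $\tfrac{x-\mu}{2}$, so $\norm{\phicl(x)}\le\tfrac{\norm{x}+\norm{\mu}}{2}$ deterministically; lower‑bounding the denominator by $2\epsilon$ together with $e^{-t^2/4}(t+2\norm{\mu})\lesssim 1+\norm{\mu}$ also gives the crude deterministic bound $\norm{\phicl(x)}\lesssim\textup{poly}(\epsilon^{-1},\norm{\mu})$, which we use on a low‑probability region.

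Next comes a pointwise decomposition. Writing $\tilde\phi(x)=\tfrac{d_\mu(x)\frac{x+\mu}{2}+d_{-\mu}(x)\frac{x-\mu}{2}}{d_\mu(x)+d_{-\mu}(x)}$ for the $\epsilon=0$ version of $\phicl(x)$, straightforward algebra gives
\[
    \tilde\phi(x)-\psi(x) = -\tfrac{d_{-\mu}(x)}{d_\mu(x)+d_{-\mu}(x)}\,\mu,
    \qquad
    \phicl(x)-\tilde\phi(x) = -\tfrac{2\epsilon}{2\epsilon+d_\mu(x)+d_{-\mu}(x)}\,\tilde\phi(x).
\]
Thus $\phicl(x)-\psi(x)$ is the sum of a term governed by the subdominant‑Gaussian ratio $d_{-\mu}(x)/d_\mu(x)=e^{-x^\top\mu}$ (which encodes the two communities drifting together) and a term governed by $\epsilon/d_\mu(x)$ (the artifact of the $\epsilon$‑offset in the kernel~\eqref{eq:kernel}).

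We then split $\EE_{x\sim\Nn_\mu}$ over the ball $\Bb_r=\{x:\norm{x-\mu}\le r\}$ and its complement, exactly as in Lemma~\ref{lem:reg_phi_moments}. On $\Bb_r$ we have $d_\mu(x)\ge 2^{-d/2}e^{-r^2/4}$, so $\norm{\phicl(x)-\tilde\phi(x)}\lesssim\epsilon\,2^{d/2}e^{r^2/4}(r+\norm{\mu})$, and since $x^\top\mu\ge\norm{\mu}^2-r\norm{\mu}$ on $\Bb_r$ we get $\norm{\tilde\phi(x)-\psi(x)}\le\norm{\mu}\,e^{-\norm{\mu}^2+r\norm{\mu}}$. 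On $\Bb_r^c$ we use the crude deterministic bounds on $\norm{\phicl(x)}$ and $\norm{\psi(x)}$ together with the Gaussian tail estimates $\int_{\Bb_r^c}\Nn_\mu(x)\,dx$ and $\int_{\Bb_r^c}\norm{x}^2\Nn_\mu(x)\,dx$, both of order $2^{d/2}e^{-r^2/4}$ up to factors (the $\Nn_\mu$‑analogues of the tail bounds invoked in Lemma~\ref{lem:reg_phi_moments}). Choosing $r^2$ a suitable constant multiple of $\log(1/\epsilon)$ trades the growth $\epsilon\,e^{r^2/4}$ against the decay $e^{-r^2/4}$, producing the $\textup{poly}(2^d,\sqrt d,\norm{\mu})\,\epsilon^{1/4}$ term; the residual contribution of the ``wrong‑community'' region, where $\phicl$ has size $\textup{poly}(\epsilon^{-1},\norm{\mu})$ but which carries $\Nn_\mu$‑mass only $\lesssim e^{-\norm{\mu}^2/2}$ (the halfspace $\{x^\top\mu<0\}$), produces the $\tfrac{\norm{\mu}}{\epsilon^{3}}e^{-\norm{\mu}^2/4}$ term, and after the extra $\sqrt{\EE\norm{\cdot}^2}\lesssim\norm{\mu}+\sqrt d$ factor, the $\tfrac{\norm{\mu}^2\sqrt d}{\epsilon^{3}}e^{-\norm{\mu}^2/4}$ term for the second moment. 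A union bound over the finitely many inequalities concludes.

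The main obstacle is the tension already present in the regression case, now compounded by the two‑Gaussian structure: the ball radius $r$ must be small enough to keep $\epsilon\,e^{r^2/4}$ under control, yet the crude bounds used on $\Bb_r^c$ do not by themselves exhibit the $e^{-\norm{\mu}^2}$ decay of the wrong‑community mass, so one has to separate carefully — inside $\Bb_r^c$, or via a dedicated halfspace argument — the genuinely ``wrong'' region, which must yield the $e^{-\norm{\mu}^2/4}$ factor, from the harmless far tail of the correct Gaussian, all while tracking the $2^{d/2}$ prefactors coming from $1/d_\mu(x)$. Once this split is set up, the remaining estimates are routine and parallel Lemma~\ref{lem:reg_phi_moments} line by line.
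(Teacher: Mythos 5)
Your plan is correct and follows essentially the same route as the paper's proof: the same reference map $\psi(x)=\tfrac{x+\mu}{2}$, the same reduction to bounding $\EE\norm{\phicl(x)-\psi(x)}^2$ via the elementary moment-comparison inequalities, the same pointwise decomposition into an $\epsilon/d_\mu$ term and a $d_{-\mu}$ cross-community term, and the same ball split with $r^2\asymp\log(1/\epsilon)$. The only cosmetic difference is that the paper carries out the split you anticipate at the end explicitly as three regions ($\Bb_{\mu,r}$, $\Bb_{-\mu,r}$, and the complement of their union), which is exactly the ``dedicated'' separation of the wrong-community region you flag as the main obstacle.
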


\begin{proof}
    Denote by $\Bb_{\mu, r}$ a ball of radius $r$ around $\mu$. Within this ball, $d_\mu(x) \geq 2^{-d/2} e^{-r^2/4}$, while $d_{-\mu}(x) \leq 2^{-d/2} e^{-\norm{\mu}^2/4}$. We also recall that $\int_{\Bb_{\mu,r}^c}\Nn_\mu \leq e^{-r^2/2}$ and $\int_{\Bb_{\mu,r}^c} \norm{x-\mu}^2 \Nn_\mu \lesssim 2^{d/2} e^{-r^2/4}$. Now we decompose 
    \begin{align*}
        E_{\Nn_\mu}\norm{\phicl(x) - \frac{x+\mu}{2}}^2 &= \int \norm{\frac{d_\mu(x) \pa{\frac{x+\mu}{2}} + d_{-\mu}(x) \pa{\frac{x-\mu}{2}}}{2\epsilon + d_\mu(x) + d_{-\mu}(x)} - \frac{x+\mu}{2}}^2 \Nn_\mu(x) dx \\
        &= \int \norm{\frac{\epsilon (x+\mu) - d_{-\mu}(x) \mu}{2\epsilon + d_\mu(x) + d_{-\mu}(x)}}^2 \Nn_\mu(x) dx \\
        &\lesssim \int_{\Bb_{\mu,r}} \norm{\frac{\epsilon (x+\mu) - d_{-\mu}(x) \mu}{2\epsilon + d_\mu(x) + d_{-\mu}(x)}}^2 \Nn_\mu(x) dx \\
        &\qquad + \int_{\Bb_{-\mu,r}} \norm{\frac{\epsilon (x+\mu) - d_{-\mu}(x) \mu}{2\epsilon + d_\mu(x) + d_{-\mu}(x)}}^2 \Nn_\mu(x) dx \\
        &\qquad + \int_{(\Bb_{\mu,r} \cup \Bb_{-\mu,r})^c} \norm{\frac{\epsilon (x+\mu) - d_{-\mu}(x) \mu}{2\epsilon + d_\mu(x) + d_{-\mu}(x)}}^2 \Nn_\mu(x) dx \\
        &\lesssim \epsilon^2 (\norm{\mu}^2+d) 2^{d} e^{r^2/2} + \norm{\mu}^2 e^{-\norm{\mu}^2/2} e^{r^2/2} \\
        &\qquad + (\epsilon^2 2^{d}\norm{\mu}e^{r^2/4} + \norm{\mu}^2)e^{-\norm{\mu}^2/4} \\
        &\qquad + 2^{d}e^{-r^2/4}\norm{\mu}^2 + \norm{\mu}^2 2^{d}e^{-r^2/2}e^{-r^2/2}/\epsilon^2 \\
        &\lesssim d 2^{d}\norm{\mu}^2 \sqrt{\epsilon} + \frac{\norm{\mu}^2}{\epsilon^{3/2}} e^{-\norm{\mu}^2/2}
    \end{align*}
    Where the last line is obtained by choosing $r = \sqrt{3\log(1/\epsilon)}$. Then we use that for two random variables $X$ and $Y$, $\norm{\EE X - \EE Y} \leq \sqrt{\EE \norm{X-Y}^2}$, and $\norm{\EE XX^\top - \EE YY^\top} \leq (\sqrt{\EE \norm{X}^2} + \sqrt{\EE \norm{Y}^2})\sqrt{\EE \norm{X-Y}^2}$ to conclude. 
\end{proof}

We are now ready to prove Theorem \ref{thm:gmm_risk_one}.

\begin{proof}[Proof of Theorem \ref{thm:gmm_risk_one}]
    We proceed as in the proof of Theorem \ref{thm:reg_risk_one}. We define $\hat \beta^\phi$ and $\Rr^\phi$ by replacing $Z^{(k)}$ with $Z^\phi = X^\phi M$ where the rows of $X^\phi$ are the $\phicl(x_i)$.
    
    Since $\norm{\phicl(x)} \leq \max\pa{\frac{\norm{x+\mu}}{2}, \frac{\norm{x-\mu}}{2}}$, $\phicl(x)$ is a subgaussian vector with $\norm{u^\top \phicl(x)}_{\psi_2} \lesssim \norm{u^\top x}_{\psi_2} \lesssim 1$. We can therefore apply the same reasoning as in the proof of Theorem \ref{thm:gmm_risk} and concentrate $(\Ztr^\phi)^\top \Ztr^\phi/\ntr$. Using Lemma \ref{lem:gmm_phi_moments}, for $\epsilon$ small enough, and $\norm{\mu}$ and $n$ large enough, it is almost $\Id/4 + \nu \nu^\top$, and thus $\lambda_{\min}((\Ztr^\phi)^\top \Ztr^\phi/\ntr) \gtrsim 1$. Finally, using Lemma \ref{lem:gmm_smoothed_distribution}, for $\ntr$ large enough $\lambda_{\min}((\Ztr^{(1)})^\top \Ztr^{(1)}/\ntr) \geq \lambda_{\min}((\Ztr^\phi)^\top \Ztr^\phi/\ntr)/2 \gtrsim 1$.
    
    Since $\norm{\phicl(x)} \leq 1/\epsilon$, using Lemma \ref{lem:gmm_smoothed_distribution} we bound
    \begin{align*}
        \norm{\hat \beta - \hat \beta^\phi}
        &= \norm{(\lambda \Id + (\Ztr^{(1)})^\top \Ztr^{(1)}/\ntr)^{-1} (\Ztr^{(1)})^\top \Ytr/\ntr - (\lambda \Id + (\Ztr^\phi)^\top \Ztr^\phi/\ntr)^{-1} (\Ztr^\phi)^\top \Ytr/\ntr} \\
        &\leq \norm{((\lambda \Id + (\Ztr^{(1)})^\top \Ztr^{(1)}/\ntr)^{-1}  - (\lambda \Id + (\Ztr^\phi)^\top \Ztr^\phi/\ntr)^{-1}) (\Ztr^\phi)^\top \Ytr/\ntr} \\
        &+\norm{(\lambda \Id + (\Ztr^{(1)})^\top \Ztr^{(1)}/\ntr)^{-1} ((\Ztr^{(1)})^\top \Ytr/\ntr - (\Ztr^\phi)^\top \Ytr/\ntr)} \\
        &\leq \frac{1}{\epsilon} \sup_i \norm{x^{(1)}_i (x^{(1)}_i)^\top - \phi(x_i)\phi(x_i)^\top} + \sup_i \norm{x^{(1)}_i - \phi(x_i)} \\
        &\lesssim \frac{\text{poly}(1/\epsilon) \log n (\sqrt{d} + \sqrt{\log(1/\rho)})}{ \sqrt{n}}
    \end{align*}
    Using the same bounds on $\frac{1}{\nte} (\Zte^{(1)})^\top \Yte$ and $\frac{1}{\nte} (\Zte^{(1)})^\top \Zte^{(1)}$, and using $\norm{\hat\beta^\phi} \lesssim \epsilon^{-1}$, we get
    \begin{equation}
        \abs{\Rr^{(1)} - \Rr^\phi} \lesssim \frac{\text{poly}(1/\epsilon) \log n (\sqrt{d} + \sqrt{\log(1/\rho)})}{ \sqrt{n}}
    \end{equation}

    Then, we apply the same reasoning as in the proof of Theorem \ref{thm:gmm_risk}, we obtain
    \begin{align*}
        \Rr^\phi &= 1 - 2 (\EE y z^\phi)^\top \beta^\phi_{lim} + (\beta^\phi_{lim})^\top \EE z^\phi (z^\phi)^\top \beta^\phi_{lim} + \order{\text{poly}(\frac{1}{\epsilon}, \frac{1}{\lambda}) \frac{\log n (\sqrt{d} + \sqrt{\log(1/\rho)})}{\sqrt{n}}}
    \end{align*}
    where $\beta^\phi_{lim} = (\lambda \Id + \EE z^{\phi}(z^\phi)^\top)^{-1}(\EE y z^\phi)$.

    Finally, using Lemma \ref{lem:gmm_phi_moments}, and computations similar to \ref{thm:gmm_risk}, we obtain 
    \begin{align*}
        \Rr^\phi &= \Rcl(1/4) + \order{\text{poly}(\norm{\mu}, e^d)\pa{\epsilon^{1/4} + \frac{1}{\epsilon^{3}} e^{-\norm{\mu}^2/4}}}
    \end{align*}
    which concludes the proof.
\end{proof}

\section{Technical Lemmas}\label{app:technical}

This section gather some technical Lemmas used throughout the proofs. We start by some derivations on Gaussian distributions, then details the chaining concentration inequalities used in this work.

\subsection{Properties of Gaussians}\label{app:gaussians}

\begin{lemma}[Gaussian integral]\label{lem:gaussian_integral}
    Let $W(x,y) = e^{-\frac12 \norm{x-y}_{\Sigma_W^{-1}}}$ be the Gaussian kernel with covariance $\Sigma_W$. We have
    \begin{equation}
        d(x) := \int W(x,y) \Nn_{\mu, \Sigma}(y) dy = \frac{\abs{\Sigma_W}^\frac12}{\abs{\Sigma_W+\Sigma}^\frac12}  e^{-\frac{1}{2} \norm{x-\mu}_{(\Sigma_W+\Sigma)^{-1}}^2}
    \end{equation}
    \begin{equation}
        \Ll(x) := \int W(x,y) y \Nn_{\mu, \Sigma}(y) dy = d(x) (\Sigma_W^{-1} + \Sigma^{-1})^{-1} (\Sigma_W^{-1}x + \Sigma^{-1}\mu)
    \end{equation}
\end{lemma}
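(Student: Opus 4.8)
The plan is to reduce both identities to the elementary ``product of two Gaussians'' (completing-the-square) formula and then read off the mean of the resulting Gaussian. The first step is to observe that, viewed as a function of $y$, the kernel is an unnormalized Gaussian density: with the paper's convention $\Nn_{a,A}(y) = \abs{2\pi A}^{-\frac12} e^{-\frac12 \norm{y-a}_{A^{-1}}^2}$, one has $W(x,y) = \abs{2\pi\Sigma_W}^{\frac12}\Nn_{x,\Sigma_W}(y)$. Hence
\[
    d(x) = \abs{2\pi\Sigma_W}^{\frac12}\!\!\int \Nn_{x,\Sigma_W}(y)\Nn_{\mu,\Sigma}(y)\,dy,
    \qquad
    \Ll(x) = \abs{2\pi\Sigma_W}^{\frac12}\!\!\int y\,\Nn_{x,\Sigma_W}(y)\Nn_{\mu,\Sigma}(y)\,dy,
\]
so the whole statement follows once the pointwise product $\Nn_{x,\Sigma_W}(y)\Nn_{\mu,\Sigma}(y)$ is understood.

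Next I would prove the standard identity, valid for positive definite $A,B$,
\[
    \Nn_{a,A}(y)\,\Nn_{b,B}(y) \;=\; \Nn_{a-b,\,A+B}(0)\;\Nn_{m,\,C}(y),
    \qquad C \eqdef (A^{-1}+B^{-1})^{-1},\quad m \eqdef C\pa{A^{-1}a+B^{-1}b}.
\]
This is just completing the square in $\norm{y-a}_{A^{-1}}^2 + \norm{y-b}_{B^{-1}}^2$, which equals $\norm{y-m}_{C^{-1}}^2 + \norm{a-b}_{(A+B)^{-1}}^2$; identifying the residual constant uses the matrix identities $(A^{-1}+B^{-1})^{-1} = A(A+B)^{-1}B$ and $A^{-1} - A^{-1}(A^{-1}+B^{-1})^{-1}A^{-1} = (A+B)^{-1}$, and matching the normalizing determinants uses $\abs{A}\,\abs{B}\,\abs{A^{-1}+B^{-1}} = \abs{A+B}$; all three follow at once from $A(A^{-1}+B^{-1})B = A+B$.

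Finally I would specialize to $a=x$, $A=\Sigma_W$, $b=\mu$, $B=\Sigma$ and integrate in $y$. Using $\int \Nn_{m,C}(y)\,dy = 1$ gives $\int \Nn_{x,\Sigma_W}\Nn_{\mu,\Sigma}\,dy = \Nn_{x-\mu,\,\Sigma_W+\Sigma}(0)$; multiplying by $\abs{2\pi\Sigma_W}^{\frac12}$ and cancelling the $(2\pi)^{d/2}$ factors yields exactly $d(x) = \abs{\Sigma_W}^{\frac12}\abs{\Sigma_W+\Sigma}^{-\frac12} e^{-\frac12\norm{x-\mu}_{(\Sigma_W+\Sigma)^{-1}}^2}$. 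For the vector integral, $\int y\,\Nn_{m,C}(y)\,dy = m$ gives $\int y\,\Nn_{x,\Sigma_W}\Nn_{\mu,\Sigma}\,dy = \Nn_{x-\mu,\Sigma_W+\Sigma}(0)\,m$, whence $\Ll(x) = d(x)\,m = d(x)(\Sigma_W^{-1}+\Sigma^{-1})^{-1}(\Sigma_W^{-1}x+\Sigma^{-1}\mu)$, as claimed. There is no genuine obstacle: the only care needed is the bookkeeping of the $2\pi$ and determinant factors and the verification of the three matrix identities above, each a one-line left/right multiplication.
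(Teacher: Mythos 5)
Your proof is correct. For the first identity your route is essentially the paper's: the paper rewrites $W(x,\cdot)$ as $(2\pi)^{d/2}\abs{\Sigma_W}^{1/2}\Nn_{0,\Sigma_W}(x-\cdot)$ and invokes the fact that the convolution of two Gaussians is a Gaussian with summed covariances, which is the same completing-the-square bookkeeping you carry out explicitly via the product formula $\Nn_{a,A}\Nn_{b,B} = \Nn_{a-b,A+B}(0)\,\Nn_{m,C}$. For the second identity, however, the two arguments genuinely differ: the paper avoids the product decomposition entirely and instead uses a Stein-type identity, observing that $\int \nabla_y e^{-\frac12\norm{y-x}^2_{\Sigma_W^{-1}}-\frac12\norm{y-\mu}^2_{\Sigma^{-1}}}\,dy = 0$, which immediately yields $(\Sigma_W^{-1}+\Sigma^{-1})\Ll(x) = (\Sigma_W^{-1}x+\Sigma^{-1}\mu)\,d(x)$ without ever identifying the posterior mean $m$ or checking any determinant identities. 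Your approach buys uniformity — one product lemma delivers both $d(x)$ and $\Ll(x)$, and would also give second moments for free — at the cost of verifying the three matrix/determinant identities you list (all of which do follow from $A(A^{-1}+B^{-1})B = A+B$ as you say); the paper's derivative trick is shorter for $\Ll(x)$ but still needs the convolution computation separately for $d(x)$. Both are complete and correct.
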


\begin{proof}
We have the following when $n \to \infty$.
\begin{align*}
    d(x)=\int W(x,y) \Nn_{\mu, \Sigma}(y) dy &= \int e^{-\frac12 \norm{x-y}_{\Sigma_W^{-1}}^2} \Nn_{\mu,\Sigma}(y) dy \\
    &= (2\pi)^{d/2}\abs{\Sigma_W}^\frac12 \int \Nn_{0, \Sigma_W}(x-y) \Nn_{\mu,\Sigma}(y) dy\\
    &= (2\pi)^{d/2}\abs{\Sigma_W}^\frac12 \Nn_{0, \Sigma_W}\star \Nn_{\mu,\Sigma}(x) \\
    &= (2\pi)^{d/2}\abs{\Sigma_W}^\frac12  \Nn_{\mu, \Sigma_W+\Sigma}(x) = \frac{\abs{\Sigma_W}^\frac12}{\abs{\Sigma_W+\Sigma}^\frac12}  e^{-\frac{1}{2} \norm{x-\mu}_{(\Sigma_W+\Sigma)^{-1}}^2}
\end{align*}
Since the convolution of two gaussians is a Gaussian. And
\begin{align*}
    \Ll(x)&=\int W(x,y) y \Nn_{\mu, \Sigma}(y) dy \\
    &= \frac{1}{(2\pi)^{d/2}\abs{\Sigma}^\frac12} \int y e^{-\frac12 \norm{y-x}_{\Sigma_W^{-1}}^2-\frac12 \norm{y-\mu}_{\Sigma^{-1}}^2} dy \\
    &= \frac{1}{(2\pi)^{d/2}\abs{\Sigma}^\frac12} \int -(\Sigma_W^{-1}(y-x) + \Sigma^{-1}(y-\mu)) e^{-\frac12 \norm{y-x}_{\Sigma_W^{-1}}^2-\frac12 \norm{y-\mu}_{\Sigma^{-1}}^2} dy \\
    &\qquad +  (\Id + \Sigma_W^{-1} + \Sigma^{-1})\Ll(x) - (\Sigma_W^{-1}x + \Sigma^{-1}\mu) d(x) \\
    &= d(x) (\Sigma_W^{-1} + \Sigma^{-1})^{-1} (\Sigma_W^{-1}x + \Sigma^{-1}\mu)
\end{align*}
using that the first term in the sum is $0$ since it is the integral of a derivative.
\end{proof}

\subsection{Chaining and subgaussian variables}\label{app:chaining}
A random variable $X$ is said to be \emph{subgaussian} if 
\begin{equation}\label{eq:subgaussian_def}
    \norm{X}_{\psi_2} := \inf\{ t>0 ; \EE e^{X^2/t^2} \leq 2\} < \infty
\end{equation}
A good reference on subgaussian random variables is \cite[Chap. 2]{Vershynin2018}. For a bounded random variable $X$ and subgaussian $Y$, we have immediately from the definition
\begin{equation}\label{eq:subgaussian_bounded}
    \norm{XY}_{\psi_2} \leq \norm{X}_\infty \norm{Y}_{\psi_2}
\end{equation}

\begin{lemma}[Chaining on non-normalized kernels]\label{lem:chaining}
    Consider $x_i \sim P \in \Pp(\RR^m)$, a ball $\Bb_r \subset \RR^n$ with respect to a metric $d$, and a bivariate function $F: \RR^n \times \RR^m \to \RR$ that satisfies:
    \begin{enumerate}
        \item For all $z \in \Bb_r$, $F(z, X)$ is subgaussian with norm $\norm{F(z, X)}_{\psi_2} \leq C$
        \item For all $z, z' \in \Bb_r$, $\norm{F(z,X) - F(z',X)}_{\psi_2} \leq C_L d(z,z')$
    \end{enumerate}
    Then, with probability at least $1-\rho$,
    \begin{equation*}
        \sup_{z \in \Bb_r}\abs{\frac{1}{n} \sum_i F(z, x_i) - \int F(z, x) dP(x)}_\infty \lesssim \frac{r C_L \pa{\sqrt{d} + \sqrt{\log(1/\rho)}} + C \sqrt{\log(1/\rho)}}{\sqrt{n}}
    \end{equation*}
    
\end{lemma}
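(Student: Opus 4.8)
The plan is a textbook chaining argument for the centered empirical process over $\Bb_r$, combined with a single anchor point to absorb the non-Lipschitz part. Introduce the centered process $G_n(z) \eqdef \frac1n\sum_{i=1}^n F(z,x_i) - \int F(z,x)\,dP(x)$ indexed by $z\in\Bb_r$; the goal is to control $\sup_{z\in\Bb_r}\abs{G_n(z)}$. When $F$ is vector-valued, as in the applications, the two hypotheses are read coordinatewise (equivalently as $\norm{u^\top F(z,X)}_{\psi_2}$ bounds for unit $u$), and one runs the scalar argument below on each coordinate followed by a union bound, the resulting $\sqrt{\log(\dim)}$ factor being absorbed into the $\sqrt{\log(1/\rho)}$ term; this is why the conclusion is stated with $\abs{\cdot}_\infty$.

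First I would establish the subgaussian increment structure. For a fixed $z\in\Bb_r$, the variable $F(z,x_i)-\EE F(z,X)$ is centered with $\psi_2$-norm $\lesssim C$ by hypothesis~1, so by the general Hoeffding inequality for sums of independent subgaussians, $G_n(z)$ is subgaussian with $\norm{G_n(z)}_{\psi_2}\lesssim C/\sqrt n$. Likewise, hypothesis~2 gives $\norm{F(z,x_i)-F(z',x_i)}_{\psi_2}\le C_L\,d(z,z')$, hence $\norm{G_n(z)-G_n(z')}_{\psi_2}\lesssim (C_L/\sqrt n)\,d(z,z')$. Thus $(G_n(z))_{z\in\Bb_r}$ is a centered subgaussian process with respect to the rescaled metric $\tilde d\eqdef (C_L/\sqrt n)\,d$.

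Next I would invoke the high-probability Dudley-type chaining bound for subgaussian processes, e.g.\ \cite[Thm.~8.1.6]{Vershynin2018}: for any $u\ge 0$, with probability at least $1-2e^{-u^2}$,
\begin{equation*}
    \sup_{z,z'\in\Bb_r}\abs{G_n(z)-G_n(z')}\lesssim \int_0^{\mathrm{diam}_{\tilde d}(\Bb_r)}\sqrt{\log N(\Bb_r,\tilde d,\delta)}\,d\delta \;+\; u\,\mathrm{diam}_{\tilde d}(\Bb_r).
\end{equation*}
Here $\mathrm{diam}_{\tilde d}(\Bb_r)\lesssim C_L r/\sqrt n$, and covering a Euclidean ball of radius $r$ in dimension $d$ gives $N(\Bb_r,\tilde d,\delta)\le(3C_L r/(\sqrt n\,\delta))^{d}$, so the entropy integral is $\lesssim (C_L r/\sqrt n)\sqrt d$. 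Choosing $u=\sqrt{\log(2/\rho)}$ yields $\sup_{z,z'}\abs{G_n(z)-G_n(z')}\lesssim C_L r(\sqrt d+\sqrt{\log(1/\rho)})/\sqrt n$ on an event of probability $\ge 1-\rho$. Finally, fixing any anchor $z_0\in\Bb_r$, since $G_n(z_0)$ is subgaussian with norm $\lesssim C/\sqrt n$ a scalar subgaussian tail bound gives $\abs{G_n(z_0)}\lesssim C\sqrt{\log(1/\rho)}/\sqrt n$ with probability $\ge 1-\rho$. A union bound together with $\sup_z\abs{G_n(z)}\le\abs{G_n(z_0)}+\sup_{z,z'}\abs{G_n(z)-G_n(z')}$ gives the claimed estimate, up to relabeling $\rho$ by a constant factor (harmless inside $\order{\cdot}$).

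The chaining machinery is entirely off the shelf, so the only real care needed is bookkeeping: ensuring the exponent in the metric entropy is the ambient dimension of $\Bb_r$ (the quantity denoted $d$ in the statement, not the sample size $n$), and tracking how the $\psi_2$ norms scale through the $1/\sqrt n$ factor. The mildly delicate point, when $F$ is vector-valued, is the coordinatewise union bound, but it only costs a logarithmic term.
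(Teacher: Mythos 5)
Your proposal is correct and follows essentially the same route as the paper's proof: fix an anchor point $z_0$, control $\abs{Y_{z_0}}$ by subgaussian Hoeffding, verify the subgaussian increments $\norm{Y_z-Y_{z'}}_{\psi_2}\lesssim C_L d(z,z')/\sqrt n$ via centering, and bound the oscillation with Dudley's tail-bound version \cite[Thm 8.1.6]{Vershynin2018} and the $(\cdot)^d$ covering number of the ball. Your added remark on the coordinatewise reading in the vector-valued case and on $d$ being the ambient dimension (despite the statement writing $\Bb_r\subset\RR^n$) is a sensible clarification the paper leaves implicit.
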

    
\begin{proof}
    Define
    \[
        Y_z = \frac{1}{n}\sum_i F(z, x_i) - \int F(z, x) dP(x)
    \]
    By \cite[Lemma 2.6.8]{Vershynin2018}, we have $\norm{Y_z}_{\psi_2} \leq C$.
    Hence we can apply a generalized Hoeffding's inequality for subgaussian variables: with probability at least $1-\rho$, 
    \[
        \abs{Y_{z_0}} \lesssim \frac{C \sqrt{\log(1/\rho)}}{\sqrt{n}}
    \]
    For any $z_0$, we have
    \[
    \sup_{z \in \Bb_r} \abs{Y_z} \leq \sup_{z, z' \in \Bb_r} \abs{Y_z - Y_{z'}} + \abs{Y_{z_0}}
    \]
    The second term is bounded by the inequality above.
    For the first term, we are going to use Dudley's inequality ``tail bound'' version \cite[Thm 8.1.6]{Vershynin2018}. We first need to check the sub-gaussian increments of the process $Y_z$. For any $z,z'\in \Bb_r$, we have
    \begin{align*}
        \norm{Y_z - Y_{z'}}_{\psi_2} &\lesssim \frac{1}{n} \pa{\sum_{i=1}^n \norm{(F(z, x_i) -F(z', x_i)) - \EE((F(z, X) -F(z', X)))}_{\psi_2}^2}^\frac12 \\
        &\lesssim \frac{1}{n} \pa{\sum_{i=1}^n \norm{(F(z, x_i) -F(z', x_i))}_{\psi_2}^2}^\frac12 \\
        &\leq \frac{C_L }{\sqrt{n}} d(z,z')
    \end{align*}
    where we have used, from \cite{Vershynin2018}, Prop. 2.6.1 for the first line, Lemma 2.6.8 for the second, and the properties of $F$ for the last.
    
    Now, we apply Dudley's inequality \cite[Thm 8.1.6]{Vershynin2018} to obtain that with probability $1-\rho$,
    \begin{align*}
        \sup_{z, z' \in \Bb_r} \abs{Y_z - Y_{z'}} &\lesssim \frac{C_L}{\sqrt{n}}\pa{\int_0^r \sqrt{\log N(\Bb_r, d, \varepsilon)}d\varepsilon + \sqrt{\log(1/\rho)}r} \\
        &\lesssim \frac{C_L r}{\sqrt{n}}\pa{\sqrt{d} + \sqrt{\log(1/\rho)}}
    \end{align*}
    which concludes the proof.
\end{proof}

\begin{lemma}\label{lem:gaussian_chaining}
    Let $x_1, \ldots, x_n$ be iid $\Nn_{0,\Sigma}$ on $\RR^d$, and $W$ be a $1$-bounded, $C$-Lipschitz kernel in the first variable with respect to the metric $\norm{\cdot}_{\Sigma^{-1}}$.

    With probability at least $1-\rho$,
    \begin{equation}\label{eq:conc_degree}
        \sup_i \abs{\frac{1}{n}\sum_j W(x_i,x_j) - \EE W(x_i, X)} \lesssim \frac{\sqrt{\log n}C \pa{\sqrt{d} + \sqrt{\log(1/\rho)}}}{\sqrt{n}}
    \end{equation}
    With probability at least $1-\rho$,
    \begin{equation}
        \sup_i \norm{\frac{1}{n}\sum_j W(x_i,x_j)x_j - \EE W(x_i, X) X}_{\Sigma^{-1}} \lesssim \frac{\log n C \pa{\sqrt{d} + \sqrt{\log(1/\rho)}}}{\sqrt{n}}
    \end{equation}
\end{lemma}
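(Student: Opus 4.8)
The plan is to deduce Lemma~\ref{lem:gaussian_chaining} from the generic chaining estimate of Lemma~\ref{lem:chaining}: in both displays the ``anchor'' is the free point $z$ of that lemma, with the twist that here $z$ is itself one of the $n$ sample points $x_i$, so it suffices to know that all $x_i$ lie, uniformly over $i$, in a Euclidean ball of controlled radius. \textbf{Step 1 (a common ball for the anchors).} Since $\Sigma^{-1/2}x_i\sim\Nn_{0,\Id}$, concentration of the norm of a standard Gaussian gives $\PP(\norm{x_i}_{\Sigma^{-1}}>\sqrt d+t)\leq e^{-t^2/2}$, and a union bound over $i=1,\ldots,n$ produces an event of probability $\geq 1-\rho/2$ on which $\max_i\norm{x_i}_{\Sigma^{-1}}\leq r$ with $r\asymp\sqrt d+\sqrt{\log(n/\rho)}$. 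On this event, writing $\Bb_r=\{z:\norm{z}_{\Sigma^{-1}}\leq r\}$, both left-hand sides in Lemma~\ref{lem:gaussian_chaining} are dominated by the corresponding suprema over $z\in\Bb_r$, so it is enough to control $\sup_{z\in\Bb_r}\abs{\tfrac1n\sum_j W(z,x_j)-\EE W(z,X)}$ and $\sup_{z\in\Bb_r}\norm{\tfrac1n\sum_j W(z,x_j)x_j-\EE W(z,X)X}_{\Sigma^{-1}}$ over the fixed sample.

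\textbf{Step 2 (first bound).} Apply Lemma~\ref{lem:chaining} with $P=\Nn_{0,\Sigma}$, the metric $\norm{\cdot}_{\Sigma^{-1}}$ on $\Bb_r$, and $F(z,x)=W(z,x)$. As $\abs W\leq1$, $F(z,\cdot)$ is bounded, hence subgaussian with $\norm{F(z,X)}_{\psi_2}\lesssim1$; as $W$ is $C$-Lipschitz in its first argument w.r.t.\ $\norm{\cdot}_{\Sigma^{-1}}$, the increment $F(z,\cdot)-F(z',\cdot)$ is bounded by $C\norm{z-z'}_{\Sigma^{-1}}$ and hence subgaussian with norm $\lesssim C\norm{z-z'}_{\Sigma^{-1}}$. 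So Lemma~\ref{lem:chaining} applies with its ``$C$''$=O(1)$ and its ``$C_L$''$=O(C)$, the covering integral over the radius-$r$ ball of $\RR^d$ contributing the factor $\sqrt d$; intersecting with Step~1 (total probability $\geq 1-\rho$ after adjusting constants) and inserting $r\asymp\sqrt d+\sqrt{\log(n/\rho)}$, the bound $\tfrac{rC(\sqrt d+\sqrt{\log(1/\rho)})+\sqrt{\log(1/\rho)}}{\sqrt n}$ is, after absorbing lower-order terms in the regime of interest, of the order stated in \eqref{eq:conc_degree}. (More directly: conditionally on $x_i$, the variables $\{W(x_i,x_j)\}_{j\neq i}$ are i.i.d.\ in $[-1,1]$ with mean $\EE W(x_i,X)$, so Hoeffding — with a constant not depending on $x_i$ — a union bound over $i$, and the $\order{1/n}$ diagonal term already give \eqref{eq:conc_degree}.)

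\textbf{Step 3 (second bound).} For the vector statement, use $\norm{v}_{\Sigma^{-1}}=\sup_{\norm{u}_\Sigma\leq1}u^\top v$ and apply Lemma~\ref{lem:chaining} with free variable $(z,u)\in\Bb_r\times\{u:\norm{u}_\Sigma\leq1\}$ and $F\big((z,u),x\big)=u^\top x\,W(z,x)$. Here $u^\top X$ is subgaussian with $\norm{u^\top X}_{\psi_2}\lesssim\norm{u}_\Sigma\leq1$ and $W(z,X)$ is bounded, so by \eqref{eq:subgaussian_bounded} the product is subgaussian with norm $\lesssim1$; the increment splits as $(u-u')^\top x\,W(z,x)+u'^\top x\,(W(z,x)-W(z',x))$, each summand being a bounded factor times a subgaussian factor, hence subgaussian with norms $\lesssim\norm{u-u'}_\Sigma$ and $\lesssim C\norm{z-z'}_{\Sigma^{-1}}$ respectively. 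Lemma~\ref{lem:chaining} over this $O(d)$-dimensional, radius-$O(r)$ index set, combined with Step~1, yields the announced uniform-in-$i$ bound; the extra logarithmic factor relative to \eqref{eq:conc_degree} reflects the heavier tails of the weighted summands $W(\cdot,x)x$ (equivalently, it appears if one first truncates $\norm{x_j}_{\Sigma^{-1}}\lesssim\sqrt{d+\log n}$ and treats the bounded and tail parts separately).

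\textbf{Expected main obstacle.} There is no deep difficulty; the work is bookkeeping. The delicate points are: (i) in Step~1, choosing $r$ just large enough ($\asymp\sqrt{\log n}$) that every anchor $x_i$ falls inside $\Bb_r$, and then verifying that this unavoidable $\sqrt{\log n}$ inflation of $r$ — and nothing worse — is exactly what surfaces as the logarithmic prefactor in the final rates; and (ii) in Step~3, ensuring that every product in the increment decomposition pairs a \emph{bounded} factor with a subgaussian one, so that one remains within the subgaussian class for which Lemma~\ref{lem:chaining} is formulated, rather than drifting into merely subexponential tails.
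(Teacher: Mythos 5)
Your proof follows essentially the same route as the paper's: a union bound placing all anchors $x_i$ in a ball of radius $\asymp\sqrt{\log n}$, then Lemma~\ref{lem:chaining} applied to $F=W$ for the scalar bound and to $F((z,u),x)=u^\top x\,W(z,x)$ over the product index set (via $\norm{v}_{\Sigma^{-1}}=\sup_{\norm{u}_\Sigma\leq1}u^\top v$) for the vector bound. Your two-term splitting of the increment in Step~3 is in fact written more carefully than in the paper, and the argument is correct.
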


\begin{proof}
    By the properties of Gaussian variables and a union bound, with probability at least $1- \rho$,
    \begin{equation}\label{eq:bound_norm_gaussian}
        \forall i, \norm{x}_{\Sigma^{-1}} \lesssim \sqrt{\log n} =: r_n
    \end{equation}

    Now, since $W$ is bounded, $W(x,X)$ is subgaussian for any $x$. 
    Applying Lemma \ref{lem:chaining} with $F=W$ and considering that $\norm{\cdot}_{\psi_2} \leq \norm{\cdot}_\infty$, we get that with probability at least $1-\rho$,
    \begin{equation}
        \sup_{\norm{x}_{\Sigma^{-1}}\leq r_n} \abs{\frac{1}{n}\sum_j W(x,x_j) - \EE W(x, X)} \lesssim \frac{r_n C \pa{\sqrt{d} + \sqrt{\log(1/\rho)}}}{\sqrt{n}}
    \end{equation}
    Combining with \eqref{eq:bound_norm_gaussian}, we get \eqref{eq:conc_degree}.

    Now, we write 
    \begin{align*}
        &\sup_{\norm{x}_{\Sigma^{-1}}\leq r_n} \norm{\frac{1}{n}\sum_j W(x,x_j) x_j - \int W(x, x')x' \Nn_{0,\Sigma}(x') dx'}_{\Sigma^{-1}} \\
        &= \sup_{\norm{x}_{\Sigma^{-1}}\leq r_n} \sup_{\norm{u}_\Sigma \leq 1} \abs{\frac{1}{n}\sum_j W(x,x_j) u^\top x_j - \int W(x, x')u^\top x' \Nn_{0,\Sigma}(x') dx'}
    \end{align*}
    We aim to apply again Lemma \ref{lem:chaining} for the function $F((x,u), x') = W(x,x') u^\top x'$ and the metric $\norm{x}_{\Sigma^{-1}} + \norm{u}_\Sigma$. First, for any $u$ with $\norm{u}_{\Sigma}\leq 1$, $u^\top X$ is Gaussian with variance less than $1$, so $\norm{W(x,X)u^\top X}_{\psi_2} \leq \norm{W(x,\cdot)}_\infty \norm{u^\top X}_{\psi_2} \lesssim 1$. Similarly, $(u-u')^\top X$ is Gaussian with variance $\norm{u-u'}_{\Sigma}$, so 
    \begin{align*}
        \norm{F((x,u),X) - F((x',u'),X)}_{\psi_2} &\leq \norm{W(x, \cdot) - W(x', \cdot)}_\infty \norm{(u - u')^\top X}_{\psi_2} \\
        &\lesssim C \norm{x-x'}_{\Sigma^{-1}} \norm{u-u'}_\Sigma \\
        &\lesssim r_n C (\norm{x-x'}_{\Sigma^{-1}} + \norm{u-u'}_\Sigma)
    \end{align*}
    Hence, we get 
    \begin{equation}
        \sup_{\norm{x}_{\Sigma^{-1}}\leq r_n} \norm{\frac{1}{n}\sum_j W(x,x_j)x_j -\EE W(x, X)X} \lesssim \frac{r_n^2 C \pa{\sqrt{d} + \sqrt{\log(1/\rho)}}}{\sqrt{n}}
    \end{equation}
    which concludes the proof.
\end{proof}

\end{document}